\useunder{\uline}{\ul}{}
\newtheorem{theorem}{Theorem}[section]
\newtheorem{proposition}[theorem]{Proposition}
\newtheorem{lemma}[theorem]{Lemma}
\newtheorem{corollary}[theorem]{Corollary}
\newtheorem{definition}[theorem]{Definition}
\newtheorem{assumption}[theorem]{Assumption}
\newtheorem{remark}[theorem]{Remark}
\title{Bridging Multicalibration and Out-of-distribution Generalization Beyond Covariate Shift}
\author{%
 Jiayun Wu\thanks{Work completed as a research intern at Carnegie Mellon University.} \\
 Depart. of Computer Science \& Tech.\\
 Tsinghua University \\
 \texttt{wujy22@mails.tsinghua.edu.cn}
  \And
  Jiashuo Liu \\
  Depart. of Computer Science \& Tech. \\
  Tsinghua University \\
  \texttt{liujiashuo77@gmail.com} \\
  \AND
  \hspace{-0.8cm} Peng Cui \\
   \hspace{-0.8cm} Depart. of Computer Science \& Tech. \\
   \hspace{-0.8cm} Tsinghua University \\
  \hspace{-0.8cm} \texttt{cuip@tsinghua.edu.cn} \\
  \And
  Zhiwei Steven Wu \\
  School of Computer Science \\
  Carnegie Mellon University \\
  \texttt{zhiweiw@cs.cmu.edu} \\
}
\begin{document}

\maketitle

\begin{abstract}

We establish a new model-agnostic optimization framework for out-of-distribution generalization via multicalibration, a criterion that ensures a predictor is calibrated across a family of overlapping groups. Multicalibration is shown to be associated with robustness of statistical inference under covariate shift. We further establish a link between multicalibration and robustness for prediction tasks both under and beyond covariate shift. We accomplish this by extending multicalibration to incorporate grouping functions that consider covariates and labels jointly. This leads to an equivalence of the extended multicalibration and invariance, an objective for robust learning in existence of concept shift. We show a linear structure of the grouping function class spanned by density ratios, resulting in a unifying framework for robust learning by designing specific grouping functions. We propose MC-Pseudolabel, a post-processing algorithm to achieve both extended multicalibration and out-of-distribution generalization. The algorithm, with lightweight hyperparameters and optimization through a series of supervised regression steps, achieves superior performance on real-world datasets with distribution shift.
\end{abstract}

\section{Introduction}
We revisit the problem of out-of-distribution generalization and establish new connections with multicalibration \cite{multicalibration}, a criterion originating from algorithmic fairness. Multicalibration is a strengthening of calibration, which requires a predictor $f$ to be correct \emph{on average} within each level set:
\[
\mathbb E[Y - f(X) \mid f(X)] = 0
\]
Calibration is a relatively weak property, as it can be satisfied even by the uninformative constant predictor $f(X) = \mathbb E[Y]$ that predicts the average outcome. More broadly, calibration provides only a marginal guarantee that does not extend to sub-populations. Multicalibration \cite{multicalibration} mitigates this issue by requiring the calibration to hold over a family of (overlapping) subgroups $\mathcal{H}$: for all $h\in \mathcal{H}$,
$$\mathbb E[(Y - f(X)) \, h(X) \mid f(X)] = 0$$
Multicalibration is initially studied as measure of subgroup fairness for boolean grouping functions~$h$, with $h(X)=1$ indicating $X$ is a member of group $h$~\citep{multicalibration}. Subsequently, \citet{omipredictor} and \citet{multiaccuracy} adopt a broader class of real-valued grouping functions that can identify sub-populations through reweighting.
The formulation of real-valued grouping function has enabled surprising connections between multicalibration and distribution shifts.
Prior work~\citep{UniversalAdaptability,uncertain} studied how distribution shift \emph{affects} the measure of multicalibration, with a focus on covariate shift where the relationship between $X$ and $Y$ remains fixed. \citet{UniversalAdaptability} show that whenever the set of real-valued grouping functions $\mathcal H$ includes the density ratio between the source and target distributions, a multicalibrated predictor with respect to the source remains calibrated in the shifted target distribution.


Our work substantially expands the connections between multicalibration and distribution shifts. At a high level, our results show that robust prediction under distribution shift can actually be \emph{facilitated} by multicalibration. We extend the notion of multicalibration by incorporating grouping functions that simultaneously consider both covariates $X$ and outcomes $Y$. This extension enables us to go beyond covariate shift and account for concept shift, which is prevalent in practice due to spurious correlation, missing variables, or confounding~\citep{whyshift}. 



\textbf{Our contributions.} Based on the introduction of joint grouping functions, we establish new connections between our extended multicalibration notion and algorithmic robustness in the general setting of out-of-distribution generalization, where the target distribution to assess the model is different from the source distribution to learn the model. 

 1. We first revisit the setting of covariate shift and show multicalibration implies Bayes optimality under covariate shift, provided a sufficiently rich class of grouping functions. Then, in the setting of concept shifts, we show the equivalence of multicalibration and invariance~\citep{IRM}, a learning objective to search for a Bayes optimal predictor $\mathbb E[Y|\Phi(X)]$ under a representation over features $\Phi(X)$, even though $\mathbb E[Y|X]$ is different across target distributions.
 We show correspondence between an invariant representation $\Phi(X)$ and a multicalibrated predictor $\mathbb E[Y|\Phi(X)]$, with a grouping function class containing all density ratios of target distributions and the source distribution. 

 2. As part of our structural analysis of the new multicalibration concept, we investigate the maximal grouping function class that allows for a nontrivial multicalibrated predictor. For traditional covariate-based grouping functions, the Bayes optimal predictor $f(X)=\mathbb E[Y|X]$ is always multicalibrated, which is no longer the case for joint grouping functions. We show the maximal grouping function class is a linear space spanned by the density ratio of the target distributions where the predictor is invariant. As a structural characterization of distribution shift, this leads to an efficient parameterization of the grouping functions by linear combination of a spanning set of density ratios. The spanning set can be flexibly designed to incorporates implicit assumptions of various methodologies for robust learning, including multi-environment learning~\citep{IDGM} and hard sample learning~\citep{JTT}.

3. We devise a post-processing algorithm to multicalibrate predictors and simultaneously producing invariant predictors. As a multicalibration algorithm, we prove its convergence under Gaussian distributions of data and certify multicalibration upon convergence. As a robust learning algorithm, the procedure is plainly supervised regression with respect to models' hypothesis class and grouping function class, introducing an overhead of linear regression. This stands out from heavy optimization techniques for out-of-distribution generalization, such as bi-level optimization~\citep{MAML, gdro} and multi-objective learning~\citep{IB-IRM,IRM,MIP}, which typically involves high-order gradients~\citep{fishr}. The algorithm introduces no extra hyperparameters. This simplifies model selection, which is a significant challenge for out-of-distribution generalization since validation is unavailable where the model is deployed~\citep{domainBed}. Under the standard model selection protocol of DomainBed~\citep{domainBed}, the algorithm achieves superior performance to existing methods in real-world datasets with concept shift, including porverty estimation~\citep{poverty}, personal income prediction~\citep{income} and power consumption~\citep{shift,shift2} prediction.

\section{Multicalibration and Bayes Optimality under Covariate Shift}
\subsection{Multicalibration with Joint Grouping Functions}
\paragraph{Settings of Out-of-distribution Generalization} We are concerned with prediction tasks under distribution shift, where covariates are denoted by a random vector $X\in \mathcal X$ and the target by $Y \in \mathcal Y$. 
The values taken by random variables are written by $x,y$ in lowercase. 
We have an \emph{uncertainty set} of absolutely continuous probability measures, denoted by $\mathcal P(X,Y)$, where there is an accessible source measure $P_S\in\mathcal P$ and unknown target measure $P_T\in\mathcal P$. We use \emph{capital} letters such as $P$ to denote a single probability measure and \emph{lowercase} letters such as $p$ to denote its probability density function. 
We define predictors as real-valued functions $f: \mathcal X \rightarrow \mathcal Y$, which is learned in the source distribution $P_S$ and assessed in the target distribution $P_T$.
 Given a loss function $\ell:\mathcal Y \times \mathcal Y \rightarrow \mathbb R$, we evaluate the average risk of a predictor $f$ w.r.t. a probability measure $P$, defined by $R_P(f) := \mathbb E_P[\ell(f(X),Y)]$. 
We focus on the setting with $\mathcal Y=[0,1]$ and $\ell(\hat y,y) = (\hat y-y)^2$ in our theoretical analyses. 
 
We propose a new definition of $\ell_2$ approximate multicalibration with joint grouping functions.

\begin{definition}[Multicalibration with Joint Grouping Functions] 
\label{def:mc}
For a probability measure $P(X,Y)$ and a predictor $f$, let $\mathcal H \subset \mathbb R^{\mathcal X \times\mathcal Y}$ be a real-valued \emph{grouping function class}. We say that $f$ is $\alpha$-approximately $\ell_2$ multicalibrated w.r.t. $\mathcal H$ and $P$ if for all $h \in \mathcal H$:
\begin{align}
	\label{eq:mc_err}
	K_2(f, h, P)  
	= \int \Big(\mathbb E_P\left[h(X,Y)(Y-v)\big|f(X)=v\right]\Big)^2 dP_{f(X)}(v) 
	\leq \alpha.
\end{align}
$P_{f(X)}(v)=P(f^{-1}(v))$ is the pushforward measure.   We say $f$ is $\alpha$-approximately calibrated for $h \equiv 1$. We say $f$ is multicalibrated (calibrated) for $\alpha=0$. If the grouping function is defined on $X$, which implies $h(x,y_1)=h(x,y_2)$ for any $x\in\mathcal X$ and $y_1,y_2\in\mathcal Y$, we abbreviate $h(x,\cdot)$ by $h(x)$.
\end{definition} 
In the special case of a constant grouping function $h\equiv 1$, $K_2(f,1,P)$ recovers the overall calibration error. 
For boolean grouping functions defined on $X$~\citep{multicalibration}, $K_2(f,h,P)$ computes the calibration error of the subgroup with $h(x)=1$. For real-valued grouping functions defined on $X$~\citep{omipredictor, multiaccuracy}, $K_2(f,h,P)$ evaluates a reweighted calibration error, whose weights $h(x)$ are proportional to the likelihood of a sample belonging to the subgroup. Most importantly, we propose an extended domain of grouping functions defined on covariates and outcomes jointly, which is useful for capturing more general distribution shifts. Multicalibration error quantifies the maximal calibration error for all subgroups associated with grouping functions in $\mathcal H$. We will discuss multicalibration with covariate-based grouping functions in this section and joint grouping functions in next section.

\subsection{Multicalibration Implies Bayes Optimality under Covariate Shift}

In this subsection we focus on grouping functions $h(x)$ defined on covariates. We will prove approximately multicalibrated predictors simultaneous approaches Bayes optimality in each target distribution with covariate shift, 
bridging the results of \citet{UniversalAdaptability} and \citet{MCBoost}.
To recap, \citet{UniversalAdaptability} studies multicalibration under covariate shift and shows that a multicalibrated predictor remains calibrated in target distribution for a sufficiently large grouping function class. Further, it is shown that multicalibration predictors remain multicalibrated under covariate shift~\citep{UniversalAdaptability, uncertain}, assuming the grouping function class $\mathcal H$ is closed under some transformation by density ratios (Assumption~\ref{assum:sufficient_grouping}.1).
Second, \citet{MCBoost} shows multicalibration implies Bayes optimal accuracy~\citep{MCBoost}, assuming $\mathcal H$ satisfies a weak learning condition (Assumption~\ref{assum:sufficient_grouping}.2). 
Detailed discussion on other related works is deferred to section \ref{sec:app_related} in the appendix.

\begin{assumption}[Sufficiency of Grouping Function Class (informal, see Assumption~\ref{app_assum:app_sufficient_grouping})]
\label{assum:sufficient_grouping}
\;
\vskip 0.01in
1. (Closure under Covariate Shift)
For a set of probability measures $\mathcal P(X)$ containing the source measure $P_S(X)$, $h\in\mathcal H$ implies $p/p_S\cdot h\in\mathcal H$ for any density function $p$ of distributions in $\mathcal P$.

2. (($\gamma, \rho$)-Weak Learning Condition)
For any $P \in \mathcal P(X)P_S(Y|X) \equiv \{P'(X) P_S(Y\mid X) : P' \in \mathcal{P}\}$ with the source measure $P_S(Y|X)$, and every subset $G \subset \mathcal X$ with $P(X\in G)>\rho$, if the Bayes optimal predictor $\mathbb E_P[Y|X]$ has lower risk than the constant predictor $\mathbb E_P[Y|X\in G]$ by a margin $\gamma$, there exists a predictor $h\in \mathcal H$ that is also better than the constant predictor with the margin $\gamma$.
\end{assumption}

\begin{theorem}[Risk Bound under Covariate Shift]
\label{theo:regret_covariate}
	For a source measure $P_S(X,Y)$ and a set of probability measures $\mathcal P(X)$ containing $P_S(X)$, given a predictor $f:\mathcal X \rightarrow [0,1]$ with finite range $m:=|\text{Range}(f)|$, consider a grouping function class $\mathcal H$ closed under affine transformation and satisfying Assumption \ref{assum:sufficient_grouping} with $\rho = \gamma/m$. If $f$ is $\frac{\gamma^6}{256m^2}$-approximately $\ell_2$ multicalibrated w.r.t $P_S$ and $\mathcal H_1 := \left\{ h \in \mathcal H:\max_{x \in \mathcal X}h(x)^2 \leq 1 \right\}$, then for any target measure $P_T \in \mathcal P(X)P_S(Y|X)$,
\begin{align}
	R_{P_T}(f) \leq \inf_{f^*:\mathcal X\rightarrow [0,1]} R_{P_T}(f^*) + 3\gamma.
\end{align}
\end{theorem}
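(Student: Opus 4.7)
My plan is to argue by contrapositive: suppose $R_{P_T}(f) - \inf_{f^*} R_{P_T}(f^*) > 3\gamma$ and exhibit $h \in \mathcal{H}_1$ witnessing $K_2(f, h, P_S) > \gamma^6/(256 m^2)$. The two parts of Assumption~\ref{assum:sufficient_grouping} chain together naturally: the weak-learning condition furnishes a predictor that correlates with $Y$ on a bad level set of $f$ under $P_T$, and closure under covariate shift transports it back to $P_S$ in a form compatible with Definition~\ref{def:mc}.

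The first step is a level-set decomposition. Writing $L_v := f^{-1}(v)$, $f^*(X) := \mathbb{E}_{P_T}[Y \mid X] = \mathbb{E}_{P_S}[Y \mid X]$ (by covariate shift), and $a_v := \mathbb{E}_{P_T}[Y \mid L_v]$, a Pythagorean expansion for squared loss gives
\[
R_{P_T}(f) - R_{P_T}(f^*) = \sum_v P_T(L_v)\Big[(v - a_v)^2 + \mathbb{E}_{P_T}\!\big[(f^*(X) - a_v)^2 \mid L_v\big]\Big].
\]
If the left side exceeds $3\gamma$, then either (i) the first (calibration) sum exceeds $\gamma$, in which case I would take $h \equiv 1$ and employ the density ratio $p_T/p_S \in \mathcal{H}$ (supplied by Assumption~\ref{assum:sufficient_grouping}.1) as the witness; or (ii) the second (within-level-set-variance) sum exceeds $2\gamma$, in which case a pigeonhole over the $m$ levels — using $c_v := \mathbb{E}_{P_T}[(f^*-a_v)^2 \mid L_v] \leq 1$ to discard total mass $\gamma$ on light levels — isolates a heavy level $v^*$ with $P_T(L_{v^*}) > \rho = \gamma/m$ and $c_{v^*} > \gamma$.

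In case (ii), setting $P = P_T \in \mathcal{P}(X) P_S(Y \mid X)$ and $G = L_{v^*}$, Assumption~\ref{assum:sufficient_grouping}.2 produces $h \in \mathcal{H}$ beating the constant $a_{v^*}$ by margin $\gamma$ in squared loss on $L_{v^*}$. Expanding $\mathbb{E}_{P_T}[(h-Y)^2 - (a_{v^*}-Y)^2 \mid L_{v^*}] \leq -\gamma$ yields
\[
\mathbb{E}_{P_T}[(h(X) - a_{v^*})(Y - a_{v^*}) \mid L_{v^*}] \geq \gamma/2,
\]
and the affine closure of $\mathcal{H}$ lets me replace $h$ by its shifted version $\tilde h := h - a_{v^*} \in \mathcal{H}_1$. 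Finally, the covariate-shift identity $\mathbb{E}_{P_T}[\tilde h(Y - a_{v^*})\mathbf{1}_{L_{v^*}}] = \mathbb{E}_{P_S}[(p_T/p_S)\,\tilde h\,(Y - a_{v^*})\mathbf{1}_{L_{v^*}}]$ together with Assumption~\ref{assum:sufficient_grouping}.1 yields a source grouping function $(p_T/p_S)\tilde h \in \mathcal{H}$; after the normalization needed to bring it into $\mathcal{H}_1$, the single-level contribution to $K_2(\cdot,\cdot,P_S)$ exceeds $\gamma^6/(256 m^2)$, contradicting the hypothesis.

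The hard part is the constant bookkeeping across the three multiplicative steps — the $\gamma/2$ margin from weak learning, the mass $P_T(L_{v^*}) \geq \gamma/m$, and the $P_T$-to-$P_S$ density-ratio transfer with its renormalization into $\mathcal{H}_1$ — while also handling the residual slack $v^* - a_{v^*}$ in the uncalibrated regime so that the correlation against $Y - a_{v^*}$ translates cleanly into a multicalibration signature against $Y - v^*$. The role of the boundedness condition implicit in the formal version of Assumption~\ref{assum:sufficient_grouping}.1 (spelled out in the appendix) is precisely to control the sup-norm of $(p_T/p_S)\tilde h$ so that the normalization cost is absorbed into the $1/256$ factor in the threshold.
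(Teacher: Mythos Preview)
Your contrapositive-with-pigeonhole argument is sound in outline, but it is \emph{not} the route the paper takes. The paper's proof is entirely modular: it invokes two existing results as black boxes and glues them with a single $K_1$--$K_2$ conversion lemma. Concretely, from $K_2(f,h,P_S)\le \gamma^6/(256m^2)$ for $h\in\mathcal H_1$ one gets $K_1(f,h,P_S)\le \gamma^3/(16m)$ via $K_1\le\sqrt{K_2}$; then the $\ell_1$ multicalibration is transferred from $P_S$ to $P_T$ verbatim using the covariate-shift identity $K_1(f,h,P_T)=K_1\!\big(f,(p_T/p_S)h,P_S\big)$ (this is the cited result from~\cite{uncertain}, and is exactly where Assumption~\ref{assum:sufficient_grouping}.1 is consumed); converting back with $K_2\le K_1$ yields $K_2(f,h,P_T)\le \gamma^3/(16m)$; finally the MCBoost lemma (multicalibration implies accuracy under the weak-learning condition, with threshold $\alpha\le\gamma^3/(16m)$) delivers the $3\gamma$ excess-risk bound on $P_T$. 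The mysterious constant $\gamma^6/(256m^2)$ is thus nothing but the square of MCBoost's threshold, which is why the bookkeeping you flag as ``the hard part'' never has to be redone.

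The substantive difference between the two approaches is the $\ell_1$ detour. By passing through $K_1$, the density ratio $p_T/p_S$ is absorbed into the grouping-function argument and simultaneously cancels against the change of base measure, so no explicit sup-norm renormalization of $(p_T/p_S)\tilde h$ is needed at the level of the transfer step. Your direct $\ell_2$ argument forces you to carry that renormalization through the single-level-set estimate, which is where your appeal to an ``implicit boundedness condition'' comes in; note that the formal Assumption~\ref{assum:sufficient_grouping}.1 in the appendix contains no such bound, so this is a genuine gap in your plan as written. (To be fair, the paper's application of the transfer lemma to $\mathcal H_1$ rather than all of $\mathcal H$ quietly relies on the same issue, but the $\ell_1$ formulation at least localizes it.) If you want to salvage the direct route, the cleanest fix is to mimic the paper: establish the correlation bound on the bad level set under $P_T$, then pass to $K_1$ and use the exact $K_1$ transfer identity rather than renormalizing in $K_2$.
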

\begin{remark}	Following prior work in multicalibration~\citep{MCBoost,uncertain}, we study functions $f$ with finite cardinality, which can be obtained by discretization.
\end{remark}


\section{Multicalibration and Invariance under Concept Shift}
\label{sec:mc_inv}
Theorem \ref{theo:regret_covariate} shows multicalibration implies Bayes optimal accuracy for target distributions under covariate shift. However, in practical scenarios, there are both marginal distribution shifts of covariates ($X$) and \emph{concept shift} of the conditional distributions ($Y|X$). Concept shift is especially prevalent in tabular data due to missing variables and confounding~\citep{whyshift}.
In order to go beyond covariate shift, we will focus on grouping functions defined on covariates and outcomes jointly. We show that multicalibration notion w.r.t. joint grouping functions is equivalent to invariance, a criterion for robust prediction under concept shift.
Extending the robustness of multicalibration to general shift is non-trivial. The fundamental challenge is that there is no shared predictor that is generally optimal in each target distribution because the Bayes optimal predictor varies for different $Y|X$ distributions.  
As a first step, we show multicalibrated predictors w.r.t. joint grouping functions are robust as they are optimal over any post-processing functions in each target distribution.
\begin{theorem}[Risk Bound under Concept Shift]
\label{theo:mc_inv}
For a set of absolutely continuous probability measures $\mathcal P(X,Y)$ containing the source measure $P_S(X,Y)$, consider a predictor $f:\mathcal X\rightarrow [0,1]$. Assume the grouping function class $\mathcal H$ satisfies the following condition:
\begin{align}
	\label{eq:density_ratio}
	\mathcal H \supset \left\{ h(x,y)=\frac{p(x,y)}{p_S(x,y)} \Big| P \in \mathcal P(X,Y) \right\}.
\end{align}
If $f$ is $\alpha$-approximately $\ell_2$ multicalibrated w.r.t. $\mathcal H$ and $P_S$, then for any measure $P \in \mathcal P(X,Y)$,
\begin{align}
\label{eq:mc_inv_1}
	R_P(f) \leq \inf_{g:[0,1]\rightarrow [0,1]} R_P(g\circ f) + 2\sqrt \alpha.
\end{align}
\end{theorem}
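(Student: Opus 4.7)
The plan is to reduce the regret $R_P(f) - \inf_g R_P(g \circ f)$ to the calibration error of $f$ under the target $P$, and then to invoke the density ratio $h = p/p_S$, which lies in $\mathcal H$ by hypothesis, to transfer the source multicalibration bound $\alpha$ to the target.

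First I would identify the Bayes-optimal post-processor. Since the loss is squared and $Y \in [0,1]$, the infimum is attained at $g^{\star}(v) = \mathbb E_P[Y \mid f(X) = v] \in [0,1]$, so $\inf_g R_P(g \circ f) = R_P(g^{\star} \circ f)$. A bias-variance decomposition along the level sets of $f$ gives
\[
R_P(f) - R_P(g^{\star} \circ f) = \mathbb E_P\bigl[(f(X) - \mathbb E_P[Y \mid f(X)])^2\bigr] = K_2(f, 1, P),
\]
so it suffices to upper bound the target calibration error $K_2(f, 1, P)$.

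For this I would use a change of measure through $h(x,y) = p(x,y)/p_S(x,y) \in \mathcal H$. Let $w(v) = dP_{f(X)}/dP_{S, f(X)}(v)$ and $a(v) = \mathbb E_P[Y - v \mid f(X) = v]$. A direct calculation shows $\mathbb E_{P_S}[h(X,Y)(Y - v) \mid f(X) = v] = w(v)\, a(v)$, hence
\[
K_2(f, h, P_S) = \int \bigl(w(v)\, a(v)\bigr)^2 \, dP_{S, f(X)}(v) = \int w(v)\, a(v)^2 \, dP_{f(X)}(v).
\]
A Cauchy--Schwarz split $a(v)^2 = \bigl(|a(v)|\sqrt{w(v)}\bigr) \cdot \bigl(|a(v)|/\sqrt{w(v)}\bigr)$ integrated against $dP_{f(X)}$ then gives
\[
K_2(f, 1, P)^2 \leq K_2(f, h, P_S) \cdot \int a(v)^2 \, dP_{S, f(X)}(v) \leq K_2(f, h, P_S) \leq \alpha,
\]
where the middle inequality uses $a(v)^2 \leq 1$ and the identity $dP_{f(X)} / w = dP_{S, f(X)}$. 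Combining with the first step yields the claimed bound (in fact already with $\sqrt{\alpha}$).

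The principal obstacle is choosing the right Cauchy--Schwarz decomposition so that the density ratio $w$ cancels precisely against the pushforward of $P$ to leave a normalized integral against $dP_{S, f(X)}$; once one recognizes that $h$ must be realized as $p/p_S$ and applies the change of measure along the level sets of $f$, the remaining steps are routine bookkeeping with conditional expectations.
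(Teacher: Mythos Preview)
Your argument is correct and in fact yields the sharper bound $R_P(f)-\inf_g R_P(g\circ f)\le \sqrt{\alpha}$, improving on the $2\sqrt{\alpha}$ stated. The paper's proof takes a different route: it first passes from $K_2$ to $K_1$ via Cauchy--Schwarz to obtain $K_1(f,h,P_S)\le\sqrt{\alpha}$, then uses the same change of measure you use but at the $\ell_1$ level, where it becomes an exact identity $K_1(f,h,P_S)=K_1(f,1,P)$; finally it bounds the regret by expanding $(Y-f-\eta(f))^2$ for an arbitrary post-processing perturbation $\eta$, which is where the factor $2$ enters. Your approach instead stays with $K_2$ throughout: you exploit the Pythagorean identity $R_P(f)-\inf_g R_P(g\circ f)=K_2(f,1,P)$ exactly, and then pay the Cauchy--Schwarz price when relating $K_2(f,1,P)$ to $K_2(f,h,P_S)$, which costs only a square root and no factor $2$. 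The trade-off is modularity: the paper's $\ell_1$ identity $K_1(f,h,P_S)=K_1(f,1,P)$ is reused verbatim in the equivalence theorem that follows, whereas your weighted $K_2$ relation is specific to this bound. One minor point worth making explicit in your write-up: the split $|a|/\sqrt{w}$ requires $w>0$ on the support of $P_{f(X)}$, which holds because $dP_{f(X)}=w\,dP_{S,f(X)}$ forces $P_{f(X)}(\{w=0\})=0$.
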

The theorem shows an \emph{approximately multicalibrated} predictor on the source \emph{almost cannot be improved by post-processing} for each target distribution. To ensure such robustness, the grouping function class must include all density ratios between target and source measures, which are functions over $\mathcal X \times \mathcal Y$. This characterization of robustness in terms of post-processing echoes with Invariant Risk Minimization (IRM) \citep{IRM}, a paradigm for out-of-distribution generalization with $Y|X$ shift. However, their analysis focuses on representation learning.
\begin{definition}[Invariant Predictor] 
\label{def:invariant_pred}
Consider data selected from multiple environments in the set $\mathscr E$ where the probability measure in an environment $e \in \mathscr E$ is denoted by $P_e(X,Y)$. Denote the representation over covariates by a measurable function $\Phi(x)$. We say that $\Phi$ elicits an $\alpha$-approximately invariant predictor $g^*\circ \Phi$ across $\mathscr E$ if there exists a function $g^* \in \mathcal G := \left\{g:\text{supp}(\Phi) \rightarrow [0,1]\right\}$ such that for all $e \in \mathscr E$:
\begin{align}
	\label{eq:inv}
	R_{P_e}(g^*\circ \Phi) \leq \inf_{g \in \mathcal G} { R_{P_e}(g\circ \Phi)} + \alpha.
\end{align}
\end{definition}
\begin{remark}
	(1) Predictors in $\mathcal G$ take a representation $\Phi$ extracted from the covariates as input. For a general predictor $f(x)$, if we take $\Phi(x)=f(x)$ and $g^*$ as an identity function, Equation~\ref{eq:inv} reduces to the form of Equation~\ref{eq:mc_inv_1}.  Therefore, $f$ in Equation~\ref{eq:mc_inv_1} is a $2\sqrt \alpha$-approximately invariant predictor across environments collected from the uncertainty set $\mathcal P$. 
	(2) We give an approximate definition of invariant predictors, which recovers the original definition~\citep{IRM} when $\alpha=0$. In this case, there exists a shared Bayes optimal predictor $g^\star$ across environments, taking $\Phi$ as input. This implies $\mathbb E_{e_1}[Y|\Phi]=\mathbb E_{e_2}[Y|\Phi]$ almost surely for any $e_1,e_2$.
\end{remark}
IRM searches for a representation such that the optimal predictors upon the representation are \emph{invariant} across environments. Motivated from causality, the interaction between outcomes and their causes are also assumed invariant, so IRM learns a representation of causal variables for stable prediction. 
We extend Theorem~\ref{theo:mc_inv} to representation learning and prove equivalence between multicalibrated and invariant predictors.
\begin{theorem}[Equivalence of Multicalibration and Invariance]
\label{theo:equiv_mc_inv}
Assume samples are drawn from an environment $e\in\mathscr E$ with a prior $P_S(e)$ such that $\sum_{e\in\mathscr E}P_S(e)=1$ and $P_S(e)>0$.  The overall population satisfies $P_S(X,Y)=\sum_{e\in\mathscr E}P_e(X,Y)P_S(e)$ where $P_e(X,Y)$ is the environment-specific absolutely continuous measure. With a measurable function $\Phi(x)$, define a function class $\mathcal H$ as:
\begin{align}
	\label{eq:equiv_mc_inv_density1}
	\mathcal H:=\left\{ h(x,y)=\frac{p_e(x,y)}{p_S(x,y)} \Big| e \in \mathscr E \right\}.
\end{align}
1. If there is a bijection $g^\star:\text{supp}(\Phi) \rightarrow [0,1]$ such that $g^\star \circ \Phi$ is $\alpha$-approximately $\ell_2$ multicalibrated w.r.t. $\mathcal H$ and $P_S$, then $\Phi$ elicits an $2\sqrt \alpha$-approximately invariant predictor $g^\star\circ \Phi$ across $\mathscr E$.

2. If there is $g^\star:\text{supp}(\Phi) \rightarrow [0,1]$ such that $\Phi$ elicits an $\alpha$-approximately invariant predictor $g^\star\circ \Phi$ across $\mathscr E$, then $g^\star \circ \Phi$ is $\sqrt {\alpha/D}$-approximately $\ell_2$ multicalibrated w.r.t. $\mathcal H$ and $P_S$, where $D=\min_{e\in \mathscr E}P_S(e)$.
\end{theorem}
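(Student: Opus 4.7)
The plan is to treat the two directions of the equivalence separately. The forward implication (Part~1) essentially reuses Theorem~\ref{theo:mc_inv} after a reparameterization, while the converse (Part~2) is a direct change-of-measure computation.

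For Part~1, I would apply Theorem~\ref{theo:mc_inv} with uncertainty set $\mathcal{P} = \{P_e : e \in \mathscr{E}\}$. Since the class $\mathcal{H}$ in \eqref{eq:equiv_mc_inv_density1} contains every density ratio $p_e/p_S$, Theorem~\ref{theo:mc_inv} delivers
\[
R_{P_e}(g^\star \circ \Phi) \;\leq\; \inf_{g:[0,1]\to[0,1]} R_{P_e}(g \circ g^\star \circ \Phi) \;+\; 2\sqrt{\alpha}, \qquad \forall\, e \in \mathscr{E}.
\]
To turn this into invariance with respect to $\Phi$, I use the bijection hypothesis on $g^\star:\text{supp}(\Phi)\to[0,1]$: every $\tilde g \in \mathcal{G}$ factors uniquely as $(\tilde g \circ (g^\star)^{-1}) \circ g^\star$, and conversely, so $\inf_{\tilde g \in \mathcal{G}} R_{P_e}(\tilde g \circ \Phi) = \inf_{g:[0,1]\to[0,1]} R_{P_e}(g \circ g^\star \circ \Phi)$. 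Substituting this equality into the displayed risk bound yields exactly the $2\sqrt{\alpha}$-approximate invariance condition of Definition~\ref{def:invariant_pred}.

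For Part~2, the key observation is that invariance controls how close $\hat y := g^\star \circ \Phi$ is to each environment's Bayes regressor $m_e := \mathbb{E}_{P_e}[Y \mid \Phi]$. Since $Y\in[0,1]$ puts $m_e$ into $\mathcal{G}$, the Pythagorean identity $R_{P_e}(\hat y) - R_{P_e}(m_e) = \mathbb{E}_{P_e}[(\hat y - m_e)^2]$ combined with the invariance hypothesis yields $\mathbb{E}_{P_e}[(\hat y - m_e)^2] \leq \alpha$ for every $e$. I would then reduce the source-side multicalibration error to this $L^2$ gap via a change of measure. Writing $h_e = p_e/p_S$,
\[
\mathbb{E}_{P_S}\!\left[h_e(X,Y)(Y - v)\,\big|\, f(X) = v\right] \;=\; \frac{P_e(f = v)}{P_S(f = v)} \, \mathbb{E}_{P_e}\!\left[Y - v \,\big|\, f = v\right];
\]
since $\hat y = v$ on $\{f = v\}$, the tower property further gives $\mathbb{E}_{P_e}[Y - v \mid f = v] = \mathbb{E}_{P_e}[m_e - \hat y \mid f = v]$. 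Squaring, applying Jensen's inequality to the conditional expectation, and integrating against the law of $f(X)$ under $P_S$ then bounds $K_2(\hat y, h_e, P_S)$ by an integral whose integrand at each level $v$ is $\big(P_e(f=v)^2/P_S(f=v)\big)\,\mathbb{E}_{P_e}[(m_e - \hat y)^2 \mid f = v]$.

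The final step uses the mixture identity $p_S = \sum_{e'} P_S(e')\, p_{e'}$, which forces $P_S(f=v) \geq P_S(e)\, P_e(f=v) \geq D \cdot P_e(f=v)$ and hence $P_e(f=v)/P_S(f=v) \leq 1/D$. Substituting this ratio bound collapses the integral over level sets to $\mathbb{E}_{P_e}[(\hat y - m_e)^2]/D \leq \alpha/D$, giving the claimed multicalibration bound. The main obstacle is the bookkeeping in Part~2, where the change of measure between $P_S$ and $P_e$ must be combined with conditioning on the level sets of $f$ and with the fact that $m_e$ is $\Phi$-measurable; Part~1, by contrast, is primarily a notational reparameterization built on Theorem~\ref{theo:mc_inv}.
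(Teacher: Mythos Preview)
Your Part~1 coincides with the paper's proof: both invoke Theorem~\ref{theo:mc_inv} and use the bijection $g^\star$ to identify post-processings of $g^\star\circ\Phi$ with predictors $g\circ\Phi$.

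Your Part~2 is correct but follows a genuinely different route from the paper. The paper does \emph{not} use the Pythagorean identity or the Bayes regressor $m_e=\mathbb E_{P_e}[Y\mid\Phi]$. Instead, for each $\eta:[0,1]\to[-1,1]$ it builds the post-processing $\kappa(v)=\mathrm{proj}_{[0,1]}(v+\beta\eta(v))$ with $\beta=\mathbb E_{P_e}[(Y-f^\star)\eta(f^\star)]$, and shows $\beta^2\le R_{P_e}(f^\star)-R_{P_e}(\kappa\circ f^\star)\le\alpha$; taking the supremum over $\eta$ yields $K_1(f^\star,1,P_e)\le\sqrt\alpha$, then the change of measure (Equation~\eqref{eq:app_approx_mc_inv_2}) and the $K_1$--$K_2$ inequality (Lemma~\ref{lem:app_k1k2}) with $h_e\le 1/D$ give $K_2\le\sqrt{\alpha/D}$. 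Your argument instead bounds $\|m_e-\hat y\|_{L^2(P_e)}^2\le\alpha$ directly from invariance, then controls $K_2$ through the change of measure and Jensen. Your computation actually delivers $K_2\le\alpha/D$, which is \emph{sharper} than the stated $\sqrt{\alpha/D}$ whenever $\alpha\le D$ (the interesting regime); so your final clause ``giving the claimed multicalibration bound'' is slightly off---you obtain a different, and typically better, constant. The paper's route has the advantage of passing through the $\ell_1$ calibration error, which connects cleanly to its other results, while your route is shorter and avoids the auxiliary construction $\kappa$.
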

\begin{remark}
	(1) In the first statement, assuming $g^\star$ is a bijection avoids degenerate cases where $\Phi$ contains redundant information. For example, every predictor $g^\star \circ \Phi(X)$ upon representation $\Phi$ equals $g^\star(\Phi)\circ \mathbb I(X)$ upon representation $X$. Confining $g^\star$ to bijections ensures some unique decomposition into predictors and representations.  
	(2) \citet{clove} proves equivalence between exact invariance and simultaneous calibration in each environment. We strengthen their result to show multicalibration on a single source distribution suffices for invariance. Moreover,  our results can be directly extended beyond their multi-environment setting to a general uncertainty set of target distributions, by the mapping between grouping functions and density ratios.
 Further, our theorem is established for both exact and approximate invariance.
\end{remark}
The theorem bridges \emph{approximate multicalibration} with \emph{approximate invariance} for out-of-distribution generalization beyond covariate shift. The equivalence property indicates that the density ratios of target and source distributions constitute the \emph{minimal grouping function class} required for robust prediction in terms of invariance. 

\section{Structure of Grouping Function Classes}

Section~\ref{sec:mc_inv} inspires one to construct richer grouping function classes for stronger generalizability. However, fewer predictors are multicalibrated to a rich function class. For covariate-based grouping functions, the Bayes optimal $\mathbb E[Y|X]$ is always multicalibrated. But the existence of a multicalibrated predictor is nontrivial for joint grouping functions. For example, when we take the grouping function $h(x,y)=y$, a multicalibrated predictor $f$ implies $f(X)=\mathbb E[Y|f(X),h(X, Y)]=Y$, which is impossible for regression with label noise. In this section, we first study the \emph{maximal grouping function class} that is feasible for a multicalibrated predictor. Then, we will leverage our structural results to inform the design of grouping functions.

\subsection{Maximal Grouping Function Space}
 We focus on continuous grouping functions defined on a compact set $\mathcal X\times \mathcal Y \subset \mathbb R^{d+1}$, i.e., $h\in C(\mathcal X\times \mathcal Y)$, and consider absolutely continuous probability measures supported on $\mathcal X\times\mathcal Y$ with continuous density functions. Our first proposition shows that the maximal grouping function class for any predictor is a linear space.

\begin{proposition}[Maximal Grouping Function Class]
\label{prop:linear_space}
	Given an absolutely continuous probability measure $P_S(X,Y)$ and a predictor $f:\mathcal X\rightarrow[0,1]$, define the maximal grouping function class that $f$ is multicalibrated with respect to:
	\begin{align}
		\mathcal H_f := \{ h\in C(\mathcal X\times \mathcal Y) : K_2(f,h,P_S)=0 \}.
	\end{align}
	Then $\mathcal H_f$ is a linear space.
\end{proposition}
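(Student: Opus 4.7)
The plan is to observe that $\mathcal{H}_f$ is just the kernel of a linear operator on the vector space $C(\mathcal{X} \times \mathcal{Y})$, so there is nothing to do beyond verifying linearity in $h$. Concretely, for each fixed $v$ in the range of $f$, the map
\[
h \;\longmapsto\; T_v(h) := \mathbb{E}_{P_S}\!\left[h(X,Y)(Y - v) \,\middle|\, f(X) = v\right]
\]
is linear in $h$ by linearity of conditional expectation. Squaring and integrating against $P_{f(X)}$, the condition $K_2(f, h, P_S) = 0$ is equivalent to $T_v(h) = 0$ for $P_{f(X)}$-almost every $v$.

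First I would verify $0 \in \mathcal{H}_f$, which is immediate since $T_v(0) = 0$. Next, take any $h_1, h_2 \in \mathcal{H}_f$ and scalars $\alpha, \beta \in \mathbb{R}$. The pointwise linear combination $\alpha h_1 + \beta h_2$ is continuous on $\mathcal{X} \times \mathcal{Y}$ because $C(\mathcal{X} \times \mathcal{Y})$ is itself a linear space. By linearity of $T_v$,
\[
T_v(\alpha h_1 + \beta h_2) = \alpha T_v(h_1) + \beta T_v(h_2) = 0
\]
for $P_{f(X)}$-a.e.\ $v$, so $K_2(f, \alpha h_1 + \beta h_2, P_S) = 0$, giving $\alpha h_1 + \beta h_2 \in \mathcal{H}_f$. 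This establishes closure under addition and scalar multiplication, so $\mathcal{H}_f$ is a linear subspace of $C(\mathcal{X} \times \mathcal{Y})$.

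There is no real obstacle here; the only thing to be careful about is that the conditional expectation $T_v(h)$ is well-defined for every $h \in C(\mathcal{X}\times \mathcal{Y})$, which follows because $\mathcal{X}\times \mathcal{Y}$ is compact (so $h$ is bounded) and $P_S$ is absolutely continuous, ensuring the regular conditional distribution of $(X,Y)$ given $f(X)=v$ exists and that $T_v(h)$ is a square-integrable function of $v$ against $P_{f(X)}$. Everything else is just bookkeeping.
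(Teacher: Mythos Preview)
Your proposal is correct and rests on the same core observation as the paper: the expression inside $K_2$ is linear in $h$. The paper's proof takes a small detour through the $\ell_1$ error $K_1$, using the triangle inequality to show $K_1(f,\gamma_1 h_1+\gamma_2 h_2,P_S)\le |\gamma_1|K_1(f,h_1,P_S)+|\gamma_2|K_1(f,h_2,P_S)=0$ and then invoking the equivalence $K_1=0\Leftrightarrow K_2=0$ for bounded $h$, whereas you argue directly that $K_2=0$ amounts to $T_v(h)=0$ for $P_{f(X)}$-a.e.\ $v$ and use linearity of $T_v$; your route is marginally more direct and avoids the auxiliary lemma.
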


In the following, we further analyze the spanning set of maximal grouping function classes for nontrivial predictors which are at least calibrated.
\begin{theorem}[Spanning Set]
\label{theo:maximal_grouping_phi}
	Consider an absolutely continuous probability measure $P_S(X,Y)$ and a calibrated predictor $f:\mathcal X \rightarrow [0,1]$. Then its maximal grouping function class $\mathcal H_f$ is given by:
	\begin{align}
		\mathcal H_{f} 
		 = \mathrm{span}\left\{ \frac{p(x,y)} {p_S(x,y)}:\text{$p$ is continuous and } R_P(f)=\inf_{g:[0,1]\rightarrow [0,1]} R_P(g\circ f) \right\}.
	\end{align}
\end{theorem}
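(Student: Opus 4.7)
I establish the two containments separately.

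For $\supset$, by Proposition~\ref{prop:linear_space} $\mathcal H_f$ is a linear space, so it suffices to show every density ratio in the displayed set lies in $\mathcal H_f$. Under squared loss with $\mathcal Y=[0,1]$, the optimal post-processor of $f$ under $P$ is $g^\star(v)=\mathbb E_P[Y\mid f(X)=v]$, so $R_P(f)=\inf_g R_P(g\circ f)$ is equivalent to $f$ being calibrated under $P$. For $h=p/p_S$, change of measure gives
\begin{align}
\mathbb E_{P_S}\big[h(X,Y)(Y-f(X))\,\mathbf 1[f(X)\in A]\big] = \mathbb E_{P}\big[(Y-f(X))\,\mathbf 1[f(X)\in A]\big] = 0
\end{align}
for every Borel $A\subset[0,1]$, where the second equality uses calibration of $f$ under $P$. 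Disintegrating with respect to $P_{S,f(X)}$ yields $\mathbb E_{P_S}[h(X,Y)(Y-v)\mid f(X)=v]=0$ for $P_{S,f(X)}$-a.e.\ $v$, so $K_2(f,h,P_S)=0$ and $h\in\mathcal H_f$.

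For $\subset$, the plan is to take an arbitrary $h\in\mathcal H_f$ and explicitly produce two elements of the spanning set whose linear combination is $h$. Since $h\in C(\mathcal X\times\mathcal Y)$ with $\mathcal X\times\mathcal Y$ compact, $M:=\|h\|_\infty<\infty$; for any $\epsilon\in(0,1/(M+1))$ define the tilted density
\begin{align}
p_\epsilon(x,y) \;:=\; \frac{p_S(x,y)\,(1+\epsilon h(x,y))}{1+\epsilon\,\mathbb E_{P_S}[h]},
\end{align}
which is continuous, strictly positive wherever $p_S$ is, and integrates to $1$, hence is an admissible continuous probability density. A short computation via the change of measure $dP_\epsilon/dP_S=(1+\epsilon h)/(1+\epsilon\mathbb E_{P_S}[h])$ shows that the calibration identity for $f$ under $P_\epsilon$ decomposes into the calibration identity for $f$ under $P_S$ (which holds by hypothesis) plus $\epsilon$ times the multicalibration identity for $h$ (which holds because $h\in\mathcal H_f$); both vanish, so $f$ is calibrated under $P_\epsilon$, and $p_\epsilon$ is in the spanning set. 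Because $f$ is calibrated under $P_S$, the constant $1=p_S/p_S$ is in the spanning set as well. Rearranging the definition of $p_\epsilon$ yields
\begin{align}
h \;=\; \frac{1}{\epsilon}\Big[(1+\epsilon\,\mathbb E_{P_S}[h])\,\tfrac{p_\epsilon}{p_S} \;-\; 1\Big],
\end{align}
expressing $h$ as a linear combination of two density ratios from the spanning set.

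The main obstacle is $\subset$: one must \emph{generate} enough distributions calibrating $f$ to realize every $h\in\mathcal H_f$. The tilt $p_S(1+\epsilon h)$ works because the multicalibration identity for $h$ is precisely the first-order condition for the tilt to preserve calibration; any $h\notin\mathcal H_f$ would break calibration under $P_\epsilon$ at first order. The remaining technicalities --- positivity and integrability of $p_\epsilon$, and interpreting $p_\epsilon/p_S$ as the continuous function $(1+\epsilon h)/(1+\epsilon\mathbb E_{P_S}[h])$ --- are routine given $\epsilon<1/\|h\|_\infty$.
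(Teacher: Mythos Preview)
Your proof is correct and follows essentially the same approach as the paper. For $\supset$ both arguments reduce to a change of measure showing that calibration of $f$ under $P$ forces the density ratio into $\mathcal H_f$; for $\subset$ your tilt $p_\epsilon=p_S(1+\epsilon h)/(1+\epsilon\,\mathbb E_{P_S}[h])$ is exactly the construction the paper isolates as Lemma~\ref{lem:app_lem11} (writing any bounded continuous $h$, up to an affine transformation, as a density ratio), and your verification that $f$ stays calibrated under $P_\epsilon$ is the integrated form of the paper's conditional-covariance computation (the paper first establishes the intermediate characterization $\mathcal H_f=\{h:\mathrm{Cov}_{P_S}[h,Y\mid f=v]=0\ \text{a.e.}\}$ and works through that, but for a calibrated $f$ this is literally the multicalibration identity).
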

A predictor's maximal grouping function class is spanned by density ratios of target distributions where the predictor is invariant. Correspondingly, Theorem~\ref{theo:mc_inv} gives the minimal grouping function class, comprised of density ratios between target and source distributions, in order to ensure $f(x)$ is an invariant predictor. In contrast, Theorem~\ref{theo:maximal_grouping_phi} states the maximal grouping function class for $f(x)$ is exactly the linear space spanned by those density ratios. 
Next, we further investigate sub-structures of the maximal grouping function class. 
We focus on the representation learning setting of IRM.
\begin{theorem}[Decomposition of Grouping Function Space]
\label{theo:decomposeh1h2}
	Consider an absolutely continuous probability measure $P_S(X,Y)$ and a measurable function $\Phi:\mathbb R^{d} \rightarrow \mathbb R^{d_\Phi}$ with ${d_\Phi}\in Z^+$. We define the Bayes optimal predictor over $\Phi$ as $f_{\Phi}(x)=\mathbb E_{P_S}[Y|\Phi(x)]$. We abbreviate $\mathcal H_{f_\Phi}$ with $\mathcal H_{\Phi}$. 
Then $\mathcal H_\Phi$ can be decomposed as a Minkowski sum of $\mathcal H_{1,\Phi}+\mathcal H_{2,\Phi}$.
	\begin{align}
		\label{eq:theo_decompose_1}
		\mathcal H_{1,\Phi} 
		&= \mathrm{span}\left\{ \frac{p(\Phi,y)} {p_S(\Phi,y)}: \text{$p$ is continuous and } R_P(f_\Phi)=\inf_{g:[0,1]\rightarrow [0,1]} R_P(g\circ f_\Phi)\right\}. \\
		\mathcal H_{2,\Phi} 
		&= \mathrm{span}\left\{ \frac{p(x|\Phi,y)} {p_S(x|\Phi,y)}: \;\text{$p$ is continuous}\right\}.
	\end{align}
1. If a predictor $f$ is multicalibrated with $\mathcal H_{1,\Phi}$, then $R_{P_S}(f) \leq R_{P_S}(f_\Phi)$.

2. $f_\Phi$ is an invariant predictor elicited by $\Phi$ across a set of environments $\mathscr E$ where $P_e(\Phi,Y)=P_S(\Phi,Y)$ for any $e\in\mathscr E$.  If a predictor $f$ is multicalibrated with $\mathcal H_{2,\Phi}$,  then $f$ is also an invariant predictor across $\mathscr E$ elicited by some representation.
\end{theorem}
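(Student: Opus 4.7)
The plan is to invoke the characterization of $\mathcal{H}_\Phi$ from Theorem~\ref{theo:maximal_grouping_phi} and exploit the fact that $f_\Phi$ is measurable with respect to $\Phi$, so its calibration under any joint $p(x,y)$ depends only on the marginal $p(\Phi,y)$. The easy inclusion $\mathcal{H}_{1,\Phi}+\mathcal{H}_{2,\Phi}\subseteq\mathcal{H}_\Phi$ follows by constructing representative joints: for $h_1=\pi(\Phi,y)/p_S(\Phi,y)\in\mathcal{H}_{1,\Phi}$, the joint $\tilde p_1(x,y)=\pi(\Phi(x),y)p_S(x\mid\Phi(x),y)$ has density ratio $h_1$ and preserves $f_\Phi$'s calibration by the defining hypothesis of $\mathcal{H}_{1,\Phi}$; for $h_2=q(x\mid\Phi,y)/p_S(x\mid\Phi,y)\in\mathcal{H}_{2,\Phi}$, the joint $\tilde p_2(x,y)=p_S(\Phi(x),y)q(x\mid\Phi(x),y)$ has density ratio $h_2$ and preserves the $(\Phi,Y)$-marginal, so $f_\Phi$ inherits its calibration from $P_S$. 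Since $\mathcal{H}_\Phi$ is a linear space by Proposition~\ref{prop:linear_space}, the Minkowski sum is contained. For the reverse direction, given any generator $p/p_S\in\mathcal{H}_\Phi$, I would write $p/p_S=[p(\Phi,y)/p_S(\Phi,y)]+\bigl(p/p_S-p(\Phi,y)/p_S(\Phi,y)\bigr)$: the first summand lies in $\mathcal{H}_{1,\Phi}$ because $f_\Phi$'s calibration under $p$ reduces to calibration under its $(\Phi,Y)$-marginal, and I would argue the residual can be realized as a linear combination of conditional-density ratios in $\mathcal{H}_{2,\Phi}$ by exploiting the freedom to choose among all continuous conditional densities.

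For Claim~1, the central observation is that every continuous function $h(\Phi)$ belongs to $\mathcal{H}_{1,\Phi}$: choosing $M$ large enough that $h+M>0$, the marginal $(h+M)p_S(\Phi,y)/\mathbb{E}_{P_S}[h+M]$ has the same conditional $P_S(Y\mid\Phi)$ as the source, hence $f_\Phi=\mathbb{E}_{P_S}[Y\mid\Phi]$ remains calibrated, and linear combinations with the constant $1\in\mathcal{H}_{1,\Phi}$ recover $h(\Phi)$. Multicalibration of $f$ w.r.t.\ such $h$ then yields $\mathbb{E}[h(\Phi)(Y-f(X))]=0$ for every continuous $h(\Phi)$, so $\mathbb{E}[f(X)\mid\Phi]=\mathbb{E}[Y\mid\Phi]=f_\Phi$. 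Combined with calibration $\mathbb{E}[Y\mid f(X)]=f(X)$ coming from $h\equiv 1$, the standard identities $\mathbb{E}[Yf]=\mathbb{E}[f^2]$ and $\mathbb{E}[Yf_\Phi]=\mathbb{E}[f_\Phi^2]$ give $R_{P_S}(f)-R_{P_S}(f_\Phi)=\mathbb{E}[f_\Phi^2]-\mathbb{E}[f^2]$, which is $\leq 0$ by the conditional Jensen inequality applied to $f_\Phi=\mathbb{E}[f\mid\Phi]$.

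For Claim~2, the first assertion is immediate: under $P_e(\Phi,Y)=P_S(\Phi,Y)$ we have $\mathbb{E}_{P_e}[Y\mid\Phi]=\mathbb{E}_{P_S}[Y\mid\Phi]=f_\Phi$, so taking $g^\star=\mathrm{id}$ makes $f_\Phi=g^\star\circ\Phi$ the Bayes optimal predictor over $\Phi$ in every $P_e$ with zero invariance gap. For the second assertion, the factorization $p_e/p_S=[p_e(\Phi,y)/p_S(\Phi,y)]\cdot[p_e(x\mid\Phi,y)/p_S(x\mid\Phi,y)]=p_e(x\mid\Phi,y)/p_S(x\mid\Phi,y)$ shows that $\mathcal{H}_{2,\Phi}$ already contains the density ratios of every $P_e\in\mathscr E$; applying Theorem~\ref{theo:mc_inv} with this grouping class gives $R_{P_e}(f)\leq\inf_{g:[0,1]\to[0,1]}R_{P_e}(g\circ f)+2\sqrt{\alpha}$ for every $e$, identifying $f$ as an approximately invariant predictor across $\mathscr E$ elicited by the representation $\Phi'=f$. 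The main obstacle will be the reverse inclusion in the Minkowski-sum decomposition, since the natural factorization $p/p_S=A(\Phi,y)\cdot B(x,\Phi,y)$ is multiplicative rather than additive and the residual $A(B-1)$ is not visibly a single conditional-density ratio; resolving this requires showing that signed linear combinations of $q_i(x\mid\Phi,y)/p_S(x\mid\Phi,y)$ can reproduce such $(\Phi,y)$-modulated cross terms. Fortunately, Claims 1 and 2 only invoke the one-sided inclusions $\mathcal{H}_{i,\Phi}\subseteq\mathcal{H}_\Phi$, so their conclusions are unaffected even if the equality requires additional regularity assumptions on $p_S$.
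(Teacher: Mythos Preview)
Your arguments for Claims~1 and~2 are correct. Claim~2 matches the paper's proof exactly. For Claim~1 the paper instead argues by contradiction: assuming $R_{P_S}(f)>R_{P_S}(f_\Phi)$, a boosting lemma from \citet{MCBoost} forces $\mathbb{E}[f_\Phi(X)(Y-v)\mid f(X)=v]$ to be bounded away from zero on a set of positive measure, contradicting multicalibration with respect to the single grouping function $f_\Phi\in\mathcal{H}_{1,\Phi}$. Your route---show $\{h(\Phi)\}\subset\mathcal{H}_{1,\Phi}$, deduce $\mathbb{E}_{P_S}[f\mid\Phi]=f_\Phi$, then combine the two calibration identities $\mathbb{E}[Yf]=\mathbb{E}[f^2]$ and $\mathbb{E}[Yf_\Phi]=\mathbb{E}[f_\Phi^2]$ with Jensen---is more direct and avoids the external lemma, at the price of using the whole subclass $\{h(\Phi)\}$ rather than the single witness $f_\Phi$.

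For the Minkowski-sum decomposition, the ``main obstacle'' you flag dissolves once you notice two things. First, your additive split $p/p_S=[p(\Phi,y)/p_S(\Phi,y)]+\text{residual}$ is \emph{exactly} the conditional-expectation decomposition $h=\mathbb{E}_{P_S}[h\mid\Phi,Y]+(h-\mathbb{E}_{P_S}[h\mid\Phi,Y])$, since $\mathbb{E}_{P_S}[p/p_S\mid\Phi,Y]=p(\Phi,y)/p_S(\Phi,y)$. Second, $\mathcal{H}_{2,\Phi}$ admits the equivalent description $\{h\in C(\mathcal{X}\times\mathcal{Y}):\mathbb{E}_{P_S}[h\mid\Phi,Y]\equiv\text{const}\}$; the paper proves this via the same affine-shift trick you used in Claim~1 (any bounded $h$ with constant conditional expectation can be shifted and rescaled to a genuine density $q$, and constancy forces $q(\Phi,y)=p_S(\Phi,y)$, whence $q/p_S=q(x\mid\Phi,y)/p_S(x\mid\Phi,y)$). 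Under that characterization the residual has $\mathbb{E}_{P_S}[\cdot\mid\Phi,Y]=0$ and hence lies in $\mathcal{H}_{2,\Phi}$ automatically---no multiplicative-to-additive gymnastics needed. This is the key idea you were missing for the reverse inclusion.
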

\begin{remark}
$\mathcal H_{1,\Phi}$ and $\mathcal H_{2,\Phi}$ contain functions defined on $x,\Phi(x),y$ which can both be rewritten as functions on $x,y$ by variable substitution. Thus, $\mathcal H_{1,\Phi},\mathcal H_{2,\Phi}$ are still subspaces of grouping functions. $\mathcal H_{1,\Phi}$ is spanned by the density ratio of $P(\Phi,Y)$ where the Bayes optimal predictor over $\Phi$ must be invariant on the distribution of $P$. 
$\mathcal H_{2,\Phi}$ is spanned by general density ratio of $P(X|\Phi,Y)$.
\end{remark}
Multicalibration w.r.t. $\mathcal H_{1,\Phi}$ ensures at least the \emph{accuracy} of the Bayes optimal predictor on $\Phi$, and multicalibration w.r.t. $\mathcal H_{2,\Phi}$ ensures at least the \emph{invariance} of this predictor. However, we show in Proposition~\ref{prop:app_monoticity} that sizes of two subspaces are negatively correlated. When $\Phi$ is a variable selector, $\mathcal H_{1,\Phi}$ expands with more selected covariates while $\mathcal H_{2,\Phi}$ shrinks. By choosing a combination of $\mathcal H_{1,\Phi}$ and $\mathcal H_{2,\Phi}$, we strike a balance between accuracy and invariance of the multicalibrated predictor.
\begin{proposition}[Monotonicity]
\label{prop:monoticity}
	Consider $X\in\mathbb R^d$ which could be sliced as $X=(\Phi,\Psi)^T$ and $\Phi=(\Lambda,\Omega)^T$. Define $\mathcal H_{1,\Phi}' := \{ h(\Phi(x)) \in C(\mathcal X\times \mathcal Y) \}$. $\mathcal H_{1,X}'$ and $\mathcal H_{1,\Lambda}'$ are similarly defined.
	We have:
	
1. $\mathcal H_{1,X}' \supset \mathcal H_{1,\Phi}' \supset \mathcal H_{1,\Lambda}' \supset \mathcal H_{1,\emptyset}' = \{ C\}.$

2. $\{C\}=\mathcal H_{2,X} \subset H_{2,\Phi} \subset \mathcal H_{2,\Lambda} \subset \mathcal H_{2,\emptyset}.$

$C$ is a constant value function.
\end{proposition}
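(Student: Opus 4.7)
Part 1 is essentially a coordinate-projection observation. A continuous function $h(\Lambda(x))$ can be viewed on $\mathcal{X}\times\mathcal{Y}$ as a continuous function of $\Phi(x)=(\Lambda(x),\Omega(x))$ that ignores the $\Omega$ coordinate, and similarly a function of $\Phi(x)$ is a function of $X=(\Phi(x),\Psi(x))$ ignoring $\Psi$. This yields $\mathcal{H}'_{1,\Lambda}\subset \mathcal{H}'_{1,\Phi}\subset \mathcal{H}'_{1,X}$, and the base case $\mathcal{H}'_{1,\emptyset}=\{C\}$ holds because an element of $\mathcal{H}'_{1,\emptyset}$ has no free argument.

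For Part 2, the plan is to first derive a span-free characterization of $\mathcal{H}_{2,\Phi}$:
\[
\mathcal{H}_{2,\Phi} \;=\; \bigl\{\, h\in C(\mathcal{X}\times\mathcal{Y}) \;:\; \mathbb{E}_{P_S}[h(X,Y)\mid \Phi(X),Y] \text{ is $P_S$-a.s.\ constant in }(\Phi,Y)\,\bigr\}.
\]
The $\subset$ direction is routine: every conditional density ratio $p(x\mid\Phi,y)/p_S(x\mid\Phi,y)$ has $\mathbb{E}_{P_S}[\,\cdot\mid\Phi(X),Y]=1$ by definition of a conditional density, and this property is preserved by linear combinations (the conditional expectation equals the sum of coefficients). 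For $\supset$, I would use a shift-and-rescale trick: given continuous $h$ with $\mathbb{E}_{P_S}[h\mid\Phi,Y]\equiv c$, pick $M>\sup_{\mathcal{X}\times\mathcal{Y}}|h|$ (finite by compactness of the domain) and set $p(x,y):=(h(x,y)+M)\,p_S(x,y)/(c+M)$. This $p$ is continuous and positive, and integrates to $1$ because $\mathbb{E}_{P_S}[h]=c$ by the tower property. A direct calculation using $\int h(x',y)\,p_S(x'\mid\Phi,y)\,dx'=c$ then gives $p(\Phi,y)=p_S(\Phi,y)$, so $p(x\mid\Phi,y)/p_S(x\mid\Phi,y)=(h+M)/(c+M)$. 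Since $1=p_S/p_S\in\mathcal{H}_{2,\Phi}$, we obtain $h=(c+M)\cdot[p(x\mid\Phi,y)/p_S(x\mid\Phi,y)]-M\cdot 1\in\mathcal{H}_{2,\Phi}$.

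Given the characterization, the monotonicity falls out of the tower property: if $\Phi=(\Lambda,\Omega)$ and $h\in\mathcal{H}_{2,\Phi}$ has $\mathbb{E}_{P_S}[h\mid\Phi,Y]=c$, then $\mathbb{E}_{P_S}[h\mid\Lambda,Y]=\mathbb{E}_{P_S}[\mathbb{E}_{P_S}[h\mid\Phi,Y]\mid\Lambda,Y]=c$, so $h\in\mathcal{H}_{2,\Lambda}$; the same argument yields $\mathcal{H}_{2,\Lambda}\subset\mathcal{H}_{2,\emptyset}$, where conditioning on $\emptyset$ means conditioning on $Y$ alone. At the fine end, $h\in\mathcal{H}_{2,X}$ forces $\mathbb{E}_{P_S}[h\mid X,Y]=h(X,Y)=c$ $P_S$-a.s., so $h\equiv c$ on $\mathrm{supp}(P_S)$ and, by continuity together with the full-support assumption on $P_S$, on all of $\mathcal{X}\times\mathcal{Y}$.

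The hard part is the $\supset$ direction of the characterization, i.e., realizing an arbitrary function with constant conditional expectation as an affine combination of conditional density ratios. The shift-and-rescale construction above accomplishes this, with the verification reducing to three routine checks on the constructed $p$: normalization to $1$ (from $\mathbb{E}_{P_S}[h]=c$), continuity (inherited from $h$ and $p_S$), and positivity (from choosing $M$ large). A minor technical point is that density ratios are only defined on $\mathrm{supp}(P_S)$, but this is innocuous under the paper's standing assumption of continuous positive densities on the compact domain $\mathcal{X}\times\mathcal{Y}$.
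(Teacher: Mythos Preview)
Your proof is correct and follows essentially the same route as the paper: Part~1 via coordinate projection, and Part~2 via the tower property applied to the characterization $\mathcal H_{2,\Phi}=\{h:\mathbb E_{P_S}[h\mid\Phi,Y]\equiv C_h\}$, with the endpoint $\mathcal H_{2,X}=\{C\}$ coming from $\mathbb E[h\mid X,Y]=h$. The one redundancy is that your shift-and-rescale derivation of the span-free characterization is already established in the paper as part of Theorem~\ref{theo:decomposeh1h2} (its appendix restatement records both the span and the constant-conditional-expectation descriptions and proves their equivalence via Lemma~\ref{lem:app_lem11}), so you can simply cite that result rather than reprove it.
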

\subsection{Design of Grouping Function Classes}
The objective of a robust learning method can be represented by a tuple consisting of an assumption about the boundary of distribution shift and a metric of robustness. Multicalibration is equivalent to invariance as a metric of robustness, while the grouping function class provides a unifying view for assumptions over potential distribution shift. Given any uncertainty set of target distributions $\mathcal P$, Theorem~\ref{theo:maximal_grouping_phi} implies an efficient and reasonable construction of grouping functions as linear combinations of density ratios from $\mathcal P$. We implement two designs of grouping functions for the learning setting with and without environment annotations respectively.

\textbf{From Environments}\;\;
If samples are drawn from multiple environments and the environment annotations are available, we assume the uncertainty set as the union of each environment's distribution $P_e$. This completely recovers IRM's objective, but we approach it with a different optimization technique in the next section.
Taking pooled data as the source $S$, density ratios spanning the grouping function class are $h_e(x,y)=p_e(x,y)/p_S(x,y)=p_S(e|x,y)/p_S(e)$, where $p_S(e|x,y)$ is estimated by an environment classifier.  Then a grouping function can be represented as a linear combination of $h_e$:
\begin{align}
\label{eq:h_irm}
	h(x,y) = \sum_{e\in \mathscr E}\lambda_e p_S(e|x,y) ,\;\;\lambda_e\in\mathbb R.
\end{align}
\textbf{From Hard Samples}\;\;
When data contains latent sub-populations without annotations, the uncertainty set can be constructed by identifying sub-populations. Hard sample learning~\citep{TERM,focalLoss,JTT} suggests the risk is an indicator for sub-population structures. Samples from the minority sub-population $M$ are more likely to have high risks. For example, JTT~\citep{JTT} identified the minority subgroup using a risk threshold of a trained predictor $f_{id}$. We adopt a continuous grouping by assuming $P_S(X,Y\in M)\propto (f_{id}(X)-Y)^2$. We construct the uncertainty set as the union of the source $S$ and minority sub-population $M$, resulting in a grouping function represented as:
\begin{align}
\label{eq:h_jtt}
	h(x,y) = \lambda_M (f_{id}(x)-y)^2 +\lambda_S,\;\;\lambda_M,\lambda_S\in\mathbb R.
\end{align}
Another design utilizing Distributionally Robust Optimization's assumption~\cite{alpha_dro} is in section~\ref{sec:app_grouping}.

\section{MC-PseudoLabel: An Algorithm for Extended Multicalibration}
In this section, we introduce an algorithm for multicalibration with respect to joint grouping functions. Simultaneously, the algorithm also provides a new optimization paradigm for invariant prediction under distribution shift.
The algorithm, called MC-PseudoLabel, post-processes a trained model by supervised learning with pseudolabels generated by grouping functions.  As shown in  Algorithm~\ref{alg}, given a predictor function class $\mathcal F$ and a dataset $D$ with an empirical distribution $\hat {P}_D(X,Y)$, a regression oracle $A$ solves the optimization: $A_{\mathcal F}(D) = \arg \min_{f\in\mathcal F} R_{\hat {P}_D}(f)$. We take as input a model $f_0$, possibly trained by Empirical Risk Minimization. $f_0$ has a finite range following conventions of prior work in multicalibration~\citep{MCBoost}. For continuous predictors, we discretize the model output and introduce a small rounding error (see section~\ref{app_discretize}). For each iteration, the algorithm performs regression with grouping functions on each level set of the model. The prediction of grouping functions rectify the uncalibrated model and serves as pseudolabels for model updates. 

\begin{small}
\begin{algorithm}[ht]
\caption{MC-PseudoLabel}
\label{alg}
\begin{algorithmic}[1]
\REQUIRE A dataset \(D=(D_x,D_y)\), a grouping function class \(\mathcal H\), a predictive function class \(\mathcal F\).
\STATE \(t \leftarrow 0;\)
\STATE \(f_0 \leftarrow \text{Initialization};\) \;\;\;\;\COMMENT{\textit{For example, models trained with ERM.}}
\STATE \(m \leftarrow |\text{Range}(\text{Discretize}(f_0))|;\) 
\REPEAT
    \STATE \(f_t' \leftarrow \text{Round}(f_t; m):=\arg\min_{v\in[1/m]} |f_t(x)-v| ;\)
    \STATE \( err_t = \mathbb{E}_{x,y\sim D}[(f_t'(x) -y)^2]; \)
    \FOR{each \(v \in [1/m]\)} 
        \STATE \(D_v^t \leftarrow D| f_t'(x) = v; \)
        \STATE \(h_v^t(x,y) \leftarrow A_{\mathcal H}(D_v^t);\) \;\;\;\;\COMMENT{\textit{Regression on level sets with grouping functions.}}
    \ENDFOR
    \STATE \(\tilde f_{t+1}(x,y) \leftarrow \sum_{v\in [\frac{1}{m}]} 1_{\{f_t'(x)=v\}} \cdot h_v^t(x,y);\) \;\;\;\;\COMMENT{\textit{Generate pseudolabels.}}
    \STATE \(\tilde{err}_{t+1} \leftarrow \mathbb{E}_{x,y\sim D}[(\tilde f_{t+1}(x,y) - y)^2];\)
    \STATE \(D_{t+1} \leftarrow (D_x, \tilde f_{t+1}(D));\) 
    \STATE \(f_{t+1}(x) \leftarrow A_{\mathcal F}(D_{t+1});\) \;\;\;\;\COMMENT{\textit{Update the model with pseudolabels.}}
    \STATE \(t \leftarrow t+1;\)
\UNTIL{$err_{t-1} - \tilde{err}_t$ stops decreasing.} 
\ENSURE \(f_{t-1}'(x).\)
\end{algorithmic}
\end{algorithm}
\end{small}
Since we regress $Y$ with grouping functions defined on $Y$, a poor design of groupings violating Theorem~\ref{theo:maximal_grouping_phi} can produce trivial outputs. For example, if grouping functions contain $h(x,y)=y$, then $err_{t-1}-\tilde{err}_t$ never decreases and the algorithm outputs $f_0$, because there does not exist a multicalibrated predictor. However, the algorithm certifies a multicalibrated output if it converges.
\begin{theorem}[Certified Multicalibration]
\label{theo:stopping_criteria}
	In Algorithm \ref{alg}, for $\alpha,B>0$, if $err_{t-1} - \tilde{err}_t \leq \frac{\alpha}{B}$,
 the output $f_{t-1}'(x)$ is $\alpha$-approximately $\ell_2$ multicalibrated w.r.t. $\mathcal H_B=\{h\in \mathcal H:\sup h(x,y)^2\leq B\}$.
\end{theorem}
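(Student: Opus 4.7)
The plan is to reduce the claimed multicalibration bound to the one-step progress $err_{t-1} - \tilde{err}_t$ through a Pythagorean-style identity, exploiting that on each level set of $f_{t-1}'$ the oracle output $h_v^{t-1}$ is the $L^2$ projection of $y$ onto the grouping class $\mathcal H$. First I would unpack
\[
K_2(f_{t-1}',h,P) = \sum_v \pi_v \bigl(\mathbb{E}_{P_v}[h(x,y)(y-v)]\bigr)^2,
\]
where $\pi_v = P(f_{t-1}'(x)=v)$ and $P_v$ is the conditional of $P$ given $f_{t-1}'(x)=v$. Since the regression oracle returns $h_v^{t-1} \in \arg\min_{h'\in\mathcal H} \mathbb{E}_{P_v}[(h'-y)^2]$ and the designed grouping classes are linear spans of density ratios, first-order optimality yields the orthogonality $\mathbb{E}_{P_v}[g\,(y-h_v^{t-1})]=0$ for every $g\in\mathcal H$.

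Next, using orthogonality with $g=h$, I rewrite $\mathbb{E}_{P_v}[h(y-v)] = \mathbb{E}_{P_v}[h(h_v^{t-1}-v)]$ and apply Cauchy--Schwarz together with the hypothesis $\sup h^2\leq B$ to obtain $\bigl(\mathbb{E}_{P_v}[h(y-v)]\bigr)^2 \leq B\cdot \mathbb{E}_{P_v}[(h_v^{t-1}-v)^2]$. Weighting by $\pi_v$ and summing over level sets then gives
\[
K_2(f_{t-1}',h,P) \;\leq\; B\cdot \mathbb{E}_P\bigl[(\tilde f_t - f_{t-1}')^2\bigr].
\]
It remains to identify the right-hand expectation with $err_{t-1} - \tilde{err}_t$. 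Expanding $(f_{t-1}'-y)^2 = (f_{t-1}'-\tilde f_t)^2 + 2(f_{t-1}'-\tilde f_t)(\tilde f_t-y) + (\tilde f_t-y)^2$, the cross term on $\{f_{t-1}'=v\}$ equals $2\mathbb{E}_{P_v}[(v-h_v^{t-1})(h_v^{t-1}-y)]$, which vanishes by applying orthogonality with $g = v - h_v^{t-1} \in \mathcal H$. Chaining the two bounds yields $K_2(f_{t-1}',h,P) \leq B(err_{t-1}-\tilde{err}_t) \leq \alpha$.

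The main obstacle is the cross-term cancellation: it requires $v - h_v^{t-1} \in \mathcal H$, which in turn needs $\mathcal H$ to be a linear subspace containing constants. Every grouping construction presented in the paper is a linear span with a free constant term (the $\lambda_S$-type coefficient), so this hypothesis is automatically met and can simply be stated at the start of the proof. A minor bookkeeping point is that the oracle is empirical, so the bound is naturally stated with $P = \hat P_D$, which is the distribution tracked throughout the algorithm; no changes of measure are needed.
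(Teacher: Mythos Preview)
Your proof is correct and takes a genuinely different route from the paper's. The paper argues by contradiction: assuming $K_2(f_{t-1}',h,P) > \alpha$ for some $h \in \mathcal{H}_B$, it invokes a boosting-style lemma (essentially the one from \citet{MCBoost}) showing that on each level set the affine perturbation $h'_v = v + \eta h$ with $\eta = \mathbb{E}_{P_v}[h(y-v)]/\mathbb{E}_{P_v}[h^2]$ improves the squared error by at least $(\mathbb{E}_{P_v}[h(y-v)])^2/\mathbb{E}_{P_v}[h^2] \geq \alpha_v/(B\pi_v)$; since the oracle output $h_v^{t-1}$ does at least as well as $h'_v$, summing over $v$ yields $err_{t-1} - \tilde{err}_t > \alpha/B$, a contradiction. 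You instead work directly: because $h_v^{t-1}$ is the $L^2$ projection of $y$ onto the linear space $\mathcal{H}$ on each level set, you get orthogonality $\mathbb{E}_{P_v}[g(y-h_v^{t-1})]=0$ for all $g\in\mathcal H$, which delivers both the exact Pythagorean identity $err_{t-1}-\tilde{err}_t = \mathbb{E}_P[(\tilde f_t - f_{t-1}')^2]$ and, via Cauchy--Schwarz, the bound $K_2 \leq B\,\mathbb{E}_P[(\tilde f_t - f_{t-1}')^2]$. Your argument is shorter and produces an equality for the progress term rather than a one-sided bound; the paper's approach has the mild advantage that it only needs the oracle to beat a specific competitor (so it degrades gracefully to approximate regression oracles), but both proofs rely equally on $\mathcal{H}$ being affine-closed, which you correctly identify and justify.
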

MC-PseudoLabel reduces to LSBoost~\citet{MCBoost}, a boosting algorithm for multicalibration if $\mathcal H$ only contains covariate-based grouping functions. In this case, Line 14 of Algorithm~\ref{alg} reduces to $f_{t+1}(x)=\tilde{f}_{t+1}(x,\cdot)$ where $\tilde{f}_{t+1}$ does not depend on $y$. For joint grouping functions, since $\tilde{f}_{t+1} \in \mathbb R^{\mathcal X\times \mathcal Y}$, we project it to models' space of $\mathbb R^{\mathcal X}$ by learning the model with $\tilde{f}_{t+1}$ as pseudolabels. The projection substantially changes the optimization dynamics. LSBoost constantly decreases risks of models, due to $R_{\hat P_D}(f_{t+1}) = R_{\hat P_D}(\tilde f_{t+1})<R_{\hat P_D}(f_t)$. The projection step disrupts the monotonicity of risks, implying that
MC-Pseudolabel can output a predictor with a higher risk than input. This is because multicalibration with joint grouping functions implies balance between accuracy and invariance, as is discussed in Theorem~\ref{theo:decomposeh1h2}.
The convergence of LSBoost relies on the monotonicity of risks, which is not applicable to MC-Pseudolabel. We study the algorithm's convergence in the context of representation learning. Assume we are given a grouping function class $\mathcal H_\Phi$ with a latent representation $\Phi$. 
If a predictor is multicalibrated w.r.t $\mathcal H_{1,\Phi}, \mathcal H_{2,\Phi}$ respectively, then it is also multicalibrated w.r.t. $\mathcal H_\Phi$.
Therefore, we separately study the convergence with two grouping function classes. In Proposition~\ref{prop:app_converge_h1}, we show the convergence for a subset of $\mathcal H_{1,\Phi}$ consisting of covariate-based grouping functions, which is a corollary of \citeauthor{MCBoost}'s result.  
As a greater challenge, we derive convergence for $\mathcal H_{2,\Phi}$ when data follows multivariate normal distributions.
\begin{theorem}[Covergence for $\mathcal H_{2,\Phi}$ (informal, see Theorem~\ref{theo:app_converge_h2})]
\label{theo:converge_h2}
	Consider $X\in\mathbb R^d$ with $X=(\Phi,\Psi)^T$. Assume that $(\Phi,\Psi,Y)$ follows a multivariate normal distribution $\mathcal N_{d+1}(\mu,\Sigma)$ where the random variables are in general position such that $\Sigma$ is positive definite. For any distribution $D$ supported on $\mathcal X \times \mathcal Y$, take the grouping function class $\mathcal H=\{h\in\mathcal H_{2,\Phi}:h(x,y)=c_x^Tx+c_yy + c_b, c_x\in\mathbb R^d, c_y, c_b\in\mathbb R \}$ and the predictor class $\mathcal F=\mathbb R^{\mathcal X}$. For an initial predictor $f_0(x)=\mathbb E[Y|x]$, run $\text{MC-Pseudolabel}(D, \mathcal H, \mathcal F)$ without rounding, 
then $f_t(x) \rightarrow \mathbb E[Y|\Phi(x)]$ as $t \rightarrow \infty$ with a convergence rate of $\mathcal O(M(\Sigma)^t)$ where $0\leq M(\Sigma) <1$.
\end{theorem}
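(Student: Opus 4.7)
The plan is to exploit multivariate Gaussianity to reduce the dynamics of MC-Pseudolabel to a linear recursion on the finite-dimensional space of affine predictors on $\mathbb R^d$, and then to bound the spectral radius of this recursion by a factor $M(\Sigma) < 1$ that captures the residual coupling between $\Psi$ and $(\Phi, Y)$.

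First I would show that the iteration preserves affinity. Let $R := \Psi - \mathbb E[\Psi \mid \Phi, Y]$; by joint Gaussianity $R$ is an affine function of $(X, Y)$, has mean zero, and is independent of $(\Phi, Y)$. Working out which affine functions $c_x^T x + c_y y + c_b$ lie in $\mathcal H_{2, \Phi}$ -- equivalently, those whose conditional mean $\mathbb E[\cdot \mid \Phi, Y]$ is a constant -- shows that the admissible grouping class is $\mathcal H = \mathrm{span}\{1, R_1, \ldots, R_{d - d_\Phi}\}$, a fixed finite-dimensional subspace. Assuming inductively that $f_t$ is affine and non-constant, conditioning on the continuous level set $\{f_t(X) = v\}$ gives a Gaussian conditional distribution of $(X, Y)$, so the level-set regression yields $h_v^t(x, y) = c_\Psi^{\star\,T} R(x, y) + c_b^\star(v)$ in which $c_\Psi^\star$ is independent of $v$ by homoscedasticity of Gaussian conditionals and $c_b^\star(v)$ is affine in $v$. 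The pseudolabel $\tilde f_{t+1}(x, y) = c_\Psi^{\star\,T} R(x, y) + c_b^\star(f_t(x))$ is therefore affine in $(x, y)$, and the model update $f_{t+1}(x) = \mathbb E_{P_S}[\tilde f_{t+1}(X, Y) \mid X = x]$ is again affine in $x$ because $\mathbb E[Y \mid X]$ and $\mathbb E[R \mid X]$ are both affine in the Gaussian model.

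Parameterizing affine predictors by $(a, b) \in \mathbb R^{d} \times \mathbb R$, the previous computation yields an explicit affine update $(a_{t+1}, b_{t+1}) = T (a_t, b_t) + u$ with $T$ and $u$ determined entirely by $\mu$ and $\Sigma$. I would then verify that $f^\star(x) := \mathbb E[Y \mid \Phi(x)]$ is a fixed point: on each level set $\{f^\star = v\}$, one has $\mathbb E[R \mid f^\star = v] = \mathbb E[R \mid \Phi] = 0$ and $\mathbb E[Y \mid f^\star = v] = v$, forcing $c_\Psi^\star = 0$ and $c_b^\star(v) = v$, so the pseudolabel equals $f^\star(X)$ and the projection preserves it. Subtracting this fixed-point equation, the error $e_t := f_t - f^\star$ satisfies $e_{t+1} = T e_t$, reducing the theorem to showing the spectral radius $M(\Sigma) := \rho(T)$ is strictly less than one.

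The main obstacle is precisely this spectral bound. Rather than grinding through eigenvalues, I would use a Pythagorean energy argument: each pseudolabeling step is the $L^2(P_S)$-orthogonal projection of $Y$ onto the subspace spanned by the $\mathcal H$-basis times measurable functions of $f_t$, and each model update is the $L^2(P_S)$-orthogonal projection onto $\mathbb R^{\mathcal X}$. Applied to $e_t$, the two projections give the non-expansive bound $\|e_{t+1}\|_{L^2(P_S)} \leq \|e_t\|_{L^2(P_S)}$; strictness in every direction transverse to $f^\star$ comes from positive-definiteness of $\Sigma$, which guarantees that $R$ carries strictly non-trivial information about $\Psi$ that leaks through both projections and prevents the iteration from stalling away from $f^\star$. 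Concretely, $M(\Sigma)$ can be written as a ratio of Schur-complement quantities built from the blocks of $\Sigma$ -- terms that quantify how much of the $\Psi$-variation the pseudolabel absorbs at each step -- and $\Sigma \succ 0$ yields the strict bound $M(\Sigma) < 1$, from which geometric convergence $\|f_t - f^\star\|_{L^2(P_S)} = \mathcal O(M(\Sigma)^t)$ is immediate.
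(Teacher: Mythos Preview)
Your high-level plan is sound and mirrors the paper: Gaussianity forces every iterate to stay affine, the residual $R=\Psi-\mathbb E[\Psi\mid\Phi,Y]$ gives a clean basis for the linear grouping class $\mathcal H$, and $f^\star=\mathbb E[Y\mid\Phi]$ is indeed a fixed point. Where the proposal breaks down is the claim that the coefficient update is \emph{affine}, i.e.\ $(a_{t+1},b_{t+1})=T(a_t,b_t)+u$ with fixed $T,u$, and hence that $e_{t+1}=T e_t$.

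This is not true. The level-set regression produces $c_b^\star(v)=\mathbb E[Y\mid f_t=v]-c_\Psi^{\star\,T}\mathbb E[R\mid f_t=v]$, and the slope of $\mathbb E[Y\mid f_t=v]$ in $v$ is $\mathrm{Cov}(Y,f_t)/\mathrm{Var}(f_t)$, which is a \emph{ratio} of quadratic forms in $a_t$. Likewise $c_\Psi^\star=\mathrm{Var}(R\mid f_t)^{-1}\mathrm{Cov}(R,Y\mid f_t)$ depends on $a_t$ through $\mathrm{Var}(f_t)^{-1}$. So the map on coefficients is homogeneous of degree zero in $a_t$ composed with a fixed linear map, not affine. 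The paper confronts this directly: it writes the recursion as
\[
\begin{pmatrix}\alpha_\Phi^{(t+1)}\\ \alpha_\Psi^{(t+1)}\end{pmatrix}
= k^{(t)}\,
\begin{pmatrix}I_{d_\Phi} & \beta_\Phi^\star+\alpha_\Phi^\star\beta_y^\star\\ 0 & \alpha_\Psi^\star\beta_y^\star\end{pmatrix}
\begin{pmatrix}\alpha_\Phi^{(t)}\\ \alpha_\Psi^{(t)}\end{pmatrix},
\]
where the scalar $k^{(t)}$ (the coefficient on $f_t$ in the regression $\mathbb E[Y\mid f_t,H]$) carries all the nonlinearity. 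It then strips off the accumulated scaling $K^{(t)}=\prod_{u<t}k^{(u)}$, analyzes the \emph{fixed} matrix above, reads off its only nontrivial eigenvalue $M(\Sigma)=\beta_y^\star\alpha_\Psi^\star$, and shows via Schur complements of $\Sigma\succ 0$ that $0\le M(\Sigma)<1$. The scaling is disposed of at the end by noting that the pseudolabel is a regression of $Y$ and hence self-normalizing. Subtracting a fixed-point equation to get $e_{t+1}=T e_t$ does not bypass this; the map whose fixed point you found is nonlinear.

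The Pythagorean argument you sketch does not close the gap either. The pseudolabel step is an $L^2$-projection of $Y$, not of $e_t$, onto a subspace $\mathcal S_t$ that depends on $f_t$; this gives $\|Y-\tilde f_{t+1}\|\le\|Y-f_t\|$ (since $f_t\in\mathcal S_t$), but says nothing about $\|\tilde f_{t+1}-f^\star\|$ versus $\|f_t-f^\star\|$, because $f^\star\notin\mathcal S_t$ in general and $Y-\tilde f_{t+1}$ is not orthogonal to $\sigma(X)$-measurable functions. Non-expansiveness of the two projections does not compose into a contraction on the error. To get the rate you need either the explicit eigenvalue computation the paper does, or a linearization of the nonlinear map at $f^\star$ followed by a local stability argument---and the latter would only give local, not global, convergence from $f_0=\mathbb E[Y\mid X]$.
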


MC-Pseudolabel is also an optimization paradigm for invariance. 
Certified multicalibration in Theorem~\ref{theo:stopping_criteria} also implies certified invariance. Furthermore, MC-Pseudolabel introduces no extra hyperparameters to tradeoff between risks and robustness. 
Both certified invariance and light-weighted hyperparameters simplify model selection, which is challenging for out-distribution generalization because of unavailable validation data from target distributions~\citep{domainBed}.
MC-Pseudolabel has light-weighted optimization consisting of a series of supervised regression. It introduces an overhead to Empirical Risk Minimization by performing regression on level sets. However, the extra burden is linear regression by designing the grouping function class as linear space. Furthermore, regression on different level sets can be parallelized. Computational complexity is further analyzed in section~\ref{sec:app_complexity}. 

\section{Experiments}
\subsection{Settings}
\label{subsec:exp_settings}
We benchmark MC-Pseudolabel on real-world \emph{regression} datasets with distributional shift. We adopt two experimental settings. For the \emph{multi-environment} setting, algorithms are provided with training data collected from multiple annotated environments. Thereafter, the trained model is assessed on new environments. For the \emph{single-environment} setting, algorithms are trained on a single source distribution. There could be latent sub-populations in training data, but environment annotations are unavailable. The trained model is assessed on a target dataset with distribution shift from the source. 

\textbf{Datasets}\quad We experiment on PovertyMap~\citep{poverty} and ACSIncome~\citep{income} for the multi-environment setting, and VesselPower~\citep{shift2} for the single-environment setting. As the only regression task in WILDS~\citep{wilds}, a popular benchmark for in-the-wild distribution shift, PovertyMap performs poverty index estimation for different spatial regions by satellite images. Data are collected from both urban and rural regions, by which the environment is annotated. The test dataset also covers both environments, but is collected from different countries. The primary metric is \emph{Worst-U/R Pearson}, the worst Pearson correlation of prediction between rural and urban regions. The other two datasets are tabular, where natural concept shift ($Y|X$ shift) is more common due to existence of missing variables and hidden confounders~\citep{whyshift}. ACSIncome~\citep{income} performs personal income prediction with data collected from US Census sources across different US states. The task is converted to binary classification by an income threshold, but we take raw data 
for regression. Environments are partitioned by different occupations with similar average income. VesselPower comes from Shifts~\citep{shift,shift2}, a benchmark focusing on regression tasks with real-world distributional shift. The objective is to predict power consumption of a merchant vessel given navigation and weather data. Data are sampled under different time and wind speeds, causing distribution shift between training and test data. 

\textbf{Baselines}\quad For the multi-environment setting, baselines include ERM (Empirical Risk Minimization); methods for invariance learning which mostly adopts multi-objective optimization: IRM~\citep{IRM}, MIP~\citep{MIP}, IB-IRM~\citep{IB-IRM}, CLOvE~\citep{clove}, MRI~\citep{MRI}, REX~\citep{REX}, Fishr~\citep{fishr}; an alignment-based method from domain generalization: IDGM~\citep{IDGM}; and Group DRO~\citep{groupDRO}. Notably, CLOvE learns a calibrated predictor simultaneously on all environments, but it is optimized by multi-objective learning with a differentiable regularizer for calibration. For the singe-environment setting, baselines include reweighting based techniques: CVaR~\citep{cvar}, JTT~\citep{JTT}, Tilted-ERM~\citep{TERM}; a Distributionally Robust Optimization method $\chi^2$-DRO~\citep{x2dro}; and a data augmentation method C-Mixup~\citep{cmixup}. Other methods are not included because of specification in classification~\citep{mixup,cnc} or exposure to target distribution data during training~\citep{simple_reweight, retrainLast}. For all experiments, we train an Oracle ERM with data sampled from target distribution. 

\textbf{Implementation} \quad
We implement the predictor with MLP for ACSIncome and VesselPower, and Resnet18-MS~\citep{resnet} for PovertyMap, following WILDS' default architecture. The grouping function class is implemented according to Equation~\ref{eq:h_irm} and Equation~\ref{eq:h_jtt} for the multi-environment and single-environment setting respectively.
We follow DomainBed's protocol~\citep{domainBed} for \emph{model selection}. Specifically, we randomly sample 20 sets of hyperparameters for each method, containing both the training hyperparameters and extra hyperparameters from the robust learning algorithm. We select the best model across hyperparameters based on three model selection criteria, including in-distribution validation on the average of training data, worst-environment validation with the worst performance across training environments, and oracle validation on target data. Oracle validation is not recommended by DomainBed, which suggests limited numbers of access to target data. The entire run is repeated with different seeds for three times to measure standard errors of performances. Specifically for PovertyMap, an OOD validation set is provided for oracle validation. And we perform 5-fold cross validation instead of three repeated experiments, following WILDS' setup.
\subsection{Results}
\begin{table}[]
\caption{Results on multi-environment datasets, evaluated on test data using three model selection criteria. ID: validation with averaged performance on training data. Worst: validation with the worst performance across training environments. Oracle: validation with performance on sampled test set.}
\label{table:multi-env}
\centering
\resizebox{0.9\columnwidth}{!}{%
\begin{tabular}{@{}lcccccc@{}}
\toprule
        & \multicolumn{3}{c}{ACSIncome: RMSE $\downarrow$}                                      & \multicolumn{3}{c}{PovertyMap: Worst-U/R Pearson $\uparrow$}             \\ \cmidrule(l){2-7} 
 Method             & ID                   & Worst               & Oracle               & ID                 & Worst     & Oracle    \\ \midrule
ERM            &  $0.487${\scriptsize$\pm0.009$}          & $0.487${\scriptsize$\pm0.009$}          & $0.452${\scriptsize$\pm0.012$}          & $0.48${\scriptsize$\pm0.06$}          & $0.48${\scriptsize$\pm0.06$} & $0.49${\scriptsize$\pm0.07$} \\
IRM            & $0.466${\scriptsize$\pm0.002$}          & $0.466${\scriptsize$\pm0.002$}          & $0.465${\scriptsize$\pm0.002$}          & $0.38${\scriptsize$\pm0.07$}          & $0.39${\scriptsize$\pm0.06$} & $0.45${\scriptsize$\pm0.08$} \\
MIP            & $0.457${\scriptsize$\pm0.008$}          & $0.454${\scriptsize$\pm0.012$}          & $0.454${\scriptsize$\pm0.012$}          & $0.40${\scriptsize$\pm0.09$}          & $0.39${\scriptsize$\pm0.10$} & $0.43${\scriptsize$\pm0.08$} \\
IB-IRM         & $0.463${\scriptsize$\pm0.003$}          & $0.463${\scriptsize$\pm0.003$}          & $0.438${\scriptsize$\pm0.009$}          & $0.39${\scriptsize$\pm0.07$}          & $0.37${\scriptsize$\pm0.05$} & $0.43${\scriptsize$\pm0.06$} \\
CLOvE          & $0.455${\scriptsize$\pm0.005$}          & $0.454${\scriptsize$\pm0.002$}          & $0.450${\scriptsize$\pm0.005$}          & $0.46${\scriptsize$\pm0.09$}          & $0.42${\scriptsize$\pm0.13$} & $\mathbf{0.51}${\scriptsize$\pm0.06$} \\
MRI            & $0.458${\scriptsize$\pm0.011$}          & $0.458${\scriptsize$\pm0.011$}          & $0.455${\scriptsize$\pm0.013$}          & $0.47${\scriptsize$\pm0.10$}          & $0.46${\scriptsize$\pm0.08$} & $0.49${\scriptsize$\pm0.07$} \\
REX            & $0.466${\scriptsize$\pm0.009$}          & $0.464${\scriptsize$\pm0.009$}          & $0.458${\scriptsize$\pm0.003$}          & $0.43${\scriptsize$\pm0.09$}          & $0.42${\scriptsize$\pm0.09$} & $0.45${\scriptsize$\pm0.09$} \\
Fishr          & $0.458${\scriptsize$\pm0.006$}          & $0.455${\scriptsize$\pm0.012$}          & $0.450${\scriptsize$\pm0.008$}          & $0.42${\scriptsize$\pm0.09$}          & $0.41${\scriptsize$\pm0.09$} & $0.43${\scriptsize$\pm0.08$} \\
IDGM           & $1.843${\scriptsize$\pm0.018$}          & $1.843${\scriptsize$\pm0.018$}          & $1.843${\scriptsize$\pm0.018$}          & $0.02${\scriptsize$\pm0.07$}          & $0.01${\scriptsize$\pm0.15$} & $0.13${\scriptsize$\pm0.14$} \\
GroupDRO       & $0.481${\scriptsize$\pm0.035$}          & $0.449${\scriptsize$\pm0.017$}          & $0.433${\scriptsize$\pm0.013$}          & $0.38${\scriptsize$\pm0.15$}          & $0.37${\scriptsize$\pm0.16$} & $0.42${\scriptsize$\pm0.12$} \\
MC-Pseudolabel & $\mathbf{0.428}${\scriptsize$\pm0.009$} & $\mathbf{0.425}${\scriptsize$\pm0.012$} & $\mathbf{0.411}${\scriptsize$\pm0.011$} & $\mathbf{0.50}${\scriptsize$\pm0.07$} & $\mathbf{0.50}${\scriptsize$\pm0.07$} & $\mathbf{0.51}${\scriptsize$\pm0.08$} \\ \midrule
Oracle ERM        &                      &                      & $0.332${\scriptsize$\pm0.001$}          &                    &           & $0.71${\scriptsize$\pm0.05$} \\ \bottomrule
\end{tabular}%
}
\end{table}
\begin{figure}[]
\vskip -0.16in
\begin{center}
\centerline{\includegraphics[width=0.9\columnwidth]{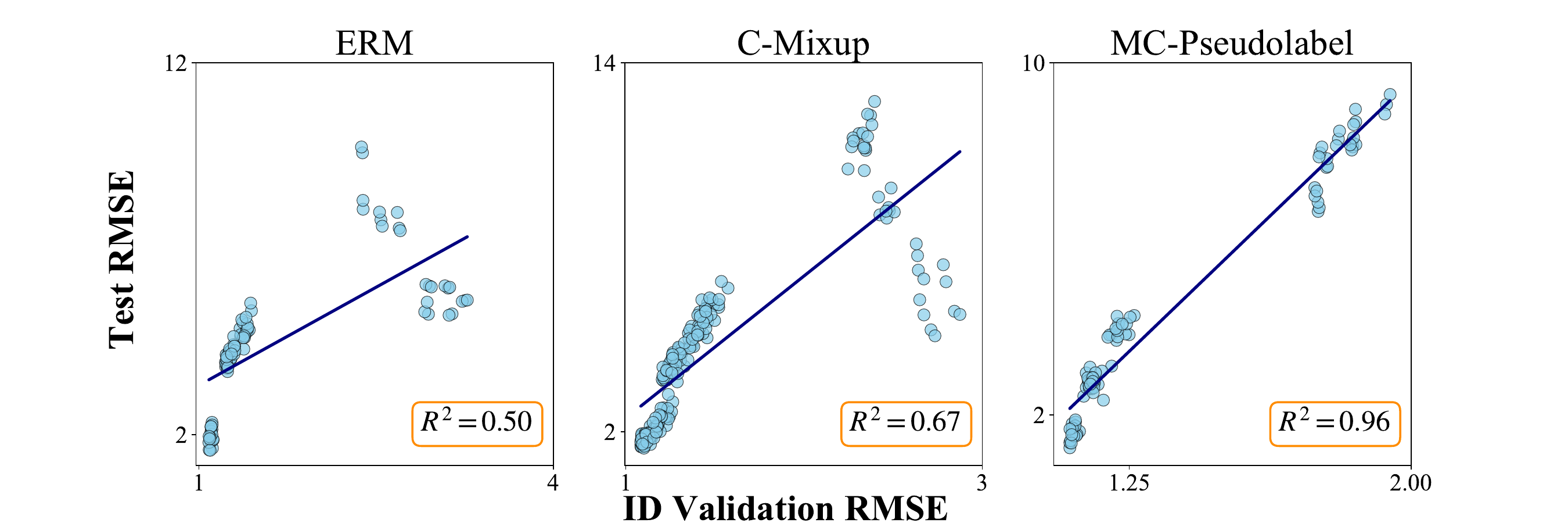}}
\label{fig:ontheline}
\caption{Accuracy-on-the-line beyond covariate shift: correlation between models' in-distribution and out-of-distribution risks on VesselPower.}
\end{center}
\vskip -0.4in
\end{figure}
Results are shown in Table~\ref{table:multi-env} for multi-environment settings and Table~\ref{table:vpower} for single-environment settings. 
MC-Pseudolabel achieves superior performance in all datasets with in-distribution and worst-environment validation which does not violate test data. For oracle validation, MC-Pseudolabel 
achieves  comparable performances to the best method. 
For example, CLOvE, which also learns invariance by calibration, achieves best performance under oracle validation in PovertyMap, but it sharply degrades when target validation data is unavailable. It's because CLOvE tunes its regularizer's coefficient to tradeoff with ERM risk, whose optimal value depends on the target distribution shift.
\begin{wraptable}{r}{0.38\textwidth}
\vskip 0in
\caption{Single-environment results.}
\label{table:vpower}
\begin{center}
\resizebox{0.38\columnwidth}{!}{%
\begin{tabular}{@{}lcc@{}}
\toprule
        & \multicolumn{2}{c}{VesselPower: RMSE $\downarrow$} \\ \cmidrule(l){2-3} 
        
Method     & ID        & Oracle       \\ \midrule
ERM            & $1.92${\scriptsize$\pm0.23$}          &   $1.86${\scriptsize$\pm0.19$}           \\
CVaR           & $1.69${\scriptsize$\pm0.18$}         &    $\mathbf{1.49}${\scriptsize$\pm0.10$}         \\
JTT            & $1.75${\scriptsize$\pm0.27$}         &    $1.58${\scriptsize$\pm0.15$}          \\
Tilted-ERM     & $1.72${\scriptsize$\pm0.21$}          &   $1.61${\scriptsize$\pm0.12$}           \\
$\chi^2$-DRO            & $1.69${\scriptsize$\pm0.20$}         & $1.56${\scriptsize$\pm0.06$}             \\
C-Mixup        & $1.72${\scriptsize$\pm0.15$}          &   $1.56${\scriptsize$\pm0.08$}          \\
MC-Pseudolabel & $\mathbf{1.61}${\scriptsize$\pm0.20$}          &   $1.52${\scriptsize$\pm0.16$}           \\ \midrule
Oracle ERM        &          &   $1.18${\scriptsize$\pm0.01$}           \\ \bottomrule
\end{tabular}%
}
\end{center}
\vskip -0.2in
\end{wraptable}
In contrast, MC-Pseudolabel exhibits an advantage with in-distribution model selection. This is further supported by Figure~\ref{fig:ontheline}, which shows that MC-Pseudolabel's out-of-distribution errors strongly correlates with in-distribution errors. The experiment spans across different hyperparameters and seeds with the same model architecture on VesselPower. The phenomenon, known as accuracy-on-the-line~\citep{accuracy_on_line}, is well known for a general class of models under covariate shift. However, \citet{whyshift} shows accuracy-on-the-line does not exist under concept shift, which is the case for ERM and C-Mixup. This introduces significant challenge for model selection. However, MC-Pseudolabel recovers the accuracy to the line.

\section{Conclusion}
To conclude, we establish a new optimization framework for out-of-distribution generalization through extended multicalibration with joint grouping functions. While the current algorithm focuses on regression, there is potential for future work to extend our approach to general forms of tasks, particularly in terms of classification.

\clearpage
\bibliography{example_paper}

\begin{thebibliography}{46}
\providecommand{\natexlab}[1]{#1}
\providecommand{\url}[1]{\texttt{#1}}
\expandafter\ifx\csname urlstyle\endcsname\relax
  \providecommand{\doi}[1]{doi: #1}\else
  \providecommand{\doi}{doi: \begingroup \urlstyle{rm}\Url}\fi

\bibitem[Ahuja et~al.(2021)Ahuja, Caballero, Zhang, Gagnon{-}Audet, Bengio,
  Mitliagkas, and Rish]{IB-IRM}
Kartik Ahuja, Ethan Caballero, Dinghuai Zhang, Jean{-}Christophe
  Gagnon{-}Audet, Yoshua Bengio, Ioannis Mitliagkas, and Irina Rish.
\newblock Invariance principle meets information bottleneck for
  out-of-distribution generalization.
\newblock In Marc'Aurelio Ranzato, Alina Beygelzimer, Yann~N. Dauphin, Percy
  Liang, and Jennifer~Wortman Vaughan, editors, \emph{Advances in Neural
  Information Processing Systems 34: Annual Conference on Neural Information
  Processing Systems 2021, NeurIPS 2021, December 6-14, 2021, virtual}, pages
  3438--3450, 2021.
\newblock URL
  \url{https://proceedings.neurips.cc/paper/2021/hash/1c336b8080f82bcc2cd2499b4c57261d-Abstract.html}.

\bibitem[Arjovsky et~al.(2019)Arjovsky, Bottou, Gulrajani, and
  Lopez{-}Paz]{IRM}
Mart{\'{\i}}n Arjovsky, L{\'{e}}on Bottou, Ishaan Gulrajani, and David
  Lopez{-}Paz.
\newblock Invariant risk minimization.
\newblock \emph{CoRR}, abs/1907.02893, 2019.
\newblock URL \url{http://arxiv.org/abs/1907.02893}.

\bibitem[Blasiok et~al.(2023{\natexlab{a}})Blasiok, Gopalan, Hu, and
  Nakkiran]{measureOfCalibration}
Jaroslaw Blasiok, Parikshit Gopalan, Lunjia Hu, and Preetum Nakkiran.
\newblock A unifying theory of distance from calibration.
\newblock In Barna Saha and Rocco~A. Servedio, editors, \emph{Proceedings of
  the 55th Annual {ACM} Symposium on Theory of Computing, {STOC} 2023, Orlando,
  FL, USA, June 20-23, 2023}, pages 1727--1740. {ACM}, 2023{\natexlab{a}}.
\newblock \doi{10.1145/3564246.3585182}.
\newblock URL \url{https://doi.org/10.1145/3564246.3585182}.

\bibitem[Blasiok et~al.(2023{\natexlab{b}})Blasiok, Gopalan, Hu, and
  Nakkiran]{postprocess}
Jaroslaw Blasiok, Parikshit Gopalan, Lunjia Hu, and Preetum Nakkiran.
\newblock When does optimizing a proper loss yield calibration?
\newblock In Alice Oh, Tristan Naumann, Amir Globerson, Kate Saenko, Moritz
  Hardt, and Sergey Levine, editors, \emph{Advances in Neural Information
  Processing Systems 36: Annual Conference on Neural Information Processing
  Systems 2023, NeurIPS 2023, New Orleans, LA, USA, December 10 - 16, 2023},
  2023{\natexlab{b}}.
\newblock URL
  \url{http://papers.nips.cc/paper\_files/paper/2023/hash/e4165c96702bac5f4962b70f3cf2f136-Abstract-Conference.html}.

\bibitem[Creager et~al.(2021)Creager, Jacobsen, and Zemel]{EIIL}
Elliot Creager, J{\"{o}}rn{-}Henrik Jacobsen, and Richard~S. Zemel.
\newblock Environment inference for invariant learning.
\newblock In Marina Meila and Tong Zhang, editors, \emph{Proceedings of the
  38th International Conference on Machine Learning, {ICML} 2021, 18-24 July
  2021, Virtual Event}, volume 139 of \emph{Proceedings of Machine Learning
  Research}, pages 2189--2200. {PMLR}, 2021.
\newblock URL \url{http://proceedings.mlr.press/v139/creager21a.html}.

\bibitem[Deng et~al.(2023)Deng, Dwork, and Zhang]{happy}
Zhun Deng, Cynthia Dwork, and Linjun Zhang.
\newblock Happymap : {A} generalized multicalibration method.
\newblock In Yael~Tauman Kalai, editor, \emph{14th Innovations in Theoretical
  Computer Science Conference, {ITCS} 2023, January 10-13, 2023, MIT,
  Cambridge, Massachusetts, {USA}}, volume 251 of \emph{LIPIcs}, pages
  41:1--41:23. Schloss Dagstuhl - Leibniz-Zentrum f{\"{u}}r Informatik, 2023.
\newblock \doi{10.4230/LIPICS.ITCS.2023.41}.
\newblock URL \url{https://doi.org/10.4230/LIPIcs.ITCS.2023.41}.

\bibitem[Ding et~al.(2021)Ding, Hardt, Miller, and Schmidt]{income}
Frances Ding, Moritz Hardt, John Miller, and Ludwig Schmidt.
\newblock Retiring adult: New datasets for fair machine learning.
\newblock In Marc'Aurelio Ranzato, Alina Beygelzimer, Yann~N. Dauphin, Percy
  Liang, and Jennifer~Wortman Vaughan, editors, \emph{Advances in Neural
  Information Processing Systems 34: Annual Conference on Neural Information
  Processing Systems 2021, NeurIPS 2021, December 6-14, 2021, virtual}, pages
  6478--6490, 2021.
\newblock URL
  \url{https://proceedings.neurips.cc/paper/2021/hash/32e54441e6382a7fbacbbbaf3c450059-Abstract.html}.

\bibitem[Duchi and Namkoong(2019)]{x2dro}
John~C. Duchi and Hongseok Namkoong.
\newblock Variance-based regularization with convex objectives.
\newblock \emph{J. Mach. Learn. Res.}, 20:\penalty0 68:1--68:55, 2019.
\newblock URL \url{http://jmlr.org/papers/v20/17-750.html}.

\bibitem[Duchi and Namkoong(2021)]{fdro_opt}
John~C Duchi and Hongseok Namkoong.
\newblock Learning models with uniform performance via distributionally robust
  optimization.
\newblock \emph{The Annals of Statistics}, 49\penalty0 (3):\penalty0
  1378--1406, 2021.

\bibitem[Duchi et~al.(2019)Duchi, Hashimoto, and Namkoong]{alpha_dro}
John~C Duchi, Tatsunori Hashimoto, and Hongseok Namkoong.
\newblock Distributionally robust losses against mixture covariate shifts.
\newblock \emph{Under review}, 2\penalty0 (1), 2019.

\bibitem[Duchi et~al.(2021)Duchi, Glynn, and Namkoong]{fdro}
John~C. Duchi, Peter~W. Glynn, and Hongseok Namkoong.
\newblock Statistics of robust optimization: {A} generalized empirical
  likelihood approach.
\newblock \emph{Math. Oper. Res.}, 46\penalty0 (3):\penalty0 946--969, 2021.
\newblock \doi{10.1287/MOOR.2020.1085}.
\newblock URL \url{https://doi.org/10.1287/moor.2020.1085}.

\bibitem[Finn et~al.(2017)Finn, Abbeel, and Levine]{MAML}
Chelsea Finn, Pieter Abbeel, and Sergey Levine.
\newblock Model-agnostic meta-learning for fast adaptation of deep networks.
\newblock In Doina Precup and Yee~Whye Teh, editors, \emph{Proceedings of the
  34th International Conference on Machine Learning, {ICML} 2017, Sydney, NSW,
  Australia, 6-11 August 2017}, volume~70 of \emph{Proceedings of Machine
  Learning Research}, pages 1126--1135. {PMLR}, 2017.
\newblock URL \url{http://proceedings.mlr.press/v70/finn17a.html}.

\bibitem[Globus{-}Harris et~al.(2023)Globus{-}Harris, Harrison, Kearns, Roth,
  and Sorrell]{MCBoost}
Ira Globus{-}Harris, Declan Harrison, Michael Kearns, Aaron Roth, and Jessica
  Sorrell.
\newblock Multicalibration as boosting for regression.
\newblock In Andreas Krause, Emma Brunskill, Kyunghyun Cho, Barbara Engelhardt,
  Sivan Sabato, and Jonathan Scarlett, editors, \emph{International Conference
  on Machine Learning, {ICML} 2023, 23-29 July 2023, Honolulu, Hawaii, {USA}},
  volume 202 of \emph{Proceedings of Machine Learning Research}, pages
  11459--11492. {PMLR}, 2023.
\newblock URL \url{https://proceedings.mlr.press/v202/globus-harris23a.html}.

\bibitem[Gopalan et~al.(2022)Gopalan, Kalai, Reingold, Sharan, and
  Wieder]{omipredictor}
Parikshit Gopalan, Adam~Tauman Kalai, Omer Reingold, Vatsal Sharan, and Udi
  Wieder.
\newblock Omnipredictors.
\newblock In Mark Braverman, editor, \emph{13th Innovations in Theoretical
  Computer Science Conference, {ITCS} 2022, January 31 - February 3, 2022,
  Berkeley, CA, {USA}}, volume 215 of \emph{LIPIcs}, pages 79:1--79:21. Schloss
  Dagstuhl - Leibniz-Zentrum f{\"{u}}r Informatik, 2022.
\newblock \doi{10.4230/LIPICS.ITCS.2022.79}.
\newblock URL \url{https://doi.org/10.4230/LIPIcs.ITCS.2022.79}.

\bibitem[Gulrajani and Lopez{-}Paz(2021)]{domainBed}
Ishaan Gulrajani and David Lopez{-}Paz.
\newblock In search of lost domain generalization.
\newblock In \emph{9th International Conference on Learning Representations,
  {ICLR} 2021, Virtual Event, Austria, May 3-7, 2021}. OpenReview.net, 2021.
\newblock URL \url{https://openreview.net/forum?id=lQdXeXDoWtI}.

\bibitem[He et~al.(2016)He, Zhang, Ren, and Sun]{resnet}
Kaiming He, Xiangyu Zhang, Shaoqing Ren, and Jian Sun.
\newblock Deep residual learning for image recognition.
\newblock In \emph{2016 {IEEE} Conference on Computer Vision and Pattern
  Recognition, {CVPR} 2016, Las Vegas, NV, USA, June 27-30, 2016}, pages
  770--778. {IEEE} Computer Society, 2016.
\newblock \doi{10.1109/CVPR.2016.90}.
\newblock URL \url{https://doi.org/10.1109/CVPR.2016.90}.

\bibitem[H{\'{e}}bert{-}Johnson et~al.(2018)H{\'{e}}bert{-}Johnson, Kim,
  Reingold, and Rothblum]{multicalibration}
{\'{U}}rsula H{\'{e}}bert{-}Johnson, Michael~P. Kim, Omer Reingold, and Guy~N.
  Rothblum.
\newblock Multicalibration: Calibration for the (computationally-identifiable)
  masses.
\newblock In Jennifer~G. Dy and Andreas Krause, editors, \emph{Proceedings of
  the 35th International Conference on Machine Learning, {ICML} 2018,
  Stockholmsm{\"{a}}ssan, Stockholm, Sweden, July 10-15, 2018}, volume~80 of
  \emph{Proceedings of Machine Learning Research}, pages 1944--1953. {PMLR},
  2018.
\newblock URL \url{http://proceedings.mlr.press/v80/hebert-johnson18a.html}.

\bibitem[Huh and Baidya(2022)]{MRI}
Dongsung Huh and Avinash Baidya.
\newblock The missing invariance principle found - the reciprocal twin of
  invariant risk minimization.
\newblock In Sanmi Koyejo, S.~Mohamed, A.~Agarwal, Danielle Belgrave, K.~Cho,
  and A.~Oh, editors, \emph{Advances in Neural Information Processing Systems
  35: Annual Conference on Neural Information Processing Systems 2022, NeurIPS
  2022, New Orleans, LA, USA, November 28 - December 9, 2022}, 2022.
\newblock URL
  \url{http://papers.nips.cc/paper\_files/paper/2022/hash/91b482312a0845ed86e244adbd9935e4-Abstract-Conference.html}.

\bibitem[Idrissi et~al.(2022)Idrissi, Arjovsky, Pezeshki, and
  Lopez{-}Paz]{simple_reweight}
Badr~Youbi Idrissi, Mart{\'{\i}}n Arjovsky, Mohammad Pezeshki, and David
  Lopez{-}Paz.
\newblock Simple data balancing achieves competitive worst-group-accuracy.
\newblock In Bernhard Sch{\"{o}}lkopf, Caroline Uhler, and Kun Zhang, editors,
  \emph{1st Conference on Causal Learning and Reasoning, CLeaR 2022, Sequoia
  Conference Center, Eureka, CA, USA, 11-13 April, 2022}, volume 177 of
  \emph{Proceedings of Machine Learning Research}, pages 336--351. {PMLR},
  2022.
\newblock URL \url{https://proceedings.mlr.press/v177/idrissi22a.html}.

\bibitem[Kim et~al.(2019)Kim, Ghorbani, and Zou]{multiaccuracy}
Michael~P. Kim, Amirata Ghorbani, and James~Y. Zou.
\newblock Multiaccuracy: Black-box post-processing for fairness in
  classification.
\newblock In Vincent Conitzer, Gillian~K. Hadfield, and Shannon Vallor,
  editors, \emph{Proceedings of the 2019 {AAAI/ACM} Conference on AI, Ethics,
  and Society, {AIES} 2019, Honolulu, HI, USA, January 27-28, 2019}, pages
  247--254. {ACM}, 2019.
\newblock \doi{10.1145/3306618.3314287}.
\newblock URL \url{https://doi.org/10.1145/3306618.3314287}.

\bibitem[Kim et~al.(2022)Kim, Kern, Goldwasser, Kreuter, and
  Reingold]{UniversalAdaptability}
Michael~P Kim, Christoph Kern, Shafi Goldwasser, Frauke Kreuter, and Omer
  Reingold.
\newblock Universal adaptability: Target-independent inference that competes
  with propensity scoring.
\newblock \emph{Proceedings of the National Academy of Sciences}, 119\penalty0
  (4):\penalty0 e2108097119, 2022.

\bibitem[Kirichenko et~al.(2023)Kirichenko, Izmailov, and Wilson]{retrainLast}
Polina Kirichenko, Pavel Izmailov, and Andrew~Gordon Wilson.
\newblock Last layer re-training is sufficient for robustness to spurious
  correlations.
\newblock In \emph{The Eleventh International Conference on Learning
  Representations, {ICLR} 2023, Kigali, Rwanda, May 1-5, 2023}. OpenReview.net,
  2023.
\newblock URL \url{https://openreview.net/pdf?id=Zb6c8A-Fghk}.

\bibitem[Koh et~al.(2021)Koh, Sagawa, Marklund, Xie, Zhang, Balsubramani, Hu,
  Yasunaga, Phillips, Gao, Lee, David, Stavness, Guo, Earnshaw, Haque, Beery,
  Leskovec, Kundaje, Pierson, Levine, Finn, and Liang]{wilds}
Pang~Wei Koh, Shiori Sagawa, Henrik Marklund, Sang~Michael Xie, Marvin Zhang,
  Akshay Balsubramani, Weihua Hu, Michihiro Yasunaga, Richard~Lanas Phillips,
  Irena Gao, Tony Lee, Etienne David, Ian Stavness, Wei Guo, Berton Earnshaw,
  Imran~S. Haque, Sara~M. Beery, Jure Leskovec, Anshul Kundaje, Emma Pierson,
  Sergey Levine, Chelsea Finn, and Percy Liang.
\newblock {WILDS:} {A} benchmark of in-the-wild distribution shifts.
\newblock In Marina Meila and Tong Zhang, editors, \emph{Proceedings of the
  38th International Conference on Machine Learning, {ICML} 2021, 18-24 July
  2021, Virtual Event}, volume 139 of \emph{Proceedings of Machine Learning
  Research}, pages 5637--5664. {PMLR}, 2021.
\newblock URL \url{http://proceedings.mlr.press/v139/koh21a.html}.

\bibitem[Koyama and Yamaguchi(2020)]{MIP}
Masanori Koyama and Shoichiro Yamaguchi.
\newblock When is invariance useful in an out-of-distribution generalization
  problem?
\newblock \emph{arXiv preprint arXiv:2008.01883}, 2020.

\bibitem[Krueger et~al.(2021)Krueger, Caballero, Jacobsen, Zhang, Binas, Zhang,
  Priol, and Courville]{REX}
David Krueger, Ethan Caballero, J{\"{o}}rn{-}Henrik Jacobsen, Amy Zhang,
  Jonathan Binas, Dinghuai Zhang, R{\'{e}}mi~Le Priol, and Aaron~C. Courville.
\newblock Out-of-distribution generalization via risk extrapolation (rex).
\newblock In Marina Meila and Tong Zhang, editors, \emph{Proceedings of the
  38th International Conference on Machine Learning, {ICML} 2021, 18-24 July
  2021, Virtual Event}, volume 139 of \emph{Proceedings of Machine Learning
  Research}, pages 5815--5826. {PMLR}, 2021.
\newblock URL \url{http://proceedings.mlr.press/v139/krueger21a.html}.

\bibitem[Levy et~al.(2020)Levy, Carmon, Duchi, and Sidford]{cvar}
Daniel Levy, Yair Carmon, John~C. Duchi, and Aaron Sidford.
\newblock Large-scale methods for distributionally robust optimization.
\newblock In Hugo Larochelle, Marc'Aurelio Ranzato, Raia Hadsell,
  Maria{-}Florina Balcan, and Hsuan{-}Tien Lin, editors, \emph{Advances in
  Neural Information Processing Systems 33: Annual Conference on Neural
  Information Processing Systems 2020, NeurIPS 2020, December 6-12, 2020,
  virtual}, 2020.
\newblock URL
  \url{https://proceedings.neurips.cc/paper/2020/hash/64986d86a17424eeac96b08a6d519059-Abstract.html}.

\bibitem[Li et~al.(2021)Li, Beirami, Sanjabi, and Smith]{TERM}
Tian Li, Ahmad Beirami, Maziar Sanjabi, and Virginia Smith.
\newblock Tilted empirical risk minimization.
\newblock In \emph{9th International Conference on Learning Representations,
  {ICLR} 2021, Virtual Event, Austria, May 3-7, 2021}. OpenReview.net, 2021.
\newblock URL \url{https://openreview.net/forum?id=K5YasWXZT3O}.

\bibitem[Lin et~al.(2020)Lin, Goyal, Girshick, He, and Doll{\'{a}}r]{focalLoss}
Tsung{-}Yi Lin, Priya Goyal, Ross~B. Girshick, Kaiming He, and Piotr
  Doll{\'{a}}r.
\newblock Focal loss for dense object detection.
\newblock \emph{{IEEE} Trans. Pattern Anal. Mach. Intell.}, 42\penalty0
  (2):\penalty0 318--327, 2020.
\newblock \doi{10.1109/TPAMI.2018.2858826}.
\newblock URL \url{https://doi.org/10.1109/TPAMI.2018.2858826}.

\bibitem[Liu et~al.(2021)Liu, Haghgoo, Chen, Raghunathan, Koh, Sagawa, Liang,
  and Finn]{JTT}
Evan~Zheran Liu, Behzad Haghgoo, Annie~S. Chen, Aditi Raghunathan, Pang~Wei
  Koh, Shiori Sagawa, Percy Liang, and Chelsea Finn.
\newblock Just train twice: Improving group robustness without training group
  information.
\newblock In Marina Meila and Tong Zhang, editors, \emph{Proceedings of the
  38th International Conference on Machine Learning, {ICML} 2021, 18-24 July
  2021, Virtual Event}, volume 139 of \emph{Proceedings of Machine Learning
  Research}, pages 6781--6792. {PMLR}, 2021.
\newblock URL \url{http://proceedings.mlr.press/v139/liu21f.html}.

\bibitem[Liu et~al.(2022)Liu, Wu, Li, and Cui]{gdro}
Jiashuo Liu, Jiayun Wu, Bo~Li, and Peng Cui.
\newblock Distributionally robust optimization with data geometry.
\newblock In Sanmi Koyejo, S.~Mohamed, A.~Agarwal, Danielle Belgrave, K.~Cho,
  and A.~Oh, editors, \emph{Advances in Neural Information Processing Systems
  35: Annual Conference on Neural Information Processing Systems 2022, NeurIPS
  2022, New Orleans, LA, USA, November 28 - December 9, 2022}, 2022.
\newblock URL
  \url{http://papers.nips.cc/paper\_files/paper/2022/hash/da535999561b932f56efdd559498282e-Abstract-Conference.html}.

\bibitem[Liu et~al.(2023)Liu, Wang, Cui, and Namkoong]{whyshift}
Jiashuo Liu, Tianyu Wang, Peng Cui, and Hongseok Namkoong.
\newblock On the need for a language describing distribution shifts:
  Illustrations on tabular datasets.
\newblock In \emph{Thirty-seventh Conference on Neural Information Processing
  Systems Datasets and Benchmarks Track}, 2023.

\bibitem[Malinin et~al.(2021)Malinin, Band, Gal, Gales, Ganshin, Chesnokov,
  Noskov, Ploskonosov, Prokhorenkova, Provilkov, Raina, Raina, Roginskiy,
  Shmatova, Tigas, and Yangel]{shift}
Andrey Malinin, Neil Band, Yarin Gal, Mark J.~F. Gales, Alexander Ganshin,
  German Chesnokov, Alexey Noskov, Andrey Ploskonosov, Liudmila Prokhorenkova,
  Ivan Provilkov, Vatsal Raina, Vyas Raina, Denis Roginskiy, Mariya Shmatova,
  Panagiotis Tigas, and Boris Yangel.
\newblock Shifts: {A} dataset of real distributional shift across multiple
  large-scale tasks.
\newblock In Joaquin Vanschoren and Sai{-}Kit Yeung, editors, \emph{Proceedings
  of the Neural Information Processing Systems Track on Datasets and Benchmarks
  1, NeurIPS Datasets and Benchmarks 2021, December 2021, virtual}, 2021.
\newblock URL
  \url{https://datasets-benchmarks-proceedings.neurips.cc/paper/2021/hash/ad61ab143223efbc24c7d2583be69251-Abstract-round2.html}.

\bibitem[Malinin et~al.(2022)Malinin, Athanasopoulos, Barakovic, Cuadra, Gales,
  Granziera, Graziani, Kartashev, Kyriakopoulos, Lu, Molchanova, Nikitakis,
  Raina, Rosa, Sivena, Tsarsitalidis, Tsompopoulou, and Volf]{shift2}
Andrey Malinin, Andreas Athanasopoulos, Muhamed Barakovic, Meritxell~Bach
  Cuadra, Mark J.~F. Gales, Cristina Granziera, Mara Graziani, Nikolay
  Kartashev, Konstantinos Kyriakopoulos, Po{-}Jui Lu, Nataliia Molchanova,
  Antonis Nikitakis, Vatsal Raina, Francesco~La Rosa, Eli Sivena, Vasileios
  Tsarsitalidis, Efi Tsompopoulou, and Elena Volf.
\newblock Shifts 2.0: Extending the dataset of real distributional shifts.
\newblock \emph{CoRR}, abs/2206.15407, 2022.
\newblock \doi{10.48550/ARXIV.2206.15407}.
\newblock URL \url{https://doi.org/10.48550/arXiv.2206.15407}.

\bibitem[Miller et~al.(2021)Miller, Taori, Raghunathan, Sagawa, Koh, Shankar,
  Liang, Carmon, and Schmidt]{accuracy_on_line}
John Miller, Rohan Taori, Aditi Raghunathan, Shiori Sagawa, Pang~Wei Koh,
  Vaishaal Shankar, Percy Liang, Yair Carmon, and Ludwig Schmidt.
\newblock Accuracy on the line: on the strong correlation between
  out-of-distribution and in-distribution generalization.
\newblock In Marina Meila and Tong Zhang, editors, \emph{Proceedings of the
  38th International Conference on Machine Learning, {ICML} 2021, 18-24 July
  2021, Virtual Event}, volume 139 of \emph{Proceedings of Machine Learning
  Research}, pages 7721--7735. {PMLR}, 2021.
\newblock URL \url{http://proceedings.mlr.press/v139/miller21b.html}.

\bibitem[Paszke et~al.(2019)Paszke, Gross, Massa, Lerer, Bradbury, Chanan,
  Killeen, Lin, Gimelshein, Antiga, et~al.]{pytorch}
Adam Paszke, Sam Gross, Francisco Massa, Adam Lerer, James Bradbury, Gregory
  Chanan, Trevor Killeen, Zeming Lin, Natalia Gimelshein, Luca Antiga, et~al.
\newblock Pytorch: An imperative style, high-performance deep learning library.
\newblock \emph{Advances in neural information processing systems}, 32, 2019.

\bibitem[Ram{\'{e}} et~al.(2022)Ram{\'{e}}, Dancette, and Cord]{fishr}
Alexandre Ram{\'{e}}, Corentin Dancette, and Matthieu Cord.
\newblock Fishr: Invariant gradient variances for out-of-distribution
  generalization.
\newblock In Kamalika Chaudhuri, Stefanie Jegelka, Le~Song, Csaba
  Szepesv{\'{a}}ri, Gang Niu, and Sivan Sabato, editors, \emph{International
  Conference on Machine Learning, {ICML} 2022, 17-23 July 2022, Baltimore,
  Maryland, {USA}}, volume 162 of \emph{Proceedings of Machine Learning
  Research}, pages 18347--18377. {PMLR}, 2022.
\newblock URL \url{https://proceedings.mlr.press/v162/rame22a.html}.

\bibitem[Roth(2022)]{uncertain}
Aaron Roth.
\newblock Uncertain: Modern topics in uncertainty estimation, 2022.

\bibitem[Sagawa et~al.(2019)Sagawa, Koh, Hashimoto, and Liang]{groupDRO}
Shiori Sagawa, Pang~Wei Koh, Tatsunori~B Hashimoto, and Percy Liang.
\newblock Distributionally robust neural networks.
\newblock In \emph{International Conference on Learning Representations}, 2019.

\bibitem[Shi et~al.(2021)Shi, Seely, Torr, Siddharth, Hannun, Usunier, and
  Synnaeve]{IDGM}
Yuge Shi, Jeffrey Seely, Philip~HS Torr, N~Siddharth, Awni Hannun, Nicolas
  Usunier, and Gabriel Synnaeve.
\newblock Gradient matching for domain generalization.
\newblock \emph{arXiv preprint arXiv:2104.09937}, 2021.

\bibitem[Sinha et~al.(2018)Sinha, Namkoong, and Duchi]{wdro}
Aman Sinha, Hongseok Namkoong, and John~C. Duchi.
\newblock Certifying some distributional robustness with principled adversarial
  training.
\newblock In \emph{6th International Conference on Learning Representations,
  {ICLR} 2018, Vancouver, BC, Canada, April 30 - May 3, 2018, Conference Track
  Proceedings}. OpenReview.net, 2018.
\newblock URL \url{https://openreview.net/forum?id=Hk6kPgZA-}.

\bibitem[Wald et~al.(2021)Wald, Feder, Greenfeld, and Shalit]{clove}
Yoav Wald, Amir Feder, Daniel Greenfeld, and Uri Shalit.
\newblock On calibration and out-of-domain generalization.
\newblock In Marc'Aurelio Ranzato, Alina Beygelzimer, Yann~N. Dauphin, Percy
  Liang, and Jennifer~Wortman Vaughan, editors, \emph{Advances in Neural
  Information Processing Systems 34: Annual Conference on Neural Information
  Processing Systems 2021, NeurIPS 2021, December 6-14, 2021, virtual}, pages
  2215--2227, 2021.
\newblock URL
  \url{https://proceedings.neurips.cc/paper/2021/hash/118bd558033a1016fcc82560c65cca5f-Abstract.html}.

\bibitem[Wang et~al.(2022)Wang, Li, and Zhao]{gda}
Haoxiang Wang, Bo~Li, and Han Zhao.
\newblock Understanding gradual domain adaptation: Improved analysis, optimal
  path and beyond.
\newblock In Kamalika Chaudhuri, Stefanie Jegelka, Le~Song, Csaba
  Szepesv{\'{a}}ri, Gang Niu, and Sivan Sabato, editors, \emph{International
  Conference on Machine Learning, {ICML} 2022, 17-23 July 2022, Baltimore,
  Maryland, {USA}}, volume 162 of \emph{Proceedings of Machine Learning
  Research}, pages 22784--22801. {PMLR}, 2022.
\newblock URL \url{https://proceedings.mlr.press/v162/wang22n.html}.

\bibitem[Yao et~al.(2022)Yao, Wang, Zhang, Zou, and Finn]{cmixup}
Huaxiu Yao, Yiping Wang, Linjun Zhang, James~Y. Zou, and Chelsea Finn.
\newblock C-mixup: Improving generalization in regression.
\newblock In Sanmi Koyejo, S.~Mohamed, A.~Agarwal, Danielle Belgrave, K.~Cho,
  and A.~Oh, editors, \emph{Advances in Neural Information Processing Systems
  35: Annual Conference on Neural Information Processing Systems 2022, NeurIPS
  2022, New Orleans, LA, USA, November 28 - December 9, 2022}, 2022.
\newblock URL
  \url{http://papers.nips.cc/paper\_files/paper/2022/hash/1626be0ab7f3d7b3c639fbfd5951bc40-Abstract-Conference.html}.

\bibitem[Yeh et~al.(2020)Yeh, Perez, Driscoll, Azzari, Tang, Lobell, Ermon, and
  Burke]{poverty}
Christopher Yeh, Anthony Perez, Anne Driscoll, George Azzari, Zhongyi Tang,
  David Lobell, Stefano Ermon, and Marshall Burke.
\newblock Using publicly available satellite imagery and deep learning to
  understand economic well-being in africa.
\newblock \emph{Nature Communications}, 2020.

\bibitem[Zhang et~al.(2018)Zhang, Ciss{\'{e}}, Dauphin, and Lopez{-}Paz]{mixup}
Hongyi Zhang, Moustapha Ciss{\'{e}}, Yann~N. Dauphin, and David Lopez{-}Paz.
\newblock mixup: Beyond empirical risk minimization.
\newblock In \emph{6th International Conference on Learning Representations,
  {ICLR} 2018, Vancouver, BC, Canada, April 30 - May 3, 2018, Conference Track
  Proceedings}. OpenReview.net, 2018.
\newblock URL \url{https://openreview.net/forum?id=r1Ddp1-Rb}.

\bibitem[Zhang et~al.(2022)Zhang, Sohoni, Zhang, Finn, and Re]{cnc}
Michael Zhang, Nimit~S Sohoni, Hongyang~R Zhang, Chelsea Finn, and Christopher
  Re.
\newblock Correct-n-contrast: a contrastive approach for improving robustness
  to spurious correlations.
\newblock In \emph{International Conference on Machine Learning}, pages
  26484--26516. PMLR, 2022.

\end{thebibliography}
\bibliographystyle{plainnat}

\clearpage


\appendix
\section{Related Work}
\label{sec:app_related}
\textbf{Multicalibration}\;\;
Multicalibration is first proposed by \citet{multicalibration} with binary grouping functions. \citet{multiaccuracy} and \citet{omipredictor} extend the grouping functions to real-valued functions. \citet{MCBoost} shows that with a sufficiently rich class of real-valued grouping functions, multicalibration actually implies accuracy. \citeauthor{MCBoost} also provides a boosting algorithm for both regression and multicalibration. The connection between multicalibration and distribution shift is first studied by \citet{UniversalAdaptability}, who proves that $\ell_1$ multicalibration error remains under covariate shift, given a sufficiently large real-valued grouping function class. \citeauthor{UniversalAdaptability} further shows that under covariate shift, a multicalibrated predictor can perform statistical inference of the average outcome of a sample batch. In contrast, we derive a robustness result for individual prediction of outcomes for $\ell_2$ multicalibrated predictors. In addition, \citet{clove} studies the equivalence of Invariant Risk Minimization and simultaneous calibration on each environment. Our equivalence results for multicalibration can be perceived as a generalization of \citeauthor{clove}'s results beyond the multi-environment setting, by deriving a mapping between density ratios and grouping functions. We also extend the equivalence to approximately multicalibrated and approximately invariant predictors. Furthermore, we move beyond \citeauthor{clove}'s multi-objective optimization with Lagrangian regularization, by proposing a new post-processing optimization framework consisting of a series of supervised regression. Meanwhile, \citet{postprocess} discusses connections between calibration and post-processing, which is an equivalent expression of invariance. There are other extensions of multicalibration, such as \citet{happy} who generalize the term $Y-f(X)$ in multicalibration's definition to a class of general functions. While our work is the first to generalize the grouping functions $h$ to consider the outcomes. 

\textbf{Out-of-distribution Generalization Beyond Covariate Shift}\;\;
Despite abundant literature from domain generalization that focuses on image classification where covariate shift dominates, research on algorithmic robustness on regression tasks beyond covariate shift is relatively limited.
The setting can be categorized according to if the source distribution is partitioned into several environments. For the multi-environment generalization setting, Invariant Risk Minimization and its variants assume that outcomes are generated by a common causal structural equation across all environments, and aims to recover such an invariant (or causal) predictor~\citep{IB-IRM,IRM,MRI,MIP,REX,fishr}. Group DRO~\citep{groupDRO} is a simple but surprisingly strong technique that optimizes for the worst group risk with reweighting of environments. There are also meta-learning methods~\citep{MAML} that handles multi-environment generalization with bi-level optimization. For the single environment setting, Distributionally Robust Optimization optimizes for the worst-case risk in an uncertainty set of distributions centering around the source distribution~\citep{x2dro,fdro_opt,fdro,cvar,wdro}. Another branch of research is targeted at mitigating spurious correlation with an assumption of simplicity bias, which utilizes a simple model to discover latent sub-populations and then correct the biased predictor by sample reweighting~
\citep{TERM,focalLoss,JTT},  retraining on a subgroup-balanced dataset or a small batch from target distribution~\citep{simple_reweight,retrainLast,cnc}, or perform Invariant Risk Minimization on discovered subgroups~\citep{EIIL}. 
Data augmentation is a prevalent technique to enhance algorithmic robustness for vision tasks. Quite a lot of these methods are tailored for classification. For example, Mixup~\citep{mixup} interpolates between features of samples with the same label. The approach is extended to regression settings by C-Mixup~\citep{cmixup}. Pseudolabelling is a common technique for out-of-distribution generalization, but typically adopted in a setting  with exposure to unlabelled samples from target distribution, known as domain adaptation~\citep{gda}. However, MC-Pseudolabel generate pseudolabels for the source distribution itself. 

\section{Grouping Functions for Distributionally Robust Optimization}
\label{sec:app_grouping}
Distributionally Robust Optimization assumes the target distribution to reside in an uncertainty set $\mathcal{P}$ of distributions centering around the source distribution $P_S$. For example, \citet{alpha_dro} formulates the uncertainty set as arbitrary subgroups that has a proportion of at least $\alpha_0\in (0,1)$. \citeauthor{alpha_dro} only consider subgroups of covariates:
\begin{align}
    \mathcal P(X) = \{ P(X): &\text{ there exists a probability measure } P'(X),  \\
    &P_S(X) = \alpha P(X) + (1-\alpha)P'(X), \alpha \geq \alpha_0 \}.
\end{align}
By the correspondence between density ratios and grouping functions, the equivalent design of a grouping function class is given by:
\begin{align}
	\mathcal H=\left\{ h \in \mathbb R^{\mathcal X}: 0\leq h(x)\leq \frac{1}{\alpha_0}, \;\;\forall x  \right\}.
\end{align}
We can also extend the grouping functions to consider both covariates and outcomes, such that general subgroups are incorporated into the uncertainty set:
\begin{align}
	\mathcal H=\left\{ h \in \mathbb R^{\mathcal X\times\mathcal Y}: 0\leq h(x,y)\leq \frac{1}{\alpha_0}, \;\;\forall x,y  \right\}.
\end{align}
In the case of grouping functions defined on $x$ and $y$ jointly, the grouping function class is not closed under affine transformation and is not a linear space spanned by density ratios, which suggests that a perfectly multicalibrated solution might not exist.
However, approximately multicalibrated predictors can still be pursued.

\section{Model Discretization}
\label{app_discretize}
For continuous predictors, we take a preprocessing step to discretize the model to as many bins as possible such that the rounding error is negligible while still ensuring enough samples in individual bins. Specifically, we equally split the outcomes of predictors to bins with equal intervals from the minimum to maximum of model output. We start from a minimum bin number $m=10$, and keeps increasing $m$ as long as $90\%$ of the samples reside in a bin with at least 30 samples. When the criterion is violated, we stop increasing $m$ and select it as the final bin number. The model discretization procedure is fixed across all experiments.

\section{Computational Complexity}
\label{sec:app_complexity}
We assume that the predictor's outcomes are uniformly distributed. Denote the average bin size by $N_b$, which is a constant around 30 in our implementation. The bin number is given by $m=N/N_b$ where $N$ is the sample size. For neural networks, $N$ represents the batch size. The overhead of MC-Pseudolabel compared to Empirical Risk Minimization is linear regression on each bin, whose sample complexity is $\mathcal O(N_b^3)$ with OLS. Please note that an individual linear regression for around 30 samples is extremely cheap. A non-parallel implementation of regression on every bin scales linearly with the bin number $m$, so the overall complexity is $\mathcal O(N_b^2N)$. However, since the regression on each bin is independent, we adopt a multi-processing implementation. Denote the number of jobs by $J$, the overall time cost of MC-Pseudolabel is $\mathcal O(N_b^2N/J)$. As a comparison, OLS on $N$ samples has a computational complexity of $\mathcal O(N^3)$.

In conclusion, the complexity of MC-Pseudolabel scales linearly with sample size (or batch size for neural networks). Counterintuitively, increasing the bin number $m$ (and thus decreasing the bin size) actually decreases the computational complexity. This is because linear regression scales cubically with sample size, so decreasing the sample size in each bin is preferred to decreasing the bin number.

\section{Experiments}
\subsection{An Additional Experiment: Synthetic Dataset}
We start from a multi-environment synthetic dataset with a multivariate normal distribution corresponding to Theorem~\ref{theo:converge_h2}. In this experiment, we examine the optimization dynamics of MC-Pseudolabel. The data generation process is inspired by \citet{IRM}. The covariates can be sliced into $X=(S,V)^T$ with $S\in\mathbb R^9$ and $V\in\mathbb R$, where $S$ is the causal variable for $Y$ and $V$ is the spurious variable. The data is generated by the following structural equations:
\begin{align}
    S &\sim \mathcal N(0,1). \\
    Y &= \alpha_S^TS + \epsilon_Y,\;\;\;\; \alpha_S = (1,...,1)^T \in \mathbb R^9, \epsilon_Y \sim \mathcal N(0,0.5^2). \\
    V &= \alpha_V(\mathcal E)\cdot Y + \epsilon_V, \;\;\;\; \alpha_V(e_1)=1.25, \alpha_V(e_2)=0.75, \alpha_V(e_T)=-1,  \epsilon_V \sim \mathcal N(0,0.1^2).
\end{align}
The covariate $V$ is spuriously correlated with $Y$ because the coefficient $\alpha_V(e)$ depends on the specific environment $e$. We set $\alpha_V(\mathcal E)=1.25,0.75$ respectively for two training environments while $\alpha_V(\mathcal E)$ extrapolates to $-1$ during testing. A robust algorithm is supposed to bypass the spurious variable and output a predictor $f(X)=\alpha_S^TS$ in order to survive the test distribution where the correlation between $V$ and $Y$ is negated.

The predictor class for this dataset is linear models, and the environment classifier is implemented by MLP with a single hidden layer. In this experiment, we fix the training hyperparameters for the base linear model, and perform grid search over the extra hyperparameter introduced by robust learning methods. Baselines except for ERM and Group DRO share a hyperparameter which is the regularizer's coefficient, and Group DRO introduces a temperature hyperparameter. We search over their hyperparameter space and report RMSE metric on the test set in Figure~\ref{fig:simulation}. Most baselines exhibit a U-turn with an increasing hyperparameter, and the minimum point varies across methods. The sensitivity of hyperparameters implies the dependence on a strong model selection criterion, such as oracle model selection on target distribution. However, the dashed line for MC-Pseudolabel's error is tangent to all the U-turns of baselines, indicating a competitive performance of MC-Pseudolabel both with and without oracle model selection.
\begin{figure}[ht]
\begin{center}
\centerline{\includegraphics[width=0.8\columnwidth]{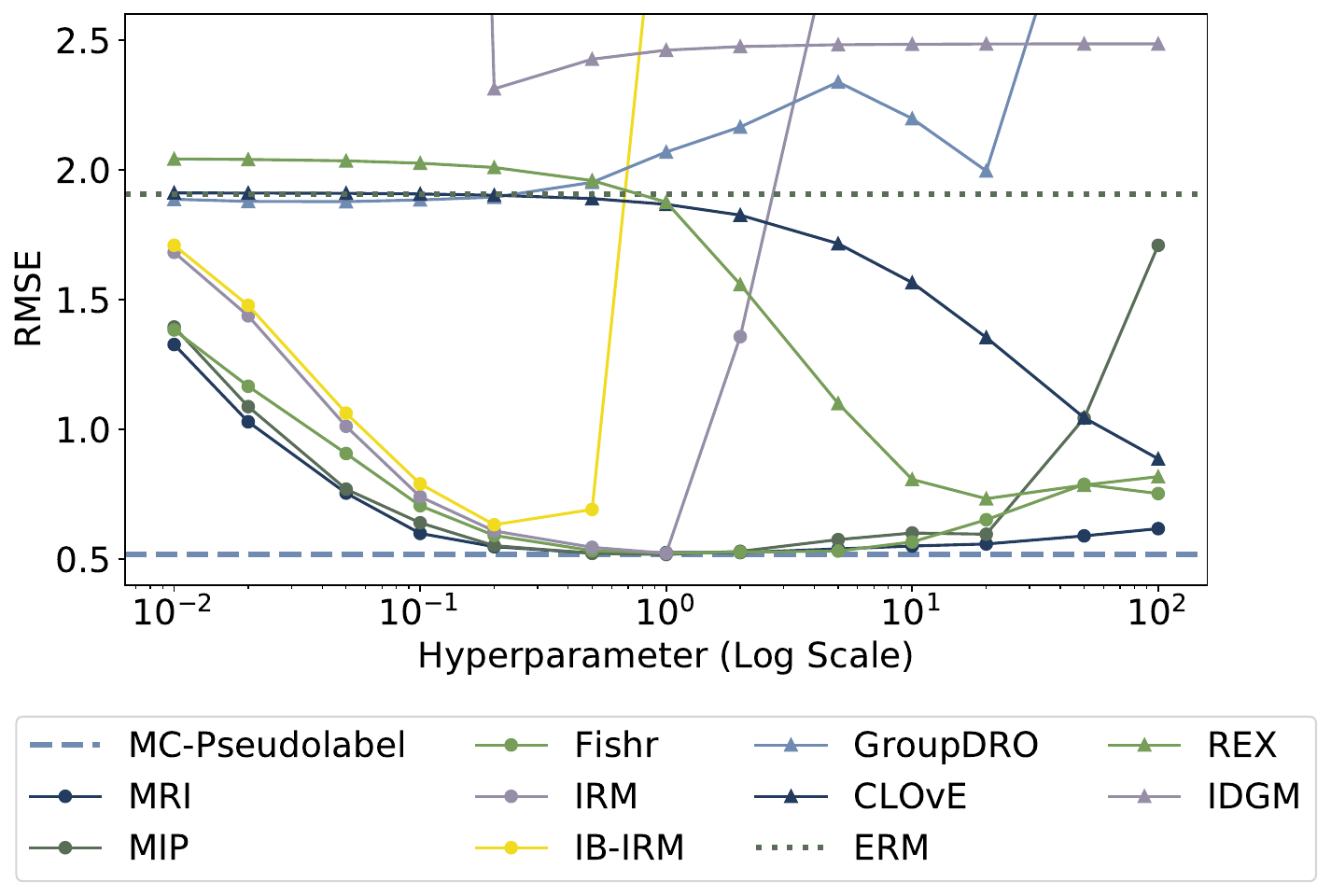}}
\caption{Results (RMSE) on the synthetic dataset. Curves show method performances across hyperparameters. Methods without extra hyperparameters are marked with dotted lines.}
\label{fig:simulation}
\end{center}
\end{figure}

We also investigate the evolution of pseudo labels $\tilde f_t$ in Algorithm~\ref{alg} to recover the dynamics of MC-Pseudolabel. The first row of Figure~\ref{fig:dynamics} demonstrates how pseudolabelling results in a multicalibrated predictor. It shows that pseudolabels for two environments deviate from model prediction at Step 0, but the gap quickly converges at Step 4, implying multicalibrated prediction. The second row provides insight about how pseudolabelling contributes to an invariant predictor. We observe that the curve of two environments are gradually merging because the pseudolabel introduces a special noise to the original label such that the correlation between the pseudolabel and spurious variable $V$ is weakened. As a result, the predictor will depend on the causal variable $S$ which is relatively more strongly correlated with pseudolabels.
\begin{figure}[ht]
\begin{center}
\centerline{\includegraphics[width=0.9\columnwidth]{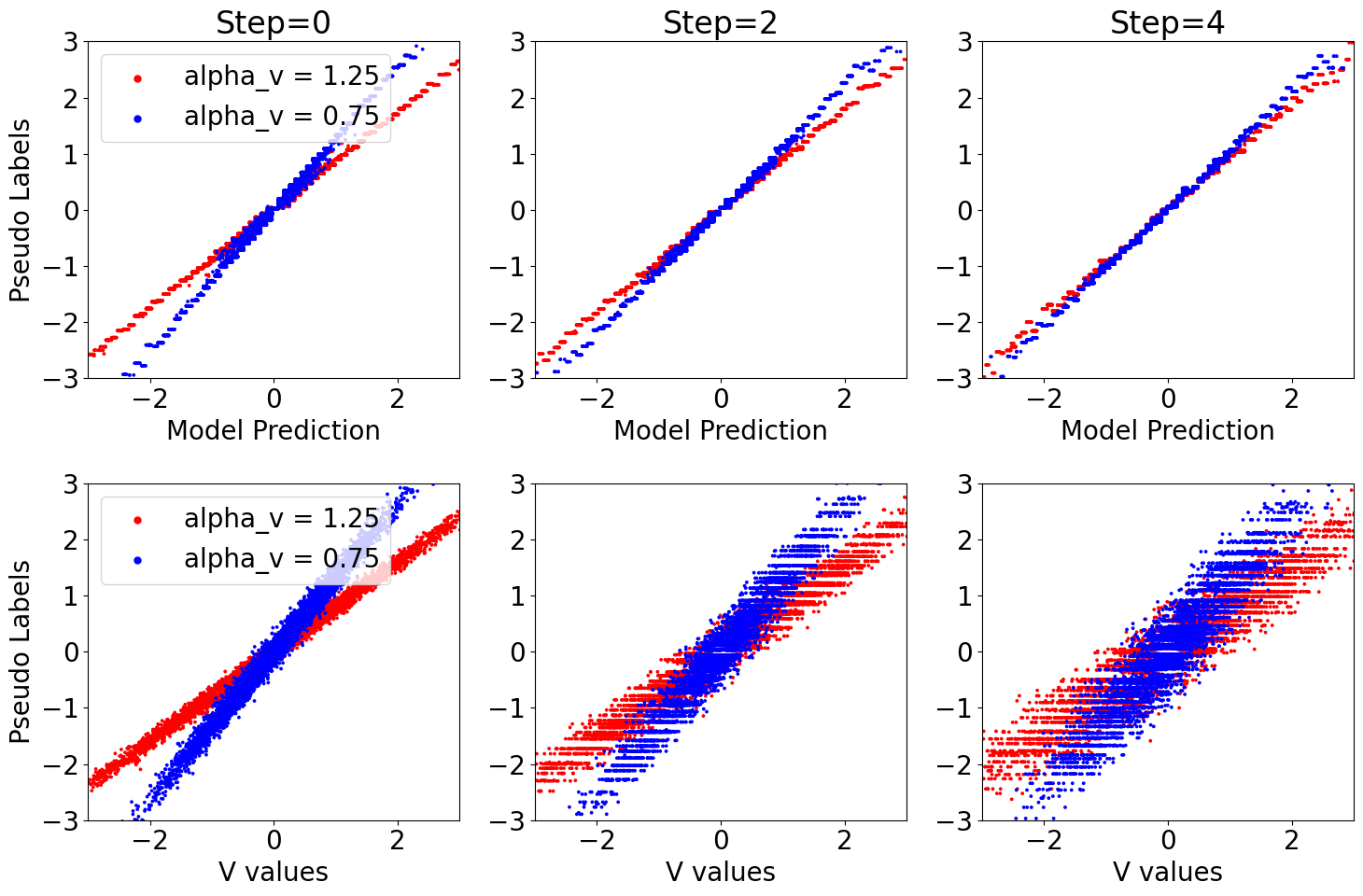}}
\caption{Evolution of pseudolabels during MC-Pseudolabel. The first row plots values of pseudolabels against model predictions. The second row plots values of pseudolabels against $V$. Columns represent different snapshots during optimization.}
\label{fig:dynamics}
\end{center}
\end{figure}

\subsection{VesselPower}
In figure~\ref{fig:app_ontheline}, we provide the correlation between models' in-distribution validation performance and out-of-distribution test performance across all methods.
\begin{figure}[ht]
\begin{center}
\centerline{\includegraphics[width=0.9\columnwidth]{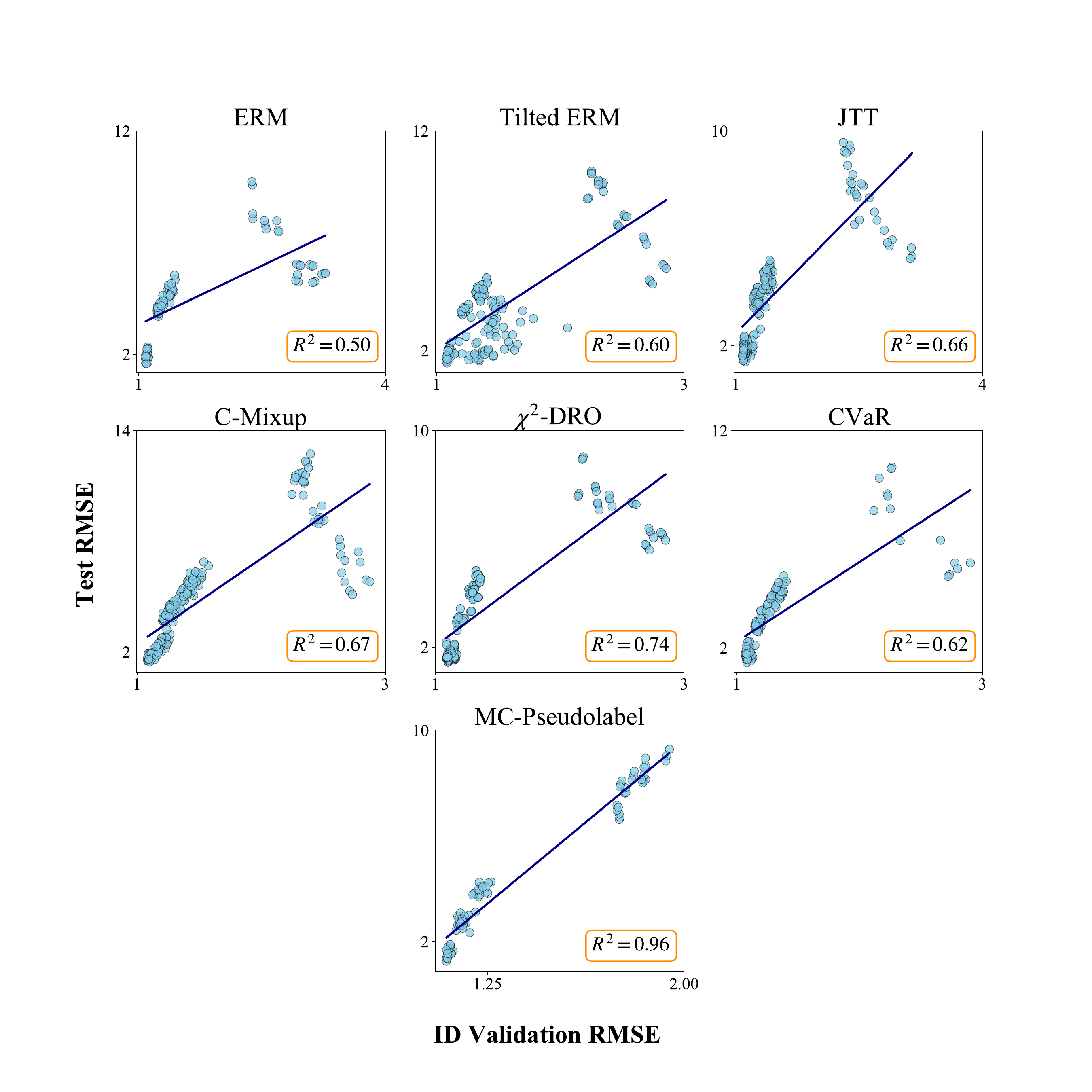}}
\vskip -0.2in
\caption{Correlation between models' in-distribution and out-of-distribution risks on VesselPower.}
\label{fig:app_ontheline}
\end{center}
\end{figure}

\subsection{Training Details}
\label{subsec:app_train_detail}
We follow DomainBed's protocol~\citep{domainBed} for model selection. Specifically, we randomly sample 20 sets of hyperparameters for each method, containing both the training hyperparameters of base models in Table~\ref{table:app_hyper_model} and extra hyperparameters from the robust learning algorithm in Table~\ref{table:app_hyper_robust}. We select the best model across hyperparameters based on three model selection criteria. In-distribution (ID) validation selects the model with the best metric on the average of an in-distribution validation dataset, which is sampled from the same distribution as the training data. Worst-environment (Worst) validation selects the best model by the worst performance across all environments in the in-distribution validation dataset. Worst validation is applicable only to the multi-environment setting. Oracle validation selects the best model by an out-of-distribution validation dataset sampled from the target distribution of test data. Oracle validation leaks the test distribution, so it is not recommended by DomainBed. However, most robust learning methods relies on out-of-distribution validation, so Domainbed suggests limited numbers of access to target data when using Oracle validation. Though MC-Pseudolabel already performs well under ID and Worst validation, we still report its performance under Oracle validation to compare the limit of robust learning methods regardless of model selection. Specifically for PovertyMap, an OOD validation set is provided for Oracle validation. The results of Oracle validation recover PovertyMap's public benchmark. 

Following DomainBed, the entire model selection procedure is repeated with different seeds for three times to measure standard errors of performances. Thus, we have totally 60 runs per method per dataset. Specifically for PovertyMap, we follow WILDS' setup~\citep{wilds} and perform 5-fold cross validation instead of three repeated experiments. For each fold of the dataset, we conduct the model selection procedure with a single seed, and we report the average and standard error of performances across 5 folds. Thus, the standard error measures both the difficulty disparity across folds and the model's instability.

The grouping function class of MC-Pseudolabel is implemented according to Equation~\ref{eq:h_irm} and Equation~\ref{eq:h_jtt} for the multi-environment and single-environment setting respectively. For the multi-environment setting, the environment classifier $p(e|x,y)$ is implemented as MLP with a single hidden layer of size 100 for tabular datasets including Simulation and ACSIncome. For PovertyMap, the environment classifier is implemented by Resnet18-MS with the same architecture as the predictor. For the single-environment setting of VesselPower, the identification model $f_{id}$ is implemented as a Ridge regression model.

\begin{table}[ht]
\caption{Hyperparameters for model architecture.}
\label{table:app_hyper_model}
\centering
\resizebox{\columnwidth}{!}{%
\begin{tabular}{@{}lcccc@{}}
\toprule
                        & Simulation & ACSIncome & VesselPower & PovertyMap  \\ \midrule
Architecture            & Linear     & MLP       & MLP         & Resnet18-MS \\
Hidden Layer Dimensions & None       & 16  & 32, 8  & Standard~\citep{wilds}    \\
Optimizer               & Adam       & Adam      & Adam        & Adam        \\
Weight Decay               & 0       & 0      & 0        & 0        \\
Loss                    & MSE        & MSE       & MSE         & MSE         \\ 
Learning Rate                    & 0.1        & $10^{\text{Uniform}[-3, -1]}$       & $10^{\text{Uniform}[-3, -1]}$         & $10^{\text{Uniform}[-4,-2]}$         \\
Batch Size                    & 1024        & [256, 512, 1024, 2048]       &  [256, 512, 1024, 2048]         &  [32, 64, 128]         \\ 
\bottomrule
\end{tabular}%
}
\end{table}
\begin{table}[ht]
\caption{Hyperparameters for robust learning methods.}
\label{table:app_hyper_robust}
\centering
\begin{tabular}{@{}lc@{}}
\toprule
                        & Range \\ \midrule
Regularizer Coefficient & $10^{\text{Uniform}[-3, 2]}$      \\
$\eta$ (GroupDRO) & $10^{\text{Uniform}[-3, 2]}$      \\
$\alpha$ (JTT)                    & [0.1, 0.2, 0.5, 0.7]      \\
$\lambda$ (JTT)                   &  [5, 10, 20, 50]     \\ 
$\eta$ ($\chi^2$-DRO)                   &  [0.2, 0.5, 1, 1.5]     \\ 
$t$ (Tilted-ERM)        &  [0.1, 0.5, 1, 5, 10, 50, 100, 200] \\
$\alpha$ (C-Mixup)                   &  [0.5, 1, 1.5, 2]     \\ 
$\sigma$ (C-Mixup)                   &  [0.01, 0.1, 1, 10, 100]     \\ 
\bottomrule
\end{tabular}
\end{table}

\subsection{Software and Hardware}
\label{subsec:app_hardware}
Our experiments are based on the architecture of PyTorch~\citep{pytorch}. 
Each experiment with a single set of hyperparameters is run on one NVIDIA GeForce RTX 3090 with 24GB of memory, taking at most 15 minutes.

\section{Theory}
\label{sec:app_theory}
\subsection{Multicalibration and Bayes Optimality under Covariate Shift}

\begin{assumption}[Restatement of Assumption~\ref{assum:sufficient_grouping}]
\label{app_assum:app_sufficient_grouping}
\;
\vskip 0.01in
1. (Closure under Covariate Shift)
For a set of probability measures $\mathcal P(X)$ containing the source measure $P_S(X)$, 
\begin{equation}
\label{eq:densityratio_hx}
	\forall P \in \mathcal P, h\in \mathcal H \Rightarrow \frac{p(\cdot)}{p_S(\cdot)}\cdot h(\cdot) \in \mathcal H.
\end{equation}
2. (($\gamma, \rho$)-Weak Learning Condition)
For any $P \in \mathcal P(X)P_S(Y|X) \equiv \{P'(X) P_S(Y\mid X) : P' \in \mathcal{P}\}$ with the source conditional measure $P_S(Y|X)$ and every measurable set $G \subset \mathcal X$ satisfying $P(X \in G) > \rho$, if
\begin{align}
	\mathbb E_P[(\mathbb E_P[Y|X]-Y)^2|X \in G]  
	< \mathbb E_P[(\mathbb E_P[Y|X \in G]-Y)^2|X \in G] - \gamma,
\end{align}
then there exists $h \in \mathcal H$ satisfying
\begin{align}
	\mathbb E_P[(h(X)-Y)^2|X \in G]  
	< \mathbb E_P[(\mathbb E_P[Y|X \in G]-Y)^2|X \in G] - \gamma.
\end{align}
\end{assumption}

\begin{lemma}[\citet{MCBoost}]
\label{lem:app_mc_acc}
	Fix any distribution $P\in\mathcal P(X,Y)$, any model $f : \mathcal{X} \rightarrow [0,1]$, and any class of real valued functions $\mathcal{H}$ that is closed under affine transformation. Let:
\[
\mathcal{H}_1 = \{h \in \mathcal{H} : \max_{x\in\mathcal{X}}h(x)^2 \leq 1\}
\]
be the set of functions in $\mathcal{H}$ upper-bounded by 1 on $\mathcal{X}$. Let $m = |\text{Range}(f)|, \gamma > 0$, and $\alpha \leq \frac{\gamma^3}{16m}$. Then if $\mathcal{H}$ satisfies the $(\gamma, \frac{\gamma}{m})$-weak learning condition and $f$ is $\alpha$-approximately $\ell_2$ multicalibrated with respect to $\mathcal{H}_1$ and $P$, then $f$ has squared error
\[
\mathbb{E}_P [(f(x) - y)^2] \leq \mathbb{E}_P [(f^*(x) - y)^2] + 3\gamma,
\]
where $f^*(x)=\mathbb E_{P}[Y|x]$.
\end{lemma}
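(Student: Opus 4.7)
The plan is a contrapositive argument. Suppose for contradiction that $\mathbb E_P[(f(X)-Y)^2] > \mathbb E_P[(f^*(X)-Y)^2] + 3\gamma$; by the bias--variance decomposition this is equivalent to $\mathbb E_P[(f-f^*)^2] > 3\gamma$. I will exhibit a function $\tilde h \in \mathcal H_1$ for which $K_2(f, \tilde h, P) > \alpha = \gamma^3/(16m)$, contradicting the multicalibration hypothesis. The construction first localizes the squared gap to a single level set of $f$, then invokes the $(\gamma,\gamma/m)$-weak learning condition on a carefully chosen sub-region, and finally rescales the resulting witness into $\mathcal H_1$ using the closure of $\mathcal H$ under affine transformations.

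First, pigeonhole across the $m$ level sets $L_v = \{X: f(X)=v\}$ produces a level $v^*$ with $\mathbb E_P[(v^*-f^*)^2 \mathbb I_{L_{v^*}}] > 3\gamma/m$. A layer-cake/Markov-style truncation at an appropriate threshold then selects $G \subseteq L_{v^*}$ with mass $P(G) > \gamma/m$ on which $(v^*-f^*)^2$ is uniformly bounded below. Combined with the approximate calibration of $f$ (the special case $h\equiv 1$ gives $K_2(f,1,P)\leq\alpha$ and hence controls $|\mathbb E_P[Y\mid L_{v^*}] - v^*|$ to order $\sqrt\alpha$), this implies that on $G$ the Bayes predictor $f^*$ beats the best constant $\mathbb E_P[Y\mid G]$ by squared-loss margin at least $\gamma$. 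The $(\gamma,\gamma/m)$-weak learning condition then supplies $h_0\in\mathcal H$ with the same margin inequality on $G$, which, upon expanding $(h_0-Y)^2-(\mathbb E_P[Y\mid G]-Y)^2$, becomes a correlation lower bound $\mathbb E_P[(h_0-v^*)(Y-v^*)\mathbb I_G] \gtrsim \gamma\cdot P(G)$.

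Because $\mathcal H$ is closed under affine transformation and $h_0, v^* \in [0,1]$, the rescaled function $\tilde h := (h_0 - v^*)/2$ lies in $\mathcal H_1$ and preserves the correlation estimate up to a universal constant. The $v^*$-slice of $K_2(f,\tilde h,P)$ equals $(\mathbb E_P[\tilde h(Y-v^*)\mathbb I_{L_{v^*}}])^2/P(L_{v^*})$; the on-$G$ contribution is lower-bounded by the previous step, while the off-$G$ piece on $L_{v^*}\setminus G$ is controlled via the approximate level-set calibration of $f$ at cost $O(\sqrt\alpha)$. Squaring and dividing by $P(L_{v^*})\leq 1$ produces a slice of order $\gamma^2 P(G)^2 \geq \gamma^4/m^2$ modulo a calibration slack of order $\alpha$; under the hypothesis $\alpha\leq \gamma^3/(16m)$ the gain strictly dominates the slack, so $K_2(f,\tilde h,P) > \alpha$, contradicting multicalibration.

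The hardest step is the stitching in the last paragraph: weak learning certifies $h_0$ only on $G$, whereas $K_2$ aggregates over the full level set $L_{v^*}$ and $\mathbb I_G$ itself is not assumed to lie in $\mathcal H$. The off-region terms must be absorbed into the $O(\sqrt\alpha)$ calibration slack while the on-region correlation gain of order $\gamma\cdot P(G)\gtrsim \gamma^2/m$ has to dominate it. Balancing these two quantities and tracking constants through the $(h_0-v^*)/2$ rescaling into $\mathcal H_1$ is exactly what fixes the threshold $\alpha\leq \gamma^3/(16m)$ in the statement.
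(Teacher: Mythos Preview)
The paper does not prove this lemma; it is quoted from \cite{MCBoost}. Your overall contrapositive architecture (excess error $\Rightarrow$ weak-learner witness $\Rightarrow$ multicalibration violation) matches the cited argument, but the Markov-truncation step you insert is a detour that creates a gap you cannot close. Calibration only controls $\mathbb E[Y-v^*\mid L_{v^*}]$; it says nothing about $\mathbb E[\tilde h(Y-v^*)\mathbb I_{L_{v^*}\setminus G}]$ for an arbitrary $\tilde h\in\mathcal H_1$. That off-region term can be as large as $P(L_{v^*}\setminus G)$, which is $\Theta(1)$ in general and swamps your on-$G$ gain of order $\gamma^2/m$. The ``$O(\sqrt\alpha)$ slack'' absorption you describe is therefore not available.

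The cited proof avoids this entirely by taking $G=L_{v^*}$, the \emph{full} level set. The pigeonhole is done so that both conditions of the $(\gamma,\gamma/m)$-weak learning hypothesis hold on $L_{v^*}$ itself: discard level sets with $P(L_v)\le\gamma/m$ (total excess contribution $\le\gamma$) and those with conditional excess $\le\gamma$ (contribution $\le\gamma$); from the remaining budget $>\gamma$ one finds $v^*$ with $P(L_{v^*})>\gamma/m$ and conditional excess $>\gamma$. Approximate calibration then gives $v^*\approx\mathbb E[Y\mid L_{v^*}]$, so the weak-learning premise holds on the full level set, and the resulting $h_0$ yields $\mathbb E[h_0(Y-v^*)\mid f=v^*]\gtrsim\gamma$ directly (this is exactly the direction recorded in the paper as Lemma~\ref{lem:app_risk_imply_mc}). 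No off-region term ever appears. A secondary issue: you assert $h_0\in[0,1]$ to place $(h_0-v^*)/2$ in $\mathcal H_1$, but weak learning only gives $h_0\in\mathcal H$; the rescaling into $\mathcal H_1$ requires a normalization by (an upper bound on) $\|h_0\|_\infty$ or $\mathbb E[h_0^2\mid L_{v^*}]$, and that factor must be tracked through the final inequality.
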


\begin{definition}
	For a probability measure $P(X,Y)$ and a predictor $f:\mathcal X\rightarrow [0,1]$, let $\mathcal H \subset \mathbb R^{\mathcal X\times\mathcal Y}$ be a function class. We say that $f$ is $\alpha$-approximately $\ell_1$ multicalibrated w.r.t. $\mathcal H$ and $P$ if for all $h \in \mathcal H$:
\begin{align}
	&\;\;\;\;\;K_1(f, h, P) \\ 
	&= \int \Big|\mathbb E_P\left[h(X,Y)(Y-v)\big|f(X)=v\right]\Big| dP_{f(X)}(v) \\
	&\leq \alpha.
\end{align}
\end{definition}

\begin{lemma}[~\citet{uncertain}]
\label{lem:app_uniadapt}
	Suppose $P_S,P_T \in \mathcal P$ have the same conditional label distribution, and suppose $f$ is $\alpha$-approximately $\ell_1$ multicalibrated with respect to $P_S$ and $\mathcal{H}$. If $\mathcal{H}$ satisfies Equation~\ref{eq:densityratio_hx}, then $f$ is also $\alpha$-approximately $\ell_1$ multicalibrated with respect to $P_T$ and $\mathcal{H}$:
\end{lemma}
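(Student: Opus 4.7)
The lemma is a transfer result under covariate shift, so I would prove it by a change-of-measure argument that expresses the $\ell_1$ multicalibration error against $h$ under $P_T$ as the error against a reweighted grouping function under $P_S$. Because $P_S$ and $P_T$ have the same conditional $Y\mid X$, the joint density ratio collapses to a covariate-only ratio: $p_T(x,y)/p_S(x,y)=p_T(x)/p_S(x)$. Define $\tilde h(x):=(p_T(x)/p_S(x))\,h(x)$. The closure assumption (Equation~\ref{eq:densityratio_hx}) applied with $P=P_T$ guarantees $\tilde h\in\mathcal H$, so the hypothesis that $f$ is $\alpha$-approximately $\ell_1$ multicalibrated with respect to $P_S$ and the \emph{whole} class $\mathcal H$ provides an immediate bound once I establish the identity $K_1(f,h,P_T)=K_1(f,\tilde h,P_S)$.

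\textbf{Key computation.} I would first rewrite the level-set conditional expectation pointwise in $v$:
\begin{align*}
\mathbb E_{P_T}\!\big[h(X)(Y-v)\mid f(X)=v\big]
=\frac{1}{P_T(f(X)=v)}\int h(x)(y-v)\,\mathbf{1}[f(x)=v]\,p_T(x,y)\,dx\,dy,
\end{align*}
and substitute $p_T(x,y)=(p_T(x)/p_S(x))\,p_S(x,y)$ to obtain
\begin{align*}
\mathbb E_{P_T}\!\big[h(X)(Y-v)\mid f(X)=v\big]
=\frac{P_S(f(X)=v)}{P_T(f(X)=v)}\,\mathbb E_{P_S}\!\big[\tilde h(X)(Y-v)\mid f(X)=v\big].
\end{align*}
Taking absolute values and integrating against the pushforward $dP_{T,f(X)}(v)$ cancels the $P_T(f(X)=v)$ denominator and exactly replaces it by $dP_{S,f(X)}(v)$ on the right, so that
\begin{align*}
K_1(f,h,P_T)=\int\!\big|\mathbb E_{P_S}[\tilde h(X)(Y-v)\mid f(X)=v]\big|\,dP_{S,f(X)}(v)=K_1(f,\tilde h,P_S).
\end{align*}

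\textbf{Conclusion and main obstacle.} Since $\tilde h\in\mathcal H$ by Equation~\ref{eq:densityratio_hx}, the $\ell_1$ multicalibration hypothesis yields $K_1(f,\tilde h,P_S)\le\alpha$, and hence $K_1(f,h,P_T)\le\alpha$ for every $h\in\mathcal H$, which is the claim. The only delicate point is treating $dP_{T,f(X)}(v)$ rigorously: in the finite-range setting adopted throughout the paper (Remark following Theorem~\ref{theo:regret_covariate}) the integral becomes a finite sum over $v\in\text{Range}(f)$ and the cancellation is transparent; for general ranges one would invoke disintegration of $P_T$ along $f$. Absolute continuity of $P_S$ and $P_T$ ensures $p_T/p_S$ is well defined $P_S$-a.e., so $\tilde h$ is unambiguously specified wherever the integrand is nonzero.
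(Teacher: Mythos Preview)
Your proof is correct and is exactly the standard change-of-measure argument for this transfer result. The paper does not actually supply a proof of this lemma (it is stated with a citation to \citet{uncertain}), but the identical computation you carry out---rewriting $K_1$ under $P_T$ as $K_1$ under $P_S$ against the reweighted grouping function and then cancelling the pushforward factors---appears verbatim in the paper's proof of Theorem~\ref{theo:app_approx_mc_inv} (the chain of equalities culminating in Equation~\eqref{eq:app_approx_mc_inv_2}), so your approach matches the paper's own reasoning where the argument is spelled out.
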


\begin{lemma}
\label{lem:app_k1k2}
	For a predictor $f:\mathcal X \rightarrow [0,1]$ and a grouping function $h$ satisfying $\max_{x\in\mathcal X,y\in\mathcal Y}h(x,y)^2\leq B$ where $B>0$,  
\begin{align}
	\frac{1}{\sqrt B}K_2(f, h,P) \leq K_1(f, h,P) \leq \sqrt{K_2(f, h,P)}.
\end{align} 
\end{lemma}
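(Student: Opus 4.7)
The plan is to work with the integrand $g(v) := \mathbb E_P[h(X,Y)(Y-v)\mid f(X)=v]$ so that $K_2(f,h,P) = \int g(v)^2 \, dP_{f(X)}(v)$ and $K_1(f,h,P) = \int |g(v)| \, dP_{f(X)}(v)$, then prove the two sides of the inequality as a standard $L^1$--$L^2$ comparison and a $L^\infty$-type reduction, respectively.

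For the upper bound $K_1 \leq \sqrt{K_2}$, I would apply Cauchy--Schwarz (equivalently, Jensen's inequality applied to the concave function $\sqrt{\cdot}$) with respect to the probability measure $P_{f(X)}$. Since $P_{f(X)}$ is a probability measure on $[0,1]$, we have
\begin{align}
K_1(f,h,P) = \int |g(v)| \cdot 1 \, dP_{f(X)}(v) \leq \left(\int g(v)^2 \, dP_{f(X)}(v)\right)^{1/2} = \sqrt{K_2(f,h,P)}.
\end{align}
This direction is immediate and requires nothing about $B$.

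For the lower bound $\tfrac{1}{\sqrt B} K_2 \leq K_1$, the key observation is that $|g(v)|$ is uniformly bounded by $\sqrt B$. Indeed, since $|h(x,y)| \leq \sqrt B$ pointwise by hypothesis and $|Y-v| \leq 1$ using the standing assumption $\mathcal Y = [0,1]$ together with $f(\mathcal X) \subset [0,1]$, the integrand of $g(v)$ is bounded in absolute value by $\sqrt B$, hence so is its conditional expectation. Consequently $g(v)^2 \leq \sqrt B \cdot |g(v)|$, and integrating against $P_{f(X)}$ yields $K_2(f,h,P) \leq \sqrt B \cdot K_1(f,h,P)$, which rearranges to the claimed lower bound.

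Both halves are short and computational; there is no serious obstacle. The only subtlety worth flagging is that the lower-bound direction genuinely uses $|Y-v|\leq 1$, which in turn relies on the $\mathcal Y=[0,1]$ and $f:\mathcal X\to[0,1]$ conventions adopted for the theoretical analysis; if those bounds were relaxed, the constant $\sqrt B$ would need to be replaced by $\sqrt B \cdot \mathrm{diam}(\mathcal Y)$.
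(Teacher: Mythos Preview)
Your proof is correct and follows essentially the same structure as the paper's: both directions hinge on the observation that the inner quantity $g(v)=\mathbb E_P[h(X,Y)(Y-v)\mid f(X)=v]$ satisfies $|g(v)|\le\sqrt B$, from which $g(v)^2\le\sqrt B\,|g(v)|$ gives the lower bound, while Cauchy--Schwarz on the outer integral against $P_{f(X)}$ gives the upper bound. The only cosmetic difference is that the paper bounds $|g(v)|$ via Cauchy--Schwarz applied to the inner conditional expectation (splitting $h$ and $Y-v$), whereas you use the direct pointwise bound $|h(x,y)(y-v)|\le\sqrt B$; your route is slightly more elementary but otherwise equivalent.
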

\begin{remark}
	The lemma is extended from \citet{uncertain}'s result for $B=1$.
\end{remark}
\begin{proof}
First we prove $K_2(f, h,P) \leq \sqrt B K_1(f, h,P)$. For any $v\in [0,1]$,
\begin{align}
	&\;\;\;\;\Big(\mathbb E_P\left[h(X,Y)(Y-v)\big|f(X)=v\right]\Big)^2 \\
	&= \Big|\mathbb E_P\left[h(X)(Y-v)\big|f(X)=v\right]\Big| \cdot
	\Big|\mathbb E_P\left[h(X,Y)(Y-v)\big|f(X)=v\right]\Big| \\
	\label{eq:app_lemk1k2_1}
	&\leq  \sqrt{\mathbb E\left[h(X,Y)^2\big|f(X)=v\right] 
	\mathbb E\left[(Y-v)^2\big|f(X)=v\right]} \\
	&\;\;\;\;\cdot
	\Big|\mathbb E_P\left[h(X,Y)(Y-v)\big|f(X)=v\right]\Big| \\
	&\leq \sqrt B \cdot \Big|\mathbb E_P\left[h(X,Y)(Y-v)\big|f(X)=v\right]\Big|.
\end{align}
Equation~\ref{eq:app_lemk1k2_1} follows from the Cauchy-Schwarz inequality. 
\begin{align}
	K_2(f, h,P) &=  \underset{v\sim P_{f(X)}}{\mathbb E} \left[ \Big(\mathbb E_P\left[h(X,Y)(Y-v)\big|f(X)=v\right] \Big)^2 \right]  \\
	&\leq \sqrt B \underset{v\sim P_{f(X)}}{\mathbb E} 
	\Big|\mathbb E_P\left[h(X,Y)(Y-v)\big|f(X)=v\right]\Big| \\
	&= \sqrt B K_1(f, h,P).
\end{align}
Then we prove $K_1(f, h,P) \leq \sqrt{BK_2(f, h,P)}$. Still from the Cauchy-Schwarz inequality:
\begin{align}
	K_1(f, h,P) &=  \underset{v\sim P_{f(X)}}{\mathbb E} 
	\Big|\mathbb E_P\left[h(X,Y)(Y-v)\big|f(X)=v\right]\Big| \\
	&\leq \sqrt {\underset{v\sim P_{f(X)}}{\mathbb E} [1^2]
	 \underset{v\sim P_{f(X)}}{\mathbb E} \left[\Big(\mathbb E_P\left[h(X,Y)(Y-v)\big|f(X)=v\right] \Big)^2\right]}  \\
	 &= \sqrt {K_2(f, h,P)}.
\end{align}
\end{proof}

\begin{theorem}[Restatement of Theorem~\ref{theo:regret_covariate}]
\label{theo:app_regret_covariate}
	For a source measure $P_S(X,Y)$ and a set of probability measures $\mathcal P(X)$ containing $P_S(X)$, given a predictor $f:\mathcal X \rightarrow [0,1]$ with finite range $m:=|\text{Range}(f)|$, consider a grouping function class $\mathcal H$ closed under affine transformation satisfying Assumption \ref{assum:sufficient_grouping} with $\rho = \gamma/m$. If $f$ is $\frac{\gamma^6}{256m^2}$-approximately $\ell_2$ multicalibrated w.r.t $P_S$ and $\mathcal H$'s bounded subset $\mathcal H_1 := \left\{ h \in \mathcal H:\max_{x \in \mathcal X}h(x)^2 \leq 1 \right\}$, then for any target measure $P_T \in \mathcal P(X)P_S(Y|X)$,
\begin{align}
	R_{P_T}(f) \leq R_{P_T}(f^*) + 3\gamma,
\end{align}
where $f^*(x)=\mathbb E_{P_T}[Y|x]$ is the optimal regression function in each target distribution.
\end{theorem}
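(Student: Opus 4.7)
The plan is to chain three existing lemmas. Lemma~\ref{lem:app_mc_acc} already converts $\ell_2$-multicalibration on a single distribution $P$ with respect to $\mathcal{H}_1$, together with the $(\gamma, \gamma/m)$-weak learning condition on $P$, into a $3\gamma$ Bayes-optimality bound on $P$; and Assumption~\ref{app_assum:app_sufficient_grouping}.2 already hands us the weak learning condition for every $P \in \mathcal{P}(X) P_S(Y \mid X)$, in particular for $P_T$. So the real work is to lift the $\ell_2$-multicalibration hypothesis from $P_S$ to $P_T$. The numerical choice $\alpha = \gamma^6/(256 m^2) = (\gamma^3/(16m))^2$ is engineered so that a lossy round trip through $\ell_1$-multicalibration exits at exactly the threshold $\gamma^3/(16m)$ demanded by Lemma~\ref{lem:app_mc_acc}.

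Concretely I would proceed in four steps. First, since every $h \in \mathcal{H}_1$ satisfies $\sup_x h(x)^2 \le 1$, Lemma~\ref{lem:app_k1k2} with $B = 1$ gives
\begin{equation*}
K_1(f, h, P_S) \le \sqrt{K_2(f, h, P_S)} \le \sqrt{\gamma^6 / (256 m^2)} = \gamma^3/(16m),
\end{equation*}
so $f$ is $\gamma^3/(16m)$-approximately $\ell_1$-multicalibrated w.r.t.\ $\mathcal{H}_1$ on $P_S$. Second, I transfer this to $P_T$ via Lemma~\ref{lem:app_uniadapt}: $P_T$ and $P_S$ share the conditional $P_S(Y \mid X)$ by definition of $\mathcal{P}(X) P_S(Y \mid X)$, and $\mathcal{H}$ enjoys the density-ratio closure of Assumption~\ref{app_assum:app_sufficient_grouping}.1, so the $\ell_1$-error is preserved on $P_T$ through the underlying identity $K_1(f, h, P_T) = K_1(f, (p_T/p_S) h, P_S)$. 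Third, I come back from $\ell_1$ to $\ell_2$ on $P_T$ by the other half of Lemma~\ref{lem:app_k1k2} (again $B = 1$): $K_2(f, h, P_T) \le K_1(f, h, P_T) \le \gamma^3/(16m)$. Finally, applying Lemma~\ref{lem:app_mc_acc} on $P_T$ with multicalibration parameter exactly $\gamma^3/(16m)$, the already-verified affine closure of $\mathcal{H}$, and the weak learning condition on $P_T$, the conclusion $R_{P_T}(f) \le R_{P_T}(f^*) + 3\gamma$ is immediate.

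The main obstacle will be cleanly executing the second step. Lemma~\ref{lem:app_uniadapt} is stated for the full class $\mathcal{H}$, whereas we only have multicalibration for the bounded subclass $\mathcal{H}_1$. For $h \in \mathcal{H}_1$, the reweighted function $(p_T/p_S) h$ lies in $\mathcal{H}$ by density-ratio closure, yet it need not remain pointwise bounded by $1$. One handles this either by rescaling $(p_T/p_S) h$ back into $\mathcal{H}_1$ via the affine closure of $\mathcal{H}$ and absorbing the resulting constant, or by unpacking the identity $K_1(f, h, P_T) = K_1(f, (p_T/p_S) h, P_S)$ directly at the level of test functions in $\mathcal{H}_1$ in the spirit of \citet{uncertain}, to conclude that the $\ell_1$-error on $P_T$ is uniformly at most $\gamma^3/(16m)$ over $\mathcal{H}_1$.
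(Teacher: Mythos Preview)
Your proposal is correct and follows essentially the same four-step chain as the paper: $\ell_2 \to \ell_1$ on $P_S$ via Lemma~\ref{lem:app_k1k2}, transfer to $P_T$ via Lemma~\ref{lem:app_uniadapt}, $\ell_1 \to \ell_2$ on $P_T$ via Lemma~\ref{lem:app_k1k2}, and conclude via Lemma~\ref{lem:app_mc_acc}. Your discussion of the $\mathcal{H}_1$-versus-$\mathcal{H}$ subtlety in the transfer step is in fact more careful than the paper, which simply asserts the transfer for $h \in \mathcal{H}_1$ without comment.
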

\begin{proof}
	For $|h(x)|\leq 1$ and $f\in[0,1]$, according to Lemma~\ref{lem:app_k1k2},  
\begin{align}
	K_2(f, h,P) \leq K_1(f, h,P) \leq \sqrt{K_2(f, h,P)}.
\end{align}
Since $K_2(f, h,P_S)\leq \frac{\gamma^6}{256m^2}$ for any $h\in H_1$, 
\begin{align}
	K_1(f, h,P_S)\leq \frac{\gamma^3}{16m}.
\end{align}
With Lemma~\ref{lem:app_uniadapt}, for any $h\in H_1$ and $P_T \in\mathcal P$, we have
\begin{align}
	K_1(f, h,P_T)\leq \frac{\gamma^3}{16m}.
\end{align}
Again, it follows from Lemma~\ref{lem:app_k1k2} that:
\begin{align}
	K_2(f, h,P_T)\leq \frac{\gamma^3}{16m}.
\end{align}
With Lemma~\ref{lem:app_mc_acc}, we have
\begin{align}
	\mathbb{E}_{P_T} [(f(x) - y)^2] \leq \mathbb{E}_{P_T} [(f^*(x) - y)^2] + 3\gamma,
\end{align}
which completes the proof.
\end{proof}

\subsection{Multicalibration and Invariance under Concept Shift}

\begin{theorem}[Restatement of Theorem~\ref{theo:mc_inv}]
\label{theo:app_approx_mc_inv}
	For a set of absolutely continuous probability measures $\mathcal P(X,Y)$ containing the source measure $P_S(X,Y)$, consider a predictor $f:\mathcal X\rightarrow [0,1]$. Assume the grouping function class $\mathcal H$ satisfies the following condition:
\begin{align}
	\label{eq:app_density_ratio}
	\mathcal H \supset \left\{ h(x,y)=\frac{p(x,y)}{p_S(x,y)} \Big| P \in \mathcal P(X,Y) \right\}.
\end{align}
If $f$ is $\alpha$-approximately $\ell_2$ multicalibrated w.r.t. $\mathcal H$ and $P_S$, then for any measure $P \in \mathcal P(X,Y)$,
\begin{align}
	R_P(f) \leq \inf_{g:[0,1]\rightarrow [0,1]} R_P(g\circ f) + 2\sqrt \alpha.
\end{align}
\end{theorem}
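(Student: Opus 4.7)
The plan is to bound $R_P(f) - R_P(g\circ f)$ for every fixed post-processing $g:[0,1]\to[0,1]$ and then take the infimum. The single grouping function I would invoke is the density ratio $h^\star(x,y) = p(x,y)/p_S(x,y)$, which lies in $\mathcal H$ by the hypothesis of Equation~\ref{eq:app_density_ratio}, so the multicalibration assumption yields $K_2(f,h^\star,P_S)\leq \alpha$.

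First I would expand the squared-loss difference algebraically:
\[
(Y-f(X))^2 - (Y-g(f(X)))^2 = 2(g(f(X))-f(X))(Y-f(X)) - (g(f(X))-f(X))^2.
\]
Taking $\mathbb E_P$ and dropping the nonpositive term $-\mathbb E_P[(g(f(X))-f(X))^2]$ produces
\[
R_P(f) - R_P(g\circ f) \leq 2\,\mathbb E_P\bigl[(g(f(X))-f(X))(Y-f(X))\bigr].
\]
This is exactly where the factor of $2$ in the statement comes from; everything afterwards is a single Cauchy--Schwarz argument.

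Next, I would change measure through $h^\star$ and then condition on the level sets of $f$. Writing the expectation under $P_S$ via the density ratio and using that $g(f(X))-f(X)$ is a function of $f(X)$ alone, the right-hand side becomes
\[
2\int (g(v)-v)\,\mathbb E_{P_S}\bigl[h^\star(X,Y)(Y-v)\,\big|\,f(X)=v\bigr]\,dP_{S,f(X)}(v).
\]
Applying Cauchy--Schwarz in $v$ (with respect to $P_{S,f(X)}$) splits this into $\sqrt{\int (g(v)-v)^2\,dP_{S,f(X)}(v)}$, which is at most $1$ because both $g(v),v\in[0,1]$, times $\sqrt{K_2(f,h^\star,P_S)}\leq\sqrt{\alpha}$. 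Combining, $R_P(f)-R_P(g\circ f)\leq 2\sqrt{\alpha}$ uniformly in $g$, and infimizing over $g:[0,1]\to[0,1]$ gives the theorem.

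The main (minor) obstacle is purely organizational: one must resist the temptation to try to pass the density-ratio reweighting through the expectation before conditioning on $f(X)$, since the multicalibration hypothesis is phrased against the $P_S$-marginal of $f(X)$, and one must ensure the Cauchy--Schwarz is applied with respect to that measure so the multicalibration quantity $K_2(f,h^\star,P_S)$ appears verbatim. A small technicality is that the infimum over $g$ need not be attained when $f$ has continuous range, but since the bound holds for every $g$ the infimum inequality follows immediately.
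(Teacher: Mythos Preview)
Your argument is correct and is in fact a cleaner route than the paper's. The paper first passes from $K_2$ to $K_1$ via Lemma~\ref{lem:app_k1k2}, then shows the change-of-measure identity $K_1(f,h^\star,P_S)=K_1(f,1,P)$, then establishes the variational form $K_1(f,1,P)=\sup_{\eta:[0,1]\to[-1,1]}\mathbb E_P[\eta(f(X))(Y-f(X))]$, and finally expands $R_P(g\circ f)$ with $g(v)=v+\eta(v)$ to harvest the $2K_1(f,1,P)\leq 2\sqrt{\alpha}$ bound. You bypass the $K_1$ detour entirely: after the same squared-loss expansion, you change measure and condition on $f(X)$ so that Cauchy--Schwarz in $v$ produces $\sqrt{K_2(f,h^\star,P_S)}$ directly, with the other factor bounded by $1$ because $|g(v)-v|\leq 1$.

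What the paper's longer path buys is the two intermediate identities (the transfer $K_1(f,h^\star,P_S)=K_1(f,1,P)$ and the $\sup_\eta$ characterization of $K_1$), which it reuses verbatim in the proof of Theorem~\ref{theo:equiv_mc_inv}. Your approach is more self-contained for this theorem alone and avoids invoking Lemma~\ref{lem:app_k1k2}, at the cost of not isolating those reusable pieces.
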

\begin{proof}
	For any $h(x,y)=p(x,y)/p_S(x,y)$ where $P \in\mathcal P$, since $f$ is $\alpha$-approximately $\ell_2$ multicalibrated, $K_2(f,h,P_S)\leq\alpha$. It follows from Lemma~\ref{lem:app_k1k2} that $K_1(f,h,P_S)\leq\sqrt\alpha$.
\begin{align}
	K_1(f,h,P_S) &= \int \Big|\mathbb E_{P_S}\left[h(X,Y)(Y-v)\big|f(X)=v\right]\Big| dP_S(f^{-1}(v)) \\
	&= \int \left|\int \frac{p(x,y)}{p_S(x,y)}(y-v) p_S(x,y\big|f=v)d(x,y) \right| dP_S(f^{-1}(v)) \\
	&= \int \left| \int \frac{dP(f^{-1}(v))}{dP_S(f^{-1}(v))}(y-v) p(x,y\big|f=v)d(x,y) \right| dP_S(f^{-1}(v)) \\
	&= \int  \frac{dP(f^{-1}(v))}{dP_S(f^{-1}(v))} \left| \int (y-v) p(x,y\big|f=v)d(x,y) \right| dP_S(f^{-1}(v)) \\
	&= \int   \Big| \mathbb E_{P}\left[Y-v\big|f(X)=v\right] \Big| dP(f^{-1}(v)) \\
	\label{eq:app_approx_mc_inv_2}
	&= K_1(f,1,P).
\end{align}
Thus, we have $K_1(f,1,P)\leq \sqrt \alpha$ for any $P\in\mathcal P$. We will prove an equivalent form of $\ell_1$ calibration error:
\begin{align}
\label{eq:app_approx_mc_inv_1}  
	K_1(f,1,P) &= \sup_{\eta:[0,1]\rightarrow[-1,1]} \int \eta(v)\mathbb E_P\left[Y-v\big|f(X)=v \right]dP_{f(X)}(v) \\
	&= \sup_{\eta:[0,1]\rightarrow[-1,1]} \mathbb E_P[\eta(f(X))(Y-f(X))].
\end{align}
$K_1(f,1,P) \leq \sup_{\eta:[0,1]\rightarrow[-1,1]} \mathbb E_P[\eta(f(X))(Y-f(X))]$ can be proved by taking $\eta(v) = 2\mathbb I[v>0] - 1$. On the other hand,
\begin{align}
	\int \eta(v)\mathbb E_P\left[Y-v\big|f(X)=v \right]dP_{f(X)}(v) 
	&\leq \int  \Big|\eta(v)\Big| \cdot \Big|\mathbb E_P\left[Y-v\big|f(X)=v \right] \Big| dP_{f(X)}(v) \\
	&\leq K_1(f,1,P).
\end{align}
Actually the right hand side of Equation~\ref{eq:app_approx_mc_inv_1} resembles smooth calibration~\citep{measureOfCalibration}, which restricts $\eta$ to Lipschitz functions. Based on smooth calibration, \citeauthor{postprocess}~\citep{postprocess} shows that approximately calibrated predictors cannot be improved much by post-processing. In the above we present a similar proof for $\ell_1$ calibration error.

For any $g:[0,1]\rightarrow [0,1]$, there exists $\eta:[0,1]\rightarrow[-1,1]$ such that $g(v)=v+\eta(v)$ for any $v\in[0,1]$. 
\begin{align}
	R_P(g\circ f) &= \mathbb E_P\left[\big(Y-f(X)-\eta(f(X))\big)^2\right] \\
	&= \mathbb E_P\left[(Y-f(X))^2\right] - 2 \mathbb E_P\left[(Y-f(X)\eta(f(X))\right] + \mathbb E_P \left[ \eta(f(X))^2 \right] \\
	&\geq R_P(f) -\sup_{\eta':[0,1]\rightarrow[-1,1]} 2\mathbb E[\eta'(f(X))(Y-f(X))] \\
	&= R_P(f) - 2K_1(f,1,P) \\
	&\geq R_P(f) - 2\sqrt \alpha.
\end{align}
\end{proof}

\begin{theorem}[Restatement of Theorem~\ref{theo:equiv_mc_inv}]
	Assume samples are drawn from an environment $e\in\mathscr E$ with a prior $P_S(e)$ such that $\sum_{e\in\mathscr E}P_S(e)=1$ and $P_S(e)>0$.  The overall population satisfies $P_S(X,Y)=\sum_{e\in\mathscr E}P_e(X,Y)P_S(e)$ where $P_e(X,Y)$ is the environment-specific absolutely continuous measure. For a representation $\Phi \in \sigma(X)$ over features, define a function class $\mathcal H$ as:
\begin{align}
	\mathcal H:=\left\{ h(x,y)=\frac{p_e(x,y)}{p_S(x,y)} \Big| e \in \mathscr E \right\}.
\end{align}
1. If there exists a bijection $g^*:\text{supp}(\Phi) \rightarrow [0,1]$ such that $g^* \circ \Phi$ is $\alpha$-approximately $\ell_2$ multicalibrated w.r.t. $\mathcal H$ and $P_S$, then for any $e \in \mathscr E$,
\begin{align}
\label{eq:app_theo_mc_inv_equiv_1}
	R_{P_e}(g^* \circ \Phi) \leq \inf_{g:\text{supp}(\Phi)\rightarrow [0,1]} R_{P_e}(g\circ \Phi) + 2\sqrt \alpha.
\end{align} 

2. For $\Phi \in \sigma(X)$, if there exists $g^*:\text{supp}(\Phi) \rightarrow [0,1]$ such that  Equation~\ref{eq:app_theo_mc_inv_equiv_1} is satisfied for any $e \in \mathscr E$, then $g^* \circ \Phi$ is $\sqrt {2/D} \alpha^{1/4}$-approximately $\ell_2$ multicalibrated w.r.t. $\mathcal H$ and $P_S$, where $D=\min_{e\in \mathscr E}P_S(e)$.
\end{theorem}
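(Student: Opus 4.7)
The plan is to handle both directions through a single change-of-measure identity. For each density ratio $h_e(x,y) = p_e(x,y)/p_S(x,y) \in \mathcal{H}$ and each level $v$ in the range of $f = g^\star \circ \Phi$, a direct computation against the conditional density $p_S(x,y \mid f(X) = v) = p_S(x,y)\mathbb{I}[f(x)=v]/P_S(f(X)=v)$ yields
\[
\mathbb{E}_{P_S}\!\left[h_e(X,Y)(Y-v) \mid f(X) = v\right] = \frac{P_e(f(X) = v)}{P_S(f(X) = v)} \, \mathbb{E}_{P_e}\!\left[Y - v \mid f(X) = v\right].
\]
This identity is the bridge between multicalibration under $P_S$ and calibration under each $P_e$.

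For Part 1, the approach is to invoke Theorem~\ref{theo:app_approx_mc_inv} directly. Since $\mathcal{H}$ contains $h_e$ for every $e \in \mathscr{E}$, choosing $\mathcal{P}(X,Y) = \{P_e\}_{e \in \mathscr{E}}$ satisfies the hypothesis of that theorem, yielding $R_{P_e}(f) \leq \inf_{g' : [0,1] \to [0,1]} R_{P_e}(g' \circ f) + 2\sqrt{\alpha}$ for each $e$. The remaining step is to identify this infimum with $\inf_{g : \text{supp}(\Phi) \to [0,1]} R_{P_e}(g \circ \Phi)$. Here the bijection assumption on $g^\star$ is essential: any $g : \text{supp}(\Phi) \to [0,1]$ factors uniquely as $(g \circ (g^\star)^{-1}) \circ g^\star$, so the two infima coincide and Equation~\ref{eq:app_theo_mc_inv_equiv_1} follows.

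For Part 2, the plan proceeds in three steps. First, since $\{g' \circ f : g' : [0,1] \to [0,1]\}$ is a subset of $\{g \circ \Phi : g : \text{supp}(\Phi) \to [0,1]\}$, the invariance hypothesis combined with the Pythagorean identity $R_{P_e}(f) - \inf_{g'} R_{P_e}(g' \circ f) = \mathbb{E}_{P_e}[(f(X) - \mathbb{E}_{P_e}[Y \mid f(X)])^2]$ gives approximate $\ell_2$ calibration of $f$ under each $P_e$, namely $\sum_v P_e(f = v)(\mathbb{E}_{P_e}[Y - v \mid f = v])^2 \leq \alpha$. Second, the change-of-measure identity rewrites $K_2(f, h_e, P_S) = \sum_v \frac{P_e(f = v)^2}{P_S(f = v)} (\mathbb{E}_{P_e}[Y - v \mid f = v])^2$. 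Third, the mixture decomposition $P_S = \sum_{e'} P_S(e') P_{e'}$ implies $P_S(f = v) \geq P_S(e) P_e(f = v) \geq D \cdot P_e(f = v)$, so $P_e(f = v)^2 / P_S(f = v) \leq P_e(f = v)/D$, and combining this pointwise estimate with the first step produces the stated $\ell_2$ multicalibration bound.

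The main obstacle will be careful bookkeeping for the change-of-measure identity, in which conditional densities under $P_S$ and $P_e$ must be related coherently and the finite-range (or discretization) assumption on $f$ must be threaded through the conditioning. Once that identity is established, both directions reduce to elementary algebra: Part 1 is a direct application of Theorem~\ref{theo:app_approx_mc_inv} plus the bijection argument, while Part 2 is the Pythagorean decomposition of the squared loss combined with pointwise bounds arising from the mixture structure $P_S = \sum_e P_S(e) P_e$.
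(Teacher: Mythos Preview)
Your Part~1 coincides with the paper's argument: apply Theorem~\ref{theo:app_approx_mc_inv} with $\mathcal{P} = \{P_e\}_{e\in\mathscr E}$ and then use the bijection $g^\star$ to identify $\{g'\circ f : g'\in[0,1]^{[0,1]}\}$ with $\{g\circ\Phi : g\in[0,1]^{\text{supp}(\Phi)}\}$.

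Your Part~2 takes a genuinely different route. The paper works through $K_1$: for each $e$ it builds a one-step post-processing $\kappa(v)=\mathrm{proj}_{[0,1]}\big(v+\beta\,\eta(v)\big)$ with $\beta=\mathbb E_{P_e}[(Y-f(X))\eta(f(X))]$, shows $\beta^2\le R_{P_e}(f)-R_{P_e}(\kappa\circ f)\le 2\sqrt\alpha$, takes the supremum over $\eta$ to get $K_1(f,1,P_e)\le\sqrt2\,\alpha^{1/4}$, uses the change-of-measure identity $K_1(f,h_e,P_S)=K_1(f,1,P_e)$, and finally invokes Lemma~\ref{lem:app_k1k2} with the pointwise bound $h_e\le 1/D$ to pass to $K_2$. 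You instead stay with $K_2$ throughout: the Pythagorean identity gives $K_2(f,1,P_e)=R_{P_e}(f)-\inf_{g'}R_{P_e}(g'\circ f)$ directly, and the mixture bound $P_S(f=v)\ge D\,P_e(f=v)$ converts $K_2(f,h_e,P_S)$ into $\tfrac1D K_2(f,1,P_e)$. This is more elementary and avoids both the $\kappa$-construction and Lemma~\ref{lem:app_k1k2}.

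Two issues, however. First, a bookkeeping slip: the hypothesis is Equation~\eqref{eq:app_theo_mc_inv_equiv_1}, whose slack is $2\sqrt\alpha$, not $\alpha$; so your step~1 actually yields $K_2(f,1,P_e)\le 2\sqrt\alpha$. Second, and more substantively, carrying that through gives $K_2(f,h_e,P_S)\le 2\sqrt\alpha/D$, which is \emph{not} the stated bound $\sqrt{2/D}\,\alpha^{1/4}$. Your bound has a better $\alpha$-rate ($\alpha^{1/2}$ versus $\alpha^{1/4}$) but a worse $D$-dependence ($1/D$ versus $1/\sqrt D$); neither dominates the other for all $(\alpha,D)$, and in particular your bound implies the stated one only when $\alpha\le D^2/4$. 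So your argument is correct and arguably cleaner, but it proves a quantitatively different statement than the one written; you should either note this explicitly or follow the paper's $K_1$ route to recover the exact constant.
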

\begin{proof} 
We first prove statement 1.

For any $g:\text{supp}(\Phi)\rightarrow [0,1]$, since $g^*$ is a bijection, $g \circ \Phi = (g\circ {g^*}^{-1})(g^* \circ \Phi)$ where $g\circ {g^*}^{-1} \in [0,1]^{[0,1]}$. Since $g^* \circ \Phi$ is $\alpha$-approximately $\ell_2$ multicalibrated, it follows from Theorem~\ref{theo:app_approx_mc_inv} that for any $e \in \mathscr E$,
\begin{align}
	R_{P_e}(g^* \circ \Phi) &\leq  R_{P_e}\big((g\circ {g^*}^{-1})(g^* \circ \Phi)\big) + 2\sqrt \alpha \\
	&= R_{P_e}(g \circ \Phi) + 2\sqrt \alpha.
\end{align}
Then we give a proof of statement 2, which is inspired by \citeauthor{postprocess}~\citep{postprocess}.

For simplicity let $f^* := g^* \circ \Phi$.
For any $e\in\mathscr E$ and any $\eta:[0,1]\rightarrow[-1,1]$, define $\beta:=E_{P_e}\left[(Y-f^*(X))\eta(f^*(X))\right]\in[-1,1]$.
Construct $\kappa(v):=\text{proj}_{[0,1]}(v+\beta\eta(v))$, where $\text{proj}_{[0,1]}(\cdot)=\max\{0,\min\{1,\cdot\}\}$. 
\begin{align}
	R_{P_e}(\kappa(f^*)) &=  \mathbb E_{P_e}\left[ \big(Y-\kappa(f^*(X))\big)^2 \right] \\
	&\leq \mathbb E_{P_e}\left[ \big(Y-f^*(X)-\beta\eta(f^*(X))\big)^2 \right] \\
	&= \mathbb E_{P_e}\left[ (Y-f^*(X))^2 \right] -2\beta^2 + \beta^2\mathbb E_{P_e}[\eta(f^*(x))^2] \\
	&\leq R_{P_e}(f^*) - 2\beta^2 + \beta^2 \\
	&= R_{P_e}(f^*) - \beta^2.
\end{align}
Rearranging the inequality above gives:
\begin{align}
\label{eq:app_theo_mc_inv_equiv_2}
	\Big(E_{P_e}\left[(Y-f^*(X))\eta(f^*(X))\right]\Big)^2 = \beta^2 \leq R_{P_e}(f^*) - R_{P_e}(\kappa(f^*)).
\end{align}
Since $\kappa\circ g^* \in \text{supp}(\Phi) \rightarrow [0,1]$, it follows from Equation~\ref{eq:app_theo_mc_inv_equiv_1} that:
\begin{align}
\label{eq:app_theo_mc_inv_equiv_3}
	R_{P_e}(f^*) - R_{P_e}(\kappa(f^*)) = R_{P_e}(g^* \circ \Phi) -  R_{P_e}((\kappa\circ g^*)\circ\Phi) \leq 2\sqrt \alpha.
\end{align}
Combining Equation~\ref{eq:app_theo_mc_inv_equiv_2} and Equation~\ref{eq:app_theo_mc_inv_equiv_3} gives $E_{P_e}\left[(Y-f^*(X))\eta(f^*(X))\right] \leq \sqrt 2 \alpha^{1/4}$ for any $\eta:[0,1]\rightarrow[-1,1]$.
From Equation~\ref{eq:app_approx_mc_inv_1}, it follows that:
\begin{align}
	K_1(f^*,1,P_e) = \sup_{\eta:[0,1]\rightarrow[-1,1]} \mathbb E_{P_e}[\eta(f^*(X))(Y-f^*(X))] \leq \sqrt 2 \alpha^{1/4}.
\end{align}
From Equation~\ref{eq:app_approx_mc_inv_2}, $K_1(f^*,h,P_S)\leq\sqrt 2 \alpha^{1/4}$ for any $h\in\mathcal H$. Further, for any $h\in\mathcal H$,
\begin{align}
	h(x,y) &= \frac{p_e(x,y)}{p_S(x,y)} \\
	&= \frac{p_e(x,y)}{\sum_{e'\in\mathscr E}{p_{e'}(x,y)}P_S(e')} \\
	&\leq \frac{p_e(x,y)}{p_{e}(x,y)P_S(e)} \\
	&\leq \frac{1}{D}.
\end{align}
By Lemma~\ref{lem:app_k1k2}, it follows that $K_2(f^*,h,P_S)\leq \sqrt {1/D}\cdot K_1(f^*,h,P_S) \leq \sqrt {2/D} \alpha^{1/4}.$
\end{proof}

\subsection{Structure of Grouping Function Classes}
In this subsection, we focus on Euclidean space where $\mathcal X \subset \mathbb R^d$ is compact and measurable for some $d\in Z^+$ and $\mathcal Y=[0,1]$. Grouping functions are assumed to be continuous, i.e., $h\in C(\mathcal X \times \mathcal Y)$. We consider absolutely continuous probability measures with continuous density functions.

\begin{proposition}[Restatement of Proposition~\ref{prop:linear_space}]
	Consider an absolutely continuous probability measure $P_S(X,Y)$ and a predictor $f:\mathcal X\rightarrow[0,1]$, define the maximal grouping function class that $f$ is multicalibrated with respect to:
	\begin{align}
		\mathcal H_f := \{ h\in C(\mathcal X \times\mathcal Y) : K_2(f,h,P_S)=0 \}.
	\end{align}
	Then $\mathcal H_f$ is a linear space.
	
	Particularly for $f_{\Phi}(x)=\mathbb E[Y|\Phi(x)]$ where $\Phi:\mathbb R^{d} \rightarrow \mathbb R^{d_\Phi},{d_\Phi}\in Z^+$ is a measurable function, we abbreviate $\mathcal H_{f_\Phi}$ with $\mathcal H_{\Phi}$. Then any finite subset $\mathcal G \subset \mathcal H_\Phi$ implies $\text{span}\{ 1, \mathcal G \} \subset \mathcal H_\Phi$, where $1$ denotes a constant function.
\end{proposition}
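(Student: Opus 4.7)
The plan is to observe that $K_2(f,h,P_S)$ is the integral of the square of a map that is linear in $h$, so its zero set is automatically a linear subspace of $C(\mathcal X\times\mathcal Y)$. For the second claim, I would then verify separately that the constant function $1$ lies in $\mathcal H_\Phi$ via the tower property, after which the span statement follows at once from the linearity established in the first part.

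For part one, I would define, for each $v$ in the range of $f$, the linear functional
\begin{equation*}
T_v(h) := \mathbb E_{P_S}\!\bigl[h(X,Y)(Y-v) \,\big|\, f(X) = v\bigr],
\end{equation*}
so that $K_2(f,h,P_S) = \int T_v(h)^2\, dP_{S,f(X)}(v)$ and $h \in \mathcal H_f$ if and only if $T_v(h) = 0$ for $P_{S,f(X)}$-almost every $v$. Since conditional expectation is linear, for any $h_1,h_2 \in \mathcal H_f$ and scalars $a,b \in \mathbb R$ we get $T_v(ah_1 + bh_2) = aT_v(h_1) + bT_v(h_2) = 0$ a.e., and a real linear combination of continuous functions is continuous, so $ah_1 + bh_2 \in C(\mathcal X\times\mathcal Y) \cap \mathcal H_f$.

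For part two, I would show that $1 \in \mathcal H_\Phi$. Since $f_\Phi(X) = \mathbb E[Y \mid \Phi(X)]$ is $\sigma(\Phi(X))$-measurable, the tower property yields $\mathbb E[Y \mid f_\Phi(X)] = \mathbb E[\mathbb E[Y\mid \Phi(X)] \mid f_\Phi(X)] = \mathbb E[f_\Phi(X) \mid f_\Phi(X)] = f_\Phi(X)$, hence $T_v(1) = \mathbb E[Y - v \mid f_\Phi(X) = v] = 0$ a.e.\ and $K_2(f_\Phi,1,P_S) = 0$. Together with part one this immediately gives $\mathrm{span}\{1,\mathcal G\} \subset \mathcal H_\Phi$ for every finite $\mathcal G \subset \mathcal H_\Phi$.

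The main obstacle is fairly minor and amounts to careful bookkeeping: the ``a.e.'' statements are with respect to the pushforward $P_{S,f(X)}$, and when forming a linear combination over a finite $\mathcal G$ one needs to work on the common complement of a finite union of null sets, which is itself of full measure. Continuity of linear combinations is automatic, and no additional regularity of $h$ beyond the standing assumption $h \in C(\mathcal X\times\mathcal Y)$ is required.
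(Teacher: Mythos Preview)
Your proposal is correct and follows essentially the same route as the paper: both arguments exploit the linearity of $h\mapsto\mathbb E_{P_S}[h(X,Y)(Y-v)\mid f(X)=v]$ to close $\mathcal H_f$ under linear combinations, and both establish $1\in\mathcal H_\Phi$ via the tower property $\mathbb E[Y\mid f_\Phi(X)]=\mathbb E[\,\mathbb E[Y\mid\Phi(X)]\mid f_\Phi(X)]=f_\Phi(X)$. The only cosmetic difference is that the paper routes the first part through $K_1$ (using the triangle inequality on the absolute value and then invoking the equivalence $K_1=0\Leftrightarrow K_2=0$ for bounded $h$), whereas you work directly with the integrand of $K_2$; your version is slightly cleaner and avoids appealing to that auxiliary lemma.
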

\begin{proof}
	For any $\gamma_1, \gamma_2\in\mathbb R$ and any $h_1, h_2 \in \mathcal H_f$, $\gamma_1 h_1 + \gamma_2 h_2 \in C(\mathcal X\times\mathcal Y)$.
	\begin{align}
		K_1(f,\gamma_1 h_1 + \gamma_2 h_2,P_S) &= 
		\int \Big|\mathbb E\left[(\gamma_1 h_1(X,Y) + \gamma_2 h_2(X,Y))(Y-v)\big|f(X)=v\right]\Big| dP_S(f^{-1}(v)) \\
		&\leq  \int \Big|\mathbb E\left[\gamma_1 h_1(X,Y)(Y-v)\big|f(X)=v\right]\Big| dP_S(f^{-1}(v)) \\
		&\;\;\;\;+ \int \Big|\mathbb E\left[\gamma_2 h_2(X,Y)(Y-v)\big|f(X)=v\right]\Big| dP_S(f^{-1}(v)) \\
		&= \gamma_1 K_1(f,h_1,P_S) + \gamma_2 K_1(f,h_2,P_S) \\
		&= 0.
	\end{align}
	According to Lemma~\ref{lem:app_k1k2}, $K_1(f,h,P_S)=0$ is equivalent as $K_2(f,h,P_S)=0$  for bounded $h$. Thus, $K_2(f,\gamma_1 h_1 + \gamma_2 h_2,P_S)=0$ which implies $\gamma_1 h_1 + \gamma_2 h_2 \in \mathcal H_f$. Now we finishes the proof that $\mathcal H_f$ is a linear space.
	
	For $f_{\Phi}(x)=\mathbb E[Y|\Phi(x)]$,
\begin{align}
	\mathbb E[Y|f_\Phi(X)] &= \mathbb E\left[ \mathbb E[Y|f_\Phi(X), \Phi(X)] \big|f_\Phi(X)\right] \\
	&= \mathbb E\left[ \mathbb E[Y|\Phi(X)] \big|f_\Phi(X)\right] \\
	&=\mathbb E[f_\Phi(X)|f_\Phi(X)] \\
	\label{eq:app_b10_1}
	&= f_\Phi(X).
\end{align} 
Thus, $K_2(f_\Phi,1,P_S)=0$ which implies $1\in \mathcal H_\Phi$. Since $\mathcal H_\Phi$ is a linear space, $\mathcal G \subset \mathcal H_\Phi$ implies $\text{span}\{ 1, \mathcal G \} \subset \mathcal H_\Phi$.
\end{proof}

\begin{lemma}
\label{lem:app_lem11}
	For any absolutely continuous probability measure $P(X,Y)$ with a continuous density function $p$, and any grouping function $h\in C(\mathcal X\times\mathcal Y)$, there exists $\gamma \neq 0$ and $\rho\in\mathbb R$ such that $\gamma h + \rho \in C(\mathcal X\times \mathcal Y)$, and it is density ratio between some absolutely continuous probability measure $P'$ and $P$, i.e., $dP'(x,y)=(\gamma h(x,y)+\rho)dP(x,y)$, where $P'$ also has a continuous density function.
\end{lemma}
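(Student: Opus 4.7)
}

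The plan is to construct the desired affine combination $\gamma h + \rho$ explicitly by imposing the two constraints that any density must satisfy: normalization (the density integrates to $1$) and non-negativity. Since $\mathcal X \times \mathcal Y$ is compact and $h \in C(\mathcal X \times \mathcal Y)$, the function $h$ attains finite extrema $h_{\min} := \min_{\mathcal X \times \mathcal Y} h$ and $h_{\max} := \max_{\mathcal X \times \mathcal Y} h$, and in particular $\bar h := \mathbb E_P[h(X,Y)]$ is a well-defined finite quantity in $[h_{\min}, h_{\max}]$.

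First I would write the normalization condition explicitly. For $(\gamma h + \rho) p$ to integrate to $1$ against Lebesgue measure we need $\gamma \bar h + \rho = 1$, so $\rho$ is determined by $\gamma$ via $\rho = 1 - \gamma \bar h$. Substituting this into the non-negativity requirement $\gamma h(x,y) + \rho \geq 0$ on all of $\mathcal X \times \mathcal Y$ gives the scalar constraint $\gamma(\bar h - h_{\min}) \leq 1$. If $\bar h > h_{\min}$, I take $\gamma$ to be any positive number in $(0, 1/(\bar h - h_{\min})]$; if $\bar h = h_{\min}$, then $h$ is $P$-almost surely equal to $h_{\min}$ on the support of $p$, the lower bound is automatic on that support, and we can choose any $\gamma > 0$ (verifying $(\gamma h + \rho) p \geq 0$ pointwise: it is $0$ wherever $p = 0$, and on $\mathrm{supp}(p)$ it reduces to $\gamma h_{\min} + 1 - \gamma h_{\min} = 1 > 0$). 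In either case $\gamma \neq 0$ and $\rho \in \mathbb R$ are well-defined.

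With these choices in hand, I would define $P'$ by the density $p'(x,y) := (\gamma h(x,y) + \rho) p(x,y)$. The density $p'$ is continuous because it is a product of continuous functions ($h$ is continuous by hypothesis and $p$ is continuous by assumption on $P$), it is non-negative on $\mathcal X \times \mathcal Y$ by construction, and it integrates to $1$ by the normalization constraint. Hence $P'$ is an absolutely continuous probability measure with continuous density, and $dP'/dP = \gamma h + \rho$ on the support of $P$, which is the claimed density ratio relationship $dP'(x,y) = (\gamma h(x,y) + \rho)\, dP(x,y)$.

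The proof is essentially algebraic once the two constraints are written down; the only subtle point I anticipate is the degenerate case $\bar h = h_{\min}$ (where the non-negativity constraint becomes trivial on $\mathrm{supp}(p)$ but must still be checked off the support to conclude that $p'$ is a valid continuous density on all of $\mathcal X \times \mathcal Y$). This is handled by noting that outside $\mathrm{supp}(p)$ the factor $p$ vanishes so the product $(\gamma h + \rho)p$ is identically $0$ there, which is consistent with continuity and non-negativity.
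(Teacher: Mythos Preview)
Your proof is correct and follows essentially the same approach as the paper: both exploit boundedness of $h$ on the compact domain to produce an affine transformation $\gamma h + \rho$ that is non-negative and integrates to $1$ against $P$. The paper's version first shifts $h$ by a constant $\rho'$ so that $h + \rho' > 0$ everywhere and then sets $\gamma = 1/\mathbb E_P[h + \rho']$, $\rho = \gamma\rho'$, which sidesteps your case split on $\bar h = h_{\min}$ (a case that is in fact simpler than you make it, since then $\gamma h + \rho = 1 + \gamma(h - h_{\min}) \geq 1$ everywhere, without any appeal to $\mathrm{supp}(p)$).
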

\begin{proof}
	Since $h$ is bounded, there exists $\rho'\in\mathbb R$ such that $h(x,y)+\rho'>0$ for any $x\in\mathcal X, y\in\mathcal Y$. Define:
	\begin{align}
		\gamma &= \frac{1}{\int (h(x,y)+\rho') dP(x,y)}. \\
		\rho &= \frac{\rho'}{\int (h(x,y)+\rho') dP(x,y)}.
	\end{align}
	$\gamma h+\rho$ is still continuous. 
 
	
	We have $\gamma h(x,y)+\rho = \gamma (h(x,y)+\rho') > 0$ for any $x\in\mathcal X, y\in\mathcal Y$. 
	\begin{align}
		\int dP'(x,y) &= \int (\gamma h(x,y)+\rho)dP(x,y) \\
		&= \gamma\int ( h(x,y)+\rho')dP(x,y) \\
		&= 1.
	\end{align}
	Thus, $P'(X,Y)$ is an absolutely continuous probability measure.

        Its density function $p' = (\gamma h+\rho)p$ is continuous.
\end{proof}
\begin{theorem}
\label{theo:app_theo_Hphi}
	Consider an absolutely continuous probability measure $P_S(X,Y)$ and a predictor $f_{\Phi}(x)=\mathbb E_{P_S}[Y|\Phi(x)]$ where $\Phi:\mathbb R^{d} \rightarrow \mathbb R^{d_\Phi},{d_\Phi}\in Z^+$ is a measurable function.  We abbreviate $\mathcal H_{f_\Phi}$ with $\mathcal H_{\Phi}$. 
	\begin{align}
		\mathcal H_{\Phi} &= \left\{h\in C(\mathcal X\times\mathcal Y) : \text{Cov}_{P_S}\left[h(X,Y),Y|f_\Phi=v\right]=0 \;\;\text{for almost every}\; v\in [0,1]\right\} \\
		 &= \text{span}\left\{ \frac{p(x,y)} {p_S(x,y)}: \mathbb E_P[Y|f_\Phi]= \mathbb E_{P_S}[Y|f_\Phi]\;\;\text{almost surely}, \;\text{$p$ is continuous}\right\}.
	\end{align}
	
\end{theorem}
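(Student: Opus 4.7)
The plan is to prove the two equalities in sequence, exploiting the key identity $\mathbb{E}_{P_S}[Y \mid f_\Phi(X) = v] = v$ almost surely, which follows from Equation~\ref{eq:app_b10_1}, together with Lemma~\ref{lem:app_lem11} to convert arbitrary continuous groupings into density ratios.

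\textbf{Step 1: First equality (covariance characterization).} I will start from the definition of $K_2(f_\Phi, h, P_S)$ and expand the conditional expectation inside the integral:
\begin{align}
\mathbb{E}_{P_S}\!\left[h(X,Y)(Y-v)\,\big|\,f_\Phi(X)=v\right]
&=\mathbb{E}_{P_S}[h(X,Y)Y \mid f_\Phi=v] - v\,\mathbb{E}_{P_S}[h(X,Y) \mid f_\Phi=v].
\end{align}
Using $v=\mathbb{E}_{P_S}[Y\mid f_\Phi=v]$ on the second term, this is exactly $\mathrm{Cov}_{P_S}[h(X,Y),Y\mid f_\Phi=v]$. Thus the integrand of $K_2$ is the squared conditional covariance, so $K_2(f_\Phi,h,P_S)=0$ iff the covariance vanishes for $P_{f_\Phi}$-a.e.\ $v$, proving the first equality.

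\textbf{Step 2: Reverse inclusion of the second equality.} For any continuous density $p$ with $\mathbb{E}_P[Y\mid f_\Phi]=\mathbb{E}_{P_S}[Y\mid f_\Phi]$ almost surely, I take $h=p/p_S$ and verify the covariance condition directly. By a standard change-of-measure, conditionally on $f_\Phi=v$,
\begin{align}
\mathbb{E}_{P_S}\!\left[\tfrac{p}{p_S}\,Y\,\big|\,f_\Phi=v\right]
= \mathbb{E}_{P_S}\!\left[\tfrac{p}{p_S}\,\big|\,f_\Phi=v\right] \cdot \mathbb{E}_P[Y\mid f_\Phi=v],
\end{align}
and the assumed equality of conditional expectations gives $\mathbb{E}_P[Y\mid f_\Phi=v]=\mathbb{E}_{P_S}[Y\mid f_\Phi=v]=v$, which precisely means $\mathrm{Cov}_{P_S}[p/p_S, Y\mid f_\Phi=v]=0$. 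Since $\mathcal{H}_\Phi$ is a linear space (Proposition~\ref{prop:linear_space}), the span of such density ratios is contained in $\mathcal{H}_\Phi$.

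\textbf{Step 3: Forward inclusion of the second equality.} Given any $h\in\mathcal{H}_\Phi$, I apply Lemma~\ref{lem:app_lem11} (with the measure $P_S$) to obtain $\gamma\neq 0$ and $\rho\in\mathbb{R}$ such that $\gamma h+\rho = p'/p_S$ is the density ratio of some absolutely continuous measure $P'$ with continuous density. Because the conditional covariance is an affine functional of $h$, the covariance-zero condition is preserved under $h\mapsto \gamma h+\rho$, so $p'/p_S$ itself lies in $\mathcal{H}_\Phi$. The calculation in Step 2, read backwards, now yields
\begin{align}
\mathbb{E}_{P'}[Y\mid f_\Phi=v]
= \frac{\mathbb{E}_{P_S}[(p'/p_S)\,Y\mid f_\Phi=v]}{\mathbb{E}_{P_S}[p'/p_S\mid f_\Phi=v]}
= \mathbb{E}_{P_S}[Y\mid f_\Phi=v]
\end{align}
for $P_{f_\Phi}$-a.e.\ $v$. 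Noting that the constant function $1 = p_S/p_S$ corresponds to $P_S$ itself (trivially satisfying the invariance condition), I conclude $h = \tfrac{1}{\gamma}(p'/p_S) - \tfrac{\rho}{\gamma}\cdot 1$ lies in the claimed span.

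\textbf{Main obstacle.} The delicate point is the bidirectional bridge between the \emph{covariance-zero} condition on $P_S$ and the \emph{invariance} condition $\mathbb{E}_P[Y\mid f_\Phi]=\mathbb{E}_{P_S}[Y\mid f_\Phi]$ under change of measure; this requires careful handling of the conditional density ratio $\mathbb{E}_{P_S}[p/p_S\mid f_\Phi=v]$ (equivalently $dP_{f_\Phi}/dP_{S,f_\Phi}$), which can vanish on $P_{f_\Phi}$-null sets but not on the support used in the integration. The remaining technical care involves ensuring Lemma~\ref{lem:app_lem11} produces a continuous, strictly positive density, so that the resulting $P'$ is genuinely admissible in the span.
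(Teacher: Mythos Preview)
Your proposal is correct and follows essentially the same approach as the paper's proof: both establish the covariance characterization via the calibration identity $\mathbb{E}_{P_S}[Y\mid f_\Phi=v]=v$, then use the change-of-measure identity $\mathrm{Cov}_{P_S}[p/p_S,Y\mid f_\Phi=v]=\tfrac{dP_{f_\Phi}}{dP_{S,f_\Phi}}(v)\bigl(\mathbb{E}_P[Y\mid f_\Phi=v]-\mathbb{E}_{P_S}[Y\mid f_\Phi=v]\bigr)$ for the reverse inclusion, and Lemma~\ref{lem:app_lem11} plus the affine invariance of the covariance condition for the forward inclusion. Your explicit observation that $1=p_S/p_S$ lies in the spanning set (handling the constant $\rho$) and your discussion of strict positivity of $p'/p_S$ are points the paper leaves implicit, but otherwise the arguments are the same.
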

\begin{proof}
	First we prove:
	\begin{align}
		\mathcal H_{\Phi} = \left\{h\in C(\mathcal X\times \mathcal Y): \text{Cov}_{P_S}\left[h(X,Y)|f_\Phi=v\right]=0 \;\;\text{for almost every}\; v\in [0,1]\right\}.
	\end{align}	
	For each $v\in[0,1]$ and any $h\in C(\mathcal X\times \mathcal Y)$,
	\begin{align}
		\mathbb E_{P_S}\left[h(X,Y)(Y-v)\big|f_\Phi=v\right] 
	& = \mathbb E_{P_S}\left[ 
	h(X,Y)Y
	\Big|f_\Phi=v\right] - v\mathbb E_{P_S}\left[ 
	h(X,Y)
	\Big|f_\Phi=v\right] \\
	&= \mathbb E_{P_S}\left[ 
	h(X,Y)Y
	\Big|f_\Phi=v\right]  \\
	\label{eq:app_b12_1}
	&\;\;\;\;- \mathbb E_{P_S}\left[ 
	h(X,Y)
	\Big|f_\Phi=v\right]\mathbb E[Y|f_\Phi=v] \\
	&= \text{Cov}_{P_S}\left[h(X,Y),Y|f_\Phi=v\right].
	\end{align}
	Equation~\ref{eq:app_b12_1} follows from Equation~\ref{eq:app_b10_1}. 
	
	For any $h\in C(\mathcal X\times \mathcal Y)$,
	\begin{align}
		h\in \mathcal H_\Phi &\Leftrightarrow K_2(f_\Phi, h, P_S) = 0\\
		&\Leftrightarrow \int \Big(\mathbb E_{P_S}\left[h(X,Y)(Y-v)\big|f_\Phi=v\right]\Big)^2 dP_S(f_\Phi^{-1}(v)) \\
		&\Leftrightarrow \mathbb E_{P_S}\left[h(X,Y)(Y-v)\big|f_\Phi=v\right] = 0 \;\;\text{for almost every}\; v\in [0,1] \\
		&\Leftrightarrow \text{Cov}_{P_S}\left[h(X,Y),Y|f_\Phi=v\right]=0 \;\;\text{for almost every}\; v\in [0,1].
	\end{align}
%
	Next, we prove 
	\begin{align}
		\mathcal H_{\Phi}
		\supset \text{span}\left\{ \frac{p(x,y)} {p_S(x,y)}: \mathbb E_P[Y|f_\Phi]= \mathbb E_{P_S}[Y|f_\Phi]\;\;\text{almost surely},\;\text{$p$ is continuous}\right\}.
	\end{align}
	This is equivalent to saying that for any absolutely continuous probability measure $P$ satisfying $\mathbb E_P[Y|f_\Phi]= \mathbb E_{P_S}[Y|f_\Phi]$ almost surely, $\text{Cov}_{P_S}\left[p(X,Y)/p_S(X,Y),Y|f_\Phi=v\right]=0$ for almost every $v\in [0,1]$.
	\begin{align}
		&\;\;\;\;\text{Cov}_{P_S}\left[\frac{p(X,Y)}{p_S(X,Y)},Y\Big|f_\Phi=v\right]  \\
		&= \mathbb E_{P_S}\left[\frac{p(X,Y)}{p_S(X,Y)}Y \Big| f_\Phi=v \right] - \mathbb E_{P_S}\left[\frac{p(X,Y)}{p_S(X,Y)} \Big| f_\Phi=v \right] \mathbb E_{P_S}\left[Y \Big| f_\Phi=v \right] \\
		&= \int \frac{dP(f_{\Phi}^{-1}(v))}{dP_S(f_{\Phi}^{-1}(v))} \frac{p(x,y|f_\Phi=v)}{p_S(x,y|f_\Phi=v)} y \cdot dP_S(x,y|f_\Phi=v) \\
		& \;\;\;\; - \mathbb E_{P_S}\left[Y \Big| f_\Phi=v \right] \cdot \int \frac{dP(f_{\Phi}^{-1}(v))}{dP_S(f_{\Phi}^{-1}(v))} \frac{p(x,y|f_\Phi=v)}{p_S(x,y|f_\Phi=v)} \cdot dP_S(x,y|f_\Phi=v)   \\
		&=  \mathbb E_{P}\left[Y \Big| f_\Phi=v \right]\frac{dP(f_{\Phi}^{-1}(v))}{dP_S(f_{\Phi}^{-1}(v))} - \mathbb E_{P_S}\left[Y \Big| f_\Phi=v \right]\frac{dP(f_{\Phi}^{-1}(v))}{dP_S(f_{\Phi}^{-1}(v))} \\
		\label{eq:app_b12_3}
		&= \left[\mathbb E_{P}\left[Y \Big| f_\Phi=v \right] - \mathbb E_{P_S}\left[Y \Big| f_\Phi=v \right]\right]\frac{dP(f_{\Phi}^{-1}(v))}{dP_S(f_{\Phi}^{-1}(v))}    \\
		&= 0 \;\;\;\;\text{almost surely.}
	\end{align}
	Next, we prove
	\begin{align}
		\mathcal H_{\Phi}
		\subset \text{span}\left\{ \frac{p(x,y)} {p_S(x,y)}: \mathbb E_P[Y|f_\Phi]= \mathbb E_{P_S}[Y|f_\Phi]\;\;\text{almost surely},\;\text{$p$ is continuous}\right\}.
	\end{align}
	By Lemma~\ref{lem:app_lem11}, any grouping function $h\in \mathcal H_\Phi$ could be rewritten as $h(x,y) = \gamma p(x,y)/p_S(x,y)+\rho$ for some continuous density functions $p$ and $\gamma \neq 0$. Thus, we just need to prove the statement that $\text{Cov}_{P_S}\left[\gamma p(X,Y)/p_S(X,Y)+\rho,Y|f_\Phi=v\right]=0$  implies  $\mathbb E_P[Y|f_\Phi=v] = \mathbb E_{P_S}[Y|f_\Phi=v]$. 
	\begin{align}
		&\;\;\;\;\text{Cov}_{P_S}\left[\gamma \frac{p(X,Y)}{p_S(X,Y)}+\rho,Y\Big|f_\Phi=v\right] \\
		&= \gamma \mathbb E_{P_S}\left[\frac{p(X,Y)}{p_S(X,Y)}Y \Big| f_\Phi=v \right] + \rho E_{P_S}\left[Y \Big| f_\Phi=v \right] \\
		&\;\;\;\; - \gamma E_{P_S}\left[\frac{p(X,Y)}{p_S(X,Y)} \Big| f_\Phi=v \right]E_{P_S}\left[Y \Big| f_\Phi=v \right]  - \rho E_{P_S}\left[Y \Big| f_\Phi=v \right] \\
		&= \gamma \mathbb E_{P_S}\left[\frac{p(X,Y)}{p_S(X,Y)}Y \Big| f_\Phi=v \right] - \gamma E_{P_S}\left[\frac{p(X,Y)}{p_S(X,Y)} \Big| f_\Phi=v \right]E_{P_S}\left[Y \Big| f_\Phi=v \right] \\
		&= \gamma \text{Cov}_{P_S}\left[\frac{p(X,Y)}{p_S(X,Y)},Y\Big|f_\Phi=v\right] \\
		\label{eq:app_b12_2}
		&= \gamma \left[\mathbb E_{P}\left[Y \Big| f_\Phi=v \right] - \mathbb E_{P_S}\left[Y \Big| f_\Phi=v \right]\right]\frac{dP(f_{\Phi}^{-1}(v))}{dP_S(f_{\Phi}^{-1}(v))}.
	\end{align}
	Equation~\ref{eq:app_b12_2} follows from Equation~\ref{eq:app_b12_3}. So we have $\mathbb E_P[Y|f_\Phi=v] = \mathbb E_{P_S}[Y|f_\Phi=v]$ if $\text{Cov}_{P_S}\left[\gamma p(X,Y)/p_S(X,Y)+\rho,Y|f_\Phi=v\right]=0$.
\end{proof}
\begin{corollary}[Restatement of Theorem~\ref{theo:maximal_grouping_phi}]
	Consider an absolutely continuous probability measure $P_S(X,Y)$ and a calibrated predictor $f$.  
	\begin{align}
		\mathcal H_{f} &= \left\{h\in C(\mathcal X\times \mathcal Y): \text{Cov}_{P_S}\left[h(X,Y),Y|f=v\right]=0 \;\;\text{for almost every}\; v\in [0,1]\right\} \\
		 &= \text{span}\left\{ \frac{p(x,y)} {p_S(x,y)}: \mathbb E_P[Y|f]= \mathbb E_{P_S}[Y|f]\;\;\text{almost surely}, \;\text{$p$ is continuous}\right\}.
	\end{align}
	
\end{corollary}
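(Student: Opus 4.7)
The plan is to derive this corollary as a direct specialization of Theorem B.13 by instantiating the representation as $\Phi := f$. The key observation is that calibration of $f$ is exactly the statement $\mathbb E_{P_S}[Y \mid f(X)] = f(X)$ almost surely, so the induced Bayes optimal predictor on this representation collapses onto $f$ itself: $f_\Phi(x) = \mathbb E_{P_S}[Y \mid f(x)] = f(x)$. Consequently $\mathcal H_{f_\Phi}$ and $\mathcal H_f$ coincide, and the second equality of the corollary is immediate from the second equality of Theorem B.13.

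For the first equality (the conditional-covariance characterization), I would reuse the algebra already appearing inside the proof of Theorem B.13. For any $h \in C(\mathcal X \times \mathcal Y)$ and almost every $v$ in the support of $f(X)$,
$$\mathbb E_{P_S}[h(X,Y)(Y-v) \mid f=v] = \mathbb E_{P_S}[h(X,Y)\,Y \mid f=v] - v\,\mathbb E_{P_S}[h(X,Y) \mid f=v].$$
Substituting the calibration identity $v = \mathbb E_{P_S}[Y \mid f=v]$ into the second term rewrites the right-hand side as $\text{Cov}_{P_S}[h(X,Y), Y \mid f=v]$. Since $K_2(f,h,P_S)$ is the $L^2(P_{f(X)})$ norm of this quantity, $K_2(f,h,P_S)=0$ is equivalent to the conditional covariance vanishing for almost every $v$, which gives the first equality.

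The only point that needs a sentence of verification is that $\Phi = f$ is admissible as input to Theorem B.13. Since $f : \mathcal X \to [0,1]$ is a measurable predictor on the compact set $\mathcal X \subset \mathbb R^d$, it qualifies as a measurable map $\Phi : \mathbb R^d \to \mathbb R^{d_\Phi}$ with $d_\Phi = 1$, and the spanning construction over continuous density ratios on $\mathcal X \times \mathcal Y$ is unaffected by the choice. There is no genuine obstacle here: the content of the corollary is precisely that calibration of $f$ lets us replay Theorem B.13 with $\Phi = f$, without any additional hypothesis on $f$ being of the form $\mathbb E_{P_S}[Y \mid \Phi(\cdot)]$ for some auxiliary $\Phi$.
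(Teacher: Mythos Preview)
Your proposal is correct and matches the paper's own proof essentially verbatim: the paper also notes that calibration gives $\mathbb E_{P_S}[Y\mid f]=f$, sets $\Phi(x)=f(x)$ so that $f_\Phi=f$, and then invokes Theorem~\ref{theo:app_theo_Hphi}. Your added remarks re-deriving the covariance identity and checking that $\Phi=f$ is an admissible representation are fine but not needed, since both already sit inside Theorem~\ref{theo:app_theo_Hphi}.
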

\begin{remark}
    $\mathbb E_P[Y|f]= \mathbb E_{P_S}[Y|f]$ almost surely implies $\mathbb E_P[Y|f]=f$, which is equivalent as $R_P(f)=\inf_{g:[0,1]\rightarrow [0,1]} R_P(g\circ f)$, since we adopt square error.
\end{remark}
\begin{proof}
	Since $f$ is calibrated, we have $\mathbb E_{P_S}[Y|f]=f$. Take $\Phi(x)=f(x)$.
	\begin{align}
		f_\Phi(x) = \mathbb E_{P_S}[Y|\Phi(x)] = \mathbb E_{P_S}[Y|f(x)] = f(x).
	\end{align}
	Apply Theorem~\ref{theo:app_theo_Hphi} and the proof is complete.
\end{proof}
\begin{theorem}[Restatement of Theorem~\ref{theo:decomposeh1h2} (first part)]
\label{theo:app_h1h2_decompose}
	Consider an absolutely continuous probability measure $P_S(X,Y)$ and a predictor $f_{\Phi}(x)=\mathbb E_{P_S}[Y|\Phi(x)]$ where $\Phi:\mathbb R^{d} \rightarrow \mathbb R^{d_\Phi},{d_\Phi}\in Z^+$ is a measurable function. $\mathcal H_\Phi$ can be decomposed as $\mathcal H_\Phi = \mathcal H_{1,\Phi}+\mathcal H_{2,\Phi}$.
	\begin{align}
		\mathcal H_{1,\Phi} &:= \left\{h\in C(\mathcal X\times \mathcal Y): \text{Cov}_{P_S}\left[h(\Phi,Y),Y|f_\Phi=v\right]=0 \;\;\text{for almost every}\; v\in [0,1]\right\} \\
		&= \text{span}\left\{ \frac{p(\Phi,y)} {p_S(\Phi,y)}: \mathbb E_P[Y|f_\Phi]= \mathbb E_{P_S}[Y|f_\Phi]\;\;\text{almost surely},\;\text{$p$ is continuous}\right\}. \\
		\mathcal H_{2,\Phi} &:= \left\{h\in C(\mathcal X\times \mathcal Y):\mathbb E[h(X,Y)|\Phi,Y] \equiv C_h\;\; \forall \Phi,Y\right\} \\
		&= \text{span}\left\{ \frac{p(x|\Phi,y)} {p_S(x|\Phi,y)}: \;\text{$p$ is continuous}\right\}.
	\end{align}
\end{theorem}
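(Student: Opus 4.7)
My strategy is to reduce the theorem to (i) two characterizations of each subspace and (ii) a concrete constructive decomposition via iterated conditional expectation. I expect the density ratio characterizations to follow from machinery already developed in Theorem~\ref{theo:app_theo_Hphi} and Lemma~\ref{lem:app_lem11}, while the Minkowski sum identity will reduce to an application of the tower property for $\mathbb E[\cdot \mid \Phi, Y]$.

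\textbf{Step 1 (characterizations of $\mathcal H_{1,\Phi}$).} For $\mathcal H_{1,\Phi}$, I would observe that restricting attention to functions of the form $h(\Phi, Y)$ reduces the setting to the one already analyzed in Theorem~\ref{theo:app_theo_Hphi}, with the pushforward measure on $(\Phi, Y)$ replacing $P_S(X,Y)$ and the identity replacing $\Phi$. The covariance characterization is then immediate, and the density ratio span characterization follows by the same argument used in that theorem: the ``$\supset$'' direction is a direct computation of $\mathrm{Cov}_{P_S}[p(\Phi,Y)/p_S(\Phi,Y), Y \mid f_\Phi = v]$ as in Equation~\ref{eq:app_b12_3} of the excerpt, while the ``$\subset$'' direction uses Lemma~\ref{lem:app_lem11} to affinely transform an arbitrary $h(\Phi,Y) \in \mathcal H_{1,\Phi}$ into a density ratio.

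\textbf{Step 2 (characterizations of $\mathcal H_{2,\Phi}$).} For $\mathcal H_{2,\Phi}$, the ``$\supset$'' direction is direct: for any continuous conditional density $p(x\mid \Phi, y)$, one computes
\begin{align}
\mathbb E_{P_S}\!\left[\frac{p(X\mid \Phi,Y)}{p_S(X\mid \Phi, Y)}\,\Big|\,\Phi,Y\right] = \int \frac{p(x\mid \Phi,y)}{p_S(x\mid \Phi,y)} p_S(x\mid \Phi,y)\,dx = 1,
\end{align}
which is constant. For ``$\subset$'', given $h \in C(\mathcal X\times\mathcal Y)$ with $\mathbb E_{P_S}[h\mid \Phi,Y] \equiv C_h$, I would apply a conditional analogue of Lemma~\ref{lem:app_lem11}: choose $\rho$ large enough that $h + \rho > 0$ on the compact domain and set $\gamma = 1/(C_h + \rho)$, so that $\gamma(h + \rho)$ is nonnegative and integrates to $1$ against $p_S(\cdot\mid\Phi,y)$, hence equals $p(x\mid \Phi,y)/p_S(x\mid\Phi,y)$ for some continuous conditional density $p$. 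Rearranging expresses $h$ as a linear combination of the density ratio and the constant function $1 = p_S(x\mid\Phi,y)/p_S(x\mid\Phi,y)$, placing $h$ in the span.

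\textbf{Step 3 (inclusion in $\mathcal H_\Phi$ and Minkowski sum).} For both subspaces I would verify $\mathcal H_{i,\Phi} \subset \mathcal H_\Phi$ via Theorem~\ref{theo:app_theo_Hphi}: functions of $(\Phi,Y)$ clearly inherit the covariance condition on $f_\Phi$-level sets, and for $h \in \mathcal H_{2,\Phi}$, the tower property with the calibration identity $\mathbb E_{P_S}[Y\mid f_\Phi] = f_\Phi$ (Equation~\ref{eq:app_b10_1}) gives $\mathrm{Cov}_{P_S}[h, Y\mid f_\Phi = v] = C_h(\mathbb E[Y\mid f_\Phi=v] - v \cdot 1) = 0$ on level sets. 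For the reverse inclusion $\mathcal H_\Phi \subset \mathcal H_{1,\Phi} + \mathcal H_{2,\Phi}$, given $h \in \mathcal H_\Phi$ I would define
\begin{align}
h_1(\Phi, y) := \mathbb E_{P_S}[h(X,Y)\mid \Phi, Y=y], \qquad h_2(x,y) := h(x,y) - h_1(\Phi(x), y),
\end{align}
so that $\mathbb E_{P_S}[h_2\mid \Phi,Y] \equiv 0$ places $h_2$ in $\mathcal H_{2,\Phi}$, and the tower property applied to $\mathrm{Cov}_{P_S}[h,Y\mid f_\Phi]$ and $\mathrm{Cov}_{P_S}[h_2, Y\mid f_\Phi]$ (using that $f_\Phi$ is $\sigma(\Phi)$-measurable) yields $\mathrm{Cov}_{P_S}[h_1, Y\mid f_\Phi] = 0$, placing $h_1$ in $\mathcal H_{1,\Phi}$.

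\textbf{Main obstacle.} The chief technical subtlety is ensuring that $h_1$ inherits continuity from $h$, so that the decomposition stays inside $C(\mathcal X\times\mathcal Y)$. I would handle this using the standing assumption that $p_S$ is continuous: the conditional density $p_S(x\mid \phi, y)$ is then continuous in $(\phi, y)$ on the compact domain, and $h_1(\phi, y) = \int h(x,y) p_S(x\mid \phi, y)\, dx$ is continuous by dominated convergence. A secondary obstacle is that the affine transformation in Step 2 must be chosen uniformly in $(\Phi, y)$, which is feasible because $h$ is bounded on the compact set $\mathcal X\times\mathcal Y$ and $C_h$ is a single scalar.
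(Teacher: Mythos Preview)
Your proposal is correct and follows the paper's proof almost exactly: the same decomposition $h_1 = \mathbb E_{P_S}[h\mid\Phi,Y]$, $h_2 = h - h_1$ via the tower property, the same reduction of $\mathcal H_{1,\Phi}$ to Theorem~\ref{theo:app_theo_Hphi}, and the same direct computation for the ``$\supset$'' direction of $\mathcal H_{2,\Phi}$. The only tactical difference is in the ``$\subset$'' direction for $\mathcal H_{2,\Phi}$: the paper applies Lemma~\ref{lem:app_lem11} to write $h$ as an affine transform of a \emph{joint} ratio $p(x,y)/p_S(x,y)$ and then shows the constraint $\mathbb E[h\mid\Phi,Y]\equiv C_h$ forces $p(\Phi,Y)/p_S(\Phi,Y)\equiv 1$, whence the joint ratio collapses to the conditional one; your direct conditional normalization by $\gamma = 1/(C_h+\rho)$ is a slightly more economical route to the same conclusion. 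You also correctly flag the continuity of $h_1$, which the paper's proof does not address.
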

\begin{remark}
    $\mathcal H_{1,\Phi}$ contains functions defined on $\text{supp}(\Phi)\times\mathcal Y$ which can be rewritten as functions on $\mathcal X\times \mathcal Y$ by variable substitution. Thus, $\mathcal H_{1,\Phi}$ is still a set of grouping functions. For $\mathcal H_{2,\Phi}$, $C_h$ is a constant depending on $h$. 
\end{remark}
\begin{proof}
	First we prove $\mathcal H_\Phi \subset \mathcal H_{1,\Phi}+\mathcal H_{2,\Phi}$. 
	
	For any $h_1 \in \mathcal H_{1,\Phi}$ and $h_2 \in \mathcal H_{2,\Phi}$, 
	\begin{align}
		&\;\;\;\;\text{Cov}_{P_S}\left[h_1(\Phi,Y)+h_2(X,Y),Y|f_\Phi=v\right] \\
		&=  \text{Cov}_{P_S}\left[h_1(\Phi,Y),Y|f_\Phi=v\right] + \text{Cov}_{P_S}\left[h_2(X,Y),Y|f_\Phi=v\right] \\
		&= \text{Cov}_{P_S}\left[h_2(X,Y),Y|f_\Phi=v\right] \\
		&= \mathbb E_{P_S}\left[ h_2(X,Y)Y | f_\Phi=v \right] - \mathbb E_{P_S}\left[ h_2(X,Y) | f_\Phi=v \right]\mathbb E_{P_S}\left[ Y | f_\Phi=v \right] \\
		&= \mathbb E_{P_S}\left[\mathbb E_{P_S}\left[h_2(X,Y)Y|\Phi,Y\right] \big| f_\Phi=v \right] - \mathbb E_{P_S}\left[\mathbb E_{P_S}\left[h_2(X,Y)|\Phi,Y\right] \big| f_\Phi=v \right]\mathbb E_{P_S}\left[ Y | f_\Phi=v \right] \\
		&= \mathbb E_{P_S}\left[\mathbb E_{P_S}\left[h_2(X,Y)|\Phi,Y\right]Y \big| f_\Phi=v \right] - \mathbb E_{P_S}\left[\mathbb E_{P_S}\left[h_2(X,Y)|\Phi,Y\right] \big| f_\Phi=v \right]\mathbb E_{P_S}\left[ Y | f_\Phi=v \right] \\
		&= C_{h_2} E_{P_S}\left[ Y | f_\Phi=v \right] - C_{h_2} E_{P_S}\left[ Y | f_\Phi=v \right] \\
		&= 0. 
	\end{align}
	Next we prove $\mathcal H_\Phi \supset \mathcal H_{1,\Phi}+\mathcal H_{2,\Phi}$.
	
	For any $h \in \mathcal H_\Phi$, let 
	\begin{align}
		h_1(\Phi(x),y) &= \mathbb E_{P_S}[h(X,Y)|\Phi=\Phi(x),Y=y]. \\
		h_2(x,y) &= h(x,y) - \mathbb E_{P_S}[h(X,Y)|\Phi=\Phi(x),Y=y].
	\end{align}
	Then we have
	\begin{align}
		&\;\;\;\;\text{Cov}_{P_S}\left[h_1(\Phi,Y),Y|f_\Phi=v\right] \\
		&= \mathbb E_{P_S}\left[ h_1(\Phi,Y)Y | f_\Phi=v \right] - \mathbb E_{P_S}\left[ h_1(\Phi,Y) | f_\Phi=v \right]\mathbb E_{P_S}\left[ Y| f_\Phi=v \right] \\
		&= \mathbb E_{P_S}\left[ \mathbb E_{P_S}[h(X,Y)|\Phi,Y]Y \big| f_\Phi=v \right] - 
		   \mathbb E_{P_S}\left[ \mathbb E_{P_S}[h(X,Y)|\Phi,Y] \big| f_\Phi=v \right]\mathbb E_{P_S}\left[ Y| f_\Phi=v \right]  \\
		&= \mathbb E_{P_S}\left[ h(X,Y)Y | f_\Phi=v \right] - \mathbb E_{P_S}\left[ h(X,Y) | f_\Phi=v \right]\mathbb E_{P_S}\left[ Y| f_\Phi=v \right] \\
		&= \text{Cov}_{P_S}\left[h(X,Y),Y|f_\Phi=v\right] \\
		&= 0 \;\;\;\;\text{for almost every}\; v\in [0,1].
	\end{align}
	Thus, $h_1(\Phi(x),y) \in \mathcal H_{1,\Phi}$.
	\begin{align}
		\mathbb E_{P_S}[h_2(X,Y)|\Phi,Y] &= \mathbb E_{P_S}\left[h(X,Y) - \mathbb E_{P_S}[h(X,Y)|\Phi,Y]\big|\Phi,Y\right] \\
		&= \mathbb E_{P_S}[h(X,Y)|\Phi,Y] - \mathbb E_{P_S}[h(X,Y)|\Phi,Y] \\
		&= 0.
	\end{align}
	Thus, $h_2(x,y) \in \mathcal H_{2,\Phi}$.
	Following a similar proof of Theorem~\ref{theo:app_theo_Hphi}, we have
	\begin{align}
		\mathcal H_{1,\Phi}= \text{span}\left\{ \frac{p(\Phi,y)} {p_S(\Phi,y)}: \mathbb E_P[Y|f_\Phi]= \mathbb E_{P_S}[Y|f_\Phi]\;\;\text{almost surely},\;\text{$p$ is continuous}\right\}.
	\end{align}
	Next, we prove $ \mathcal H_{2,\Phi} \supset \text{span}\left\{ p(x|\Phi,y) / p_S(x|\Phi,y): \;p\text{ is continuous} \right\}$. 
	
	This is equivalent to saying that $p(x|\Phi,y) / p_S(x|\Phi,y) \in \mathcal H_{2,\Phi}$ for any continuous density function $p$.
	 \begin{align}
	 	\mathbb E_{P_S}\left[ \frac{p(X|\Phi,Y)}{p_S(X|\Phi,Y)} \Big| \Phi,Y  \right] &= 
	 	\int \frac{p(x|\Phi,Y)}{p_S(x|\Phi,Y)} dP_S(x|\Phi,Y)  \\
	 	&= \int dP(x|\Phi,Y) \\
	 	&= 1.
	 \end{align}
	 Thus, we have $p(x|\Phi,y) / p_S(x|\Phi,y) \in \mathcal H_{2,\Phi}$.
	 
	 Next, we prove $ \mathcal H_{2,\Phi} \subset \text{span}\left\{ p(x|\Phi,y) / p_S(x|\Phi,y): \;p\text{ is continuous} \right\}$.

	 By Lemma~\ref{lem:app_lem11}, any grouping function $h\in \mathcal H_{2,\Phi}$ could be rewritten as $h_2(x,y) = \gamma p(x,y)/p_S(x,y)+\rho$ for some continuous density function $p$ and $\gamma \neq 0$. Thus, we just need to prove the statement that $\mathbb E_{P_S}[\gamma p(X,Y)/p_S(X,Y)+\rho|\Phi,Y]\equiv C_{h_2}$  implies  $p(X,Y)/p_S(X,Y)\equiv p(X|\Phi,Y)/p_S(X|\Phi,Y)$. 
	 \begin{align}
	 	\mathbb E_{P_S}\left [\gamma \frac{p(X,Y)}{p_S(X,Y)} + \rho \Big|\Phi,Y \right] &= \mathbb E_{P_S}\left [\gamma \frac{p(X|\Phi,Y)}{p_S(X|\Phi,Y)} \frac{p(\Phi,Y)}{p_S(\Phi,Y)} + \rho \Big|\Phi,Y \right]  \\
	 	&= \gamma\frac{p(\Phi,Y)}{p_S(\Phi,Y)} \mathbb E_{P_S}\left [ \frac{p(X|\Phi,Y)}{p_S(X|\Phi,Y)}  \Big|\Phi,Y \right] + \rho \\
	 	&= \gamma\frac{p(\Phi,Y)}{p_S(\Phi,Y)} \mathbb E_P\left [ 1  \Big|\Phi,Y \right] + \rho \\
	 	&=  \gamma\frac{p(\Phi,Y)}{p_S(\Phi,Y)} + \rho.
	 \end{align}
	 Thus, we have $p(\Phi,Y)/p_S(\Phi,Y) \equiv (C_{h_2}-\rho)/\gamma$ which is a constant. Since $\mathbb E_{P_S}[p(\Phi,Y)/p_S(\Phi,Y)] = 1$, we have $p(\Phi,Y)/p_S(\Phi,Y) \equiv 1$.
	 \begin{align}
	 	\frac{p(X,Y)}{p_S(X,Y)} &= \frac{p(X|\Phi,Y)}{p_S(X|\Phi,Y)} \frac{p(\Phi,Y)}{p_S(\Phi,Y)} \\
	 	&=  \frac{p(X|\Phi,Y)}{p_S(X|\Phi,Y)}.
	 \end{align}
	 
\end{proof}
\begin{lemma}[Theorem 3.2 from \citet{MCBoost}]
\label{lem:app_risk_imply_mc}
    If $f$ is calibrated and there exists an $h(x)$ such that
    \begin{align}
    \mathbb E[(f(X)-Y)^2 - (h(X)-Y)^2|f(X)=v] \geq \alpha,
    \end{align}
    then:
    \begin{align}
    \mathbb E[h(X)(Y-v)|f(X)=v] \geq \frac{\alpha}{2}.
    \end{align}
\end{lemma}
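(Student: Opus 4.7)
The plan is a direct algebraic manipulation that exploits calibration of $f$ exactly once. First I would expand the squared-error gap conditioned on the level set $\{f(X)=v\}$. Writing $h(X)-Y = (h(X)-v) - (Y-v)$ and $f(X)-Y = -(Y-v)$ on this level set, and expanding the squares, I obtain
\begin{align}
\mathbb E\!\left[(f(X)-Y)^2 - (h(X)-Y)^2 \,\big|\, f(X)=v\right]
= -\,\mathbb E\!\left[(h(X)-v)^2 \,\big|\, f(X)=v\right] + 2\,\mathbb E\!\left[(h(X)-v)(Y-v) \,\big|\, f(X)=v\right].
\end{align}

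Next I would use the calibration of $f$ to simplify the cross term. Since $\mathbb E[Y-v \mid f(X)=v] = 0$, the constant shift $v$ inside $h(X)-v$ contributes nothing, giving
\begin{align}
\mathbb E\!\left[(h(X)-v)(Y-v) \,\big|\, f(X)=v\right] = \mathbb E\!\left[h(X)(Y-v) \,\big|\, f(X)=v\right].
\end{align}
Substituting the hypothesis $\mathbb E[(f(X)-Y)^2 - (h(X)-Y)^2 \mid f(X)=v] \geq \alpha$ into the expansion then yields
\begin{align}
2\,\mathbb E\!\left[h(X)(Y-v) \,\big|\, f(X)=v\right] \geq \alpha + \mathbb E\!\left[(h(X)-v)^2 \,\big|\, f(X)=v\right] \geq \alpha,
\end{align}
where the last step simply drops the nonnegative square term. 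Dividing by $2$ gives the desired conclusion.

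There is essentially no obstacle here: the lemma is a one-line identity plus calibration. The only subtle point worth flagging is that calibration must eliminate a term of the form $v\,\mathbb E[Y-v\mid f(X)=v]$; without it, the inequality would pick up an uncontrolled bias term and the factor-of-two bound would fail. The discarded $\mathbb E[(h(X)-v)^2\mid f(X)=v]$ term also explains why the constant $1/2$ appears rather than $1$: any advantage $h$ has over $f$ is split between genuine correlation with the residual $Y-v$ and mere spread of $h$ around the level-set value $v$.
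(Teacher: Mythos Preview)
Your proof is correct. The paper does not supply its own proof of this lemma but simply cites it as Theorem~3.2 of \citet{MCBoost}; your expand-the-squares-then-use-calibration argument is exactly the standard derivation, and matches the algebra the paper uses for the converse direction in Lemma~\ref{lemm:app_stop_cri}.
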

\begin{proposition}[Restatement of Theorem~\ref{theo:decomposeh1h2} (second part)]
    If a predictor $f$ is multicalibrated with $\mathcal H_{1,\Phi}$, then $R_{P_S}(f) \leq R_{P_S}(f_\Phi)$.
\end{proposition}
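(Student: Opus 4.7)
The plan is to apply the standard risk-decomposition argument behind Lemma~\ref{lem:app_risk_imply_mc} with the specific choice $h = f_\Phi$. First I would establish two membership facts for $\mathcal{H}_{1,\Phi}$. (a) The constant function $1$ lies in $\mathcal{H}_{1,\Phi}$, since its conditional covariance with $Y$ vanishes trivially; taking $h\equiv 1$ in the multicalibration hypothesis then yields calibration of $f$, i.e., $\mathbb{E}_{P_S}[Y\mid f(X)=v]=v$ for almost every $v$. (b) The predictor $f_\Phi$ itself (viewed as a function of $\Phi$, and hence a function on $\mathcal{X}\times\mathcal{Y}$) lies in $\mathcal{H}_{1,\Phi}$. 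This follows from the tower property: since $\mathbb{E}_{P_S}[Y\mid\Phi]=f_\Phi$, we get $\mathbb{E}_{P_S}[Y\mid f_\Phi=v]=v$ and hence $\mathbb{E}_{P_S}[f_\Phi\cdot Y\mid f_\Phi=v] = v\,\mathbb{E}_{P_S}[f_\Phi\mid f_\Phi=v] = \mathbb{E}_{P_S}[f_\Phi\mid f_\Phi=v]\,\mathbb{E}_{P_S}[Y\mid f_\Phi=v]$, so the conditional covariance is zero.

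Next I would expand the level-set risk difference using calibration of $f$. For each $v$ in the range of $f$, direct computation gives
\begin{align}
\mathbb{E}_{P_S}\!\left[(f(X)-Y)^2 - (f_\Phi(X)-Y)^2\,\big|\,f(X)=v\right]
= 2\,\mathbb{E}_{P_S}\!\left[f_\Phi(X)(Y-v)\,\big|\,f(X)=v\right]
- \mathbb{E}_{P_S}\!\left[(f_\Phi(X)-v)^2\,\big|\,f(X)=v\right],
\end{align}
which is exactly the identity underlying Lemma~\ref{lem:app_risk_imply_mc}. Multicalibration of $f$ applied to $h=f_\Phi\in\mathcal{H}_{1,\Phi}$ forces the first term to vanish for almost every $v$, while the second term is manifestly non-positive. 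Integrating over $v$ with respect to the pushforward measure $P_{S,f(X)}$ gives $R_{P_S}(f)-R_{P_S}(f_\Phi)\leq 0$, which is the claim.

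The only subtle point is that $\mathcal{H}_{1,\Phi}$ was defined as a subset of $C(\mathcal{X}\times\mathcal{Y})$, whereas $f_\Phi$ need not a priori be continuous. I would handle this either by invoking the regularity assumed on $P_S$ (under which $\phi\mapsto\mathbb{E}_{P_S}[Y\mid\Phi=\phi]$ is continuous), or by approximating $f_\Phi$ by a sequence of continuous functions of $\Phi$ in $L^2(P_S)$: each approximant lies in $\mathcal{H}_{1,\Phi}$ by the same tower-property argument, and the covariance/multicalibration identities pass to the limit by $L^2$ continuity. This regularity detail is the one place where care is needed; the rest of the argument is a one-line algebraic identity combined with the two membership facts.
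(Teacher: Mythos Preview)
Your proposal is correct and uses essentially the same ingredients as the paper: both arguments rest on (i) $1\in\mathcal H_{1,\Phi}$ giving calibration of $f$, (ii) $f_\Phi\in\mathcal H_{1,\Phi}$, and (iii) the level-set identity $\mathbb{E}[(f-Y)^2-(f_\Phi-Y)^2\mid f=v]=2\,\mathbb{E}[f_\Phi(Y-v)\mid f=v]-\mathbb{E}[(f_\Phi-v)^2\mid f=v]$. The paper wraps step (iii) in Lemma~\ref{lem:app_risk_imply_mc} and argues by contradiction (deducing $K_1(f,f_\Phi,P_S)>0$ and hence $K_2>0$), whereas you apply the identity directly to conclude $R_{P_S}(f)-R_{P_S}(f_\Phi)\le 0$; your route is slightly cleaner but not genuinely different.
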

\begin{proof}
    We prove by contradiction. If $R_{P_S}(f) > R_{P_S}(f_\Phi)$, then
    \begin{align}
        \int \mathbb E\left[(f(X)-Y)^2 - (f_\Phi(X)-Y)^2|f(X)=v\right] dP_S(f^{-1}(v)) > 0.
    \end{align}
    Let $\alpha_v = \mathbb E\left[(f(X)-Y)^2 - (h(X)-Y)^2|f(X)=v\right]$.

    Since $f$ is multicalibrated with $\mathcal H_{1,\Phi}$, $f$ is calibrated. It follows from Lemma~\ref{lem:app_risk_imply_mc}:
    \begin{align}
    \mathbb E[f_\Phi(X)(Y-v)|f(X)=v] \geq \frac{\alpha_v}{2}.
    \end{align}
    Then,
    \begin{align}
        K_1(f, f_\Phi, P_S) &= \int \Big|E[f_\Phi(X)(Y-v)|f(X)=v] \Big| dP_S(f^{-1}(v)) \\
        &\geq \int \alpha_v dP_S(f^{-1}(v)) \\
        &>0.
    \end{align}
    From Lemma~\ref{lem:app_k1k2}, we have $K_2(f, f_\Phi, P_S) \geq K_1(f, f_\Phi, P_S)^2 > 0$.
    Since $f_\Phi \in \mathcal H_{1,\Phi}$, it contradicts with the fact that $f$ is multicalibrated with $\mathcal H_{1,\Phi}$.
\end{proof}
\begin{proposition}[Restatement of Theorem~\ref{theo:decomposeh1h2} (third part)]
    $f_\Phi$ is an invariant predictor elicited by $\Phi$ across a set of environments $\mathscr E$ where $P_e(\Phi,Y)=P_S(\Phi,Y)$ for any $e\in\mathscr E$.  If a predictor $f$ is multicalibrated with $\mathcal H_{2,\Phi}$,  then $f$ is also an invariant predictor across $\mathscr E$ elicited by some representation.
\end{proposition}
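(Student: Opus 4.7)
The plan is to prove the two statements separately, leveraging the explicit characterization of $\mathcal H_{2,\Phi}$ from Theorem~\ref{theo:app_h1h2_decompose} together with the post-processing robustness result of Theorem~\ref{theo:app_approx_mc_inv}. For the first statement, the observation is that when $P_e(\Phi,Y)=P_S(\Phi,Y)$ for every $e\in\mathscr E$, the conditional expectation $\mathbb E_{P_e}[Y\mid \Phi]$ coincides almost surely with $\mathbb E_{P_S}[Y\mid \Phi]=f_\Phi$. Under squared loss this conditional expectation is Bayes optimal among all measurable functions of $\Phi$ in every environment, so taking $g^\star(\phi)=\mathbb E_{P_S}[Y\mid \Phi=\phi]$ in Definition~\ref{def:invariant_pred} yields $g^\star\circ\Phi=f_\Phi$ achieving $\inf_{g}R_{P_e}(g\circ\Phi)$ exactly in each environment. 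Hence $\Phi$ elicits $f_\Phi$ as a $0$-approximately invariant predictor across $\mathscr E$.

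For the second statement, I would show that the density ratios between each $P_e$ and $P_S$ automatically lie in $\mathcal H_{2,\Phi}$, so the hypotheses of Theorem~\ref{theo:app_approx_mc_inv} are met. Concretely, for any $e\in\mathscr E$,
\[
\frac{p_e(x,y)}{p_S(x,y)} \;=\; \frac{p_e(x\mid \Phi,y)\,p_e(\Phi,y)}{p_S(x\mid \Phi,y)\,p_S(\Phi,y)} \;=\; \frac{p_e(x\mid \Phi,y)}{p_S(x\mid \Phi,y)},
\]
where the last equality uses the assumption $p_e(\Phi,y)=p_S(\Phi,y)$. By Theorem~\ref{theo:app_h1h2_decompose}, $\mathcal H_{2,\Phi}$ is spanned precisely by such conditional density ratios, so every $p_e/p_S$ belongs to $\mathcal H_{2,\Phi}$. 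Since $f$ is exactly multicalibrated with respect to $\mathcal H_{2,\Phi}$, applying Theorem~\ref{theo:app_approx_mc_inv} with $\mathcal P=\{P_e:e\in\mathscr E\}$ and $\alpha=0$ gives $R_{P_e}(f)\le \inf_{g:[0,1]\to[0,1]}R_{P_e}(g\circ f)$, and the reverse inequality is trivial (take $g=\mathrm{id}$). Choosing representation $\Phi'(x):=f(x)$ with $g^\star=\mathrm{id}$ in Definition~\ref{def:invariant_pred} then certifies $f$ as a $0$-approximately invariant predictor across $\mathscr E$ elicited by $\Phi'$.

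The main (minor) obstacle is the bookkeeping between the two characterizations of $\mathcal H_{2,\Phi}$ in Theorem~\ref{theo:app_h1h2_decompose}: one must check that a single density ratio of the form $p_e(x\mid \Phi,y)/p_S(x\mid \Phi,y)$ is genuinely an element (not merely an abstract member of the span) of $\mathcal H_{2,\Phi}$. This is immediate from the stated spanning description, so no further work is needed. Beyond that, the argument is essentially a direct composition of earlier results, with the only conceptual content being the identification that joint density ratios collapse to conditional ones under the marginal-matching assumption on $(\Phi,Y)$, placing them squarely inside $\mathcal H_{2,\Phi}$.
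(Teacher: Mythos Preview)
Your proposal is correct and follows essentially the same route as the paper: both parts proceed by noting that $P_e(\Phi,Y)=P_S(\Phi,Y)$ forces $\mathbb E_{P_e}[Y\mid\Phi]=f_\Phi$ (giving invariance of $f_\Phi$), and that the joint density ratio $p_e/p_S$ collapses to the conditional ratio $p_e(x\mid\Phi,y)/p_S(x\mid\Phi,y)\in\mathcal H_{2,\Phi}$, after which Theorem~\ref{theo:app_approx_mc_inv} with $\alpha=0$ yields $R_{P_e}(f)=\inf_g R_{P_e}(g\circ f)$ and $f$ is invariant with representation $f$ itself.
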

\begin{proof}
    Since $P_e(\Phi,Y)=P_S(\Phi,Y)$, we have $\mathbb E_{P_e}[Y|\Phi] = \mathbb E_{P_S}[Y|\Phi] = f_\Phi$ for every $e\in\mathscr E$. 
    
    Thus, 
    \begin{align}
        R_{P_e}(f_\Phi) = \inf_{g \in \mathcal G} { R_{P_e}(g\circ \Phi)}.
    \end{align}
    This implies $f_\Phi$ is an invariant predictor elicited by $\Phi$ across $\mathscr E$.

    For any $e\in\mathscr E$, we have:
    \begin{align}
        \frac{p_e(x,y)}{p_S(x,y)} = \frac{p_e(\Phi,y)}{p_S(\Phi,y)} \frac{p_e(x|Phi,y)}{p_S(x|\Phi,y)} = \frac{p_e(x|\Phi,y)}{p_S(x|\Phi,y)} \in \mathcal H_{2,\Phi}.
    \end{align}
    Since $f$ is multicalibrated with $\mathcal H_{2,\Phi}$, it follows from Theorem~\ref{theo:app_approx_mc_inv}:
    \begin{align}
        R_{P_e}(f) = \inf_{g:[0,1]\rightarrow[0,1]}  R_{P_e}(g\circ f).
    \end{align}
    This implies that $f$ is an invariant predictor across $\mathscr E$ elicited by $f$.
\end{proof}
\begin{proposition}[Restatement of Proposition~\ref{prop:monoticity}]
\label{prop:app_monoticity}
	Consider $X\in\mathbb R^d$ which could be sliced as $X=(\Phi,\Psi)^T$ and $\Phi=(\Lambda,\Omega)^T$. Define $\mathcal H_{1,\Phi}' := \{ h(\Phi(x)) \in C(\mathcal X\times \mathcal Y) \}$. $\mathcal H_{1,X}'$ and $\mathcal H_{1,\Lambda}'$ are similarly defined.
	We have:
	
1. $\mathcal H_{1,X}' \supset \mathcal H_{1,\Phi}' \supset \mathcal H_{1,\Lambda}' \supset \mathcal H_{1,\emptyset}' = \{ C\}.$

2. $\{C\}=\mathcal H_{2,X} \subset H_{2,\Phi} \subset \mathcal H_{2,\Lambda} \subset \mathcal H_{2,\emptyset}.$

$C$ is a constant value function.
\end{proposition}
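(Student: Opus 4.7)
\textbf{Proof proposal for Proposition~\ref{prop:monoticity}.} The plan is to treat the two chains of inclusions separately, since they reflect two genuinely different mechanisms: part~1 is about function classes on nested variable sets, while part~2 is about conditional expectations with respect to nested sub-$\sigma$-algebras.

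For part~1, I would argue directly from the definition $\mathcal H_{1,\Phi}' = \{h(\Phi(x)) \in C(\mathcal X \times \mathcal Y)\}$. Since $X=(\Phi,\Psi)^T$ and $\Phi=(\Lambda,\Omega)^T$, any continuous function $h(\Lambda(x))$ can be lifted to a continuous function $\tilde h(\Phi(x)) := h(\Lambda(x))$ that simply ignores the $\Omega$-coordinate, so $\mathcal H_{1,\Lambda}' \subset \mathcal H_{1,\Phi}'$. The same argument, applied to the inclusion of $\Phi$ inside $X$, gives $\mathcal H_{1,\Phi}' \subset \mathcal H_{1,X}'$. Finally, $\mathcal H_{1,\emptyset}'$ consists of continuous functions of no covariates, which are exactly the constant functions, giving the base case.

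For part~2, I would use the tower property of conditional expectation. Fix $h \in \mathcal H_{2,\Phi}$, so that $\mathbb E_{P_S}[h(X,Y)\mid \Phi,Y] \equiv C_h$ almost surely for some constant $C_h$. Since $\sigma(\Lambda,Y) \subset \sigma(\Phi,Y)$ by construction, the tower property gives
\begin{equation*}
\mathbb E_{P_S}[h(X,Y)\mid \Lambda, Y] \;=\; \mathbb E_{P_S}\!\left[\mathbb E_{P_S}[h(X,Y)\mid \Phi, Y] \,\big|\, \Lambda, Y\right] \;=\; \mathbb E_{P_S}[C_h \mid \Lambda, Y] \;=\; C_h,
\end{equation*}
so $h \in \mathcal H_{2,\Lambda}$, establishing $\mathcal H_{2,\Phi} \subset \mathcal H_{2,\Lambda}$. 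The same reasoning applied to $\sigma(Y) \subset \sigma(\Lambda, Y)$ yields $\mathcal H_{2,\Lambda} \subset \mathcal H_{2,\emptyset}$. For the boundary case $\Phi = X$, the conditioning $\sigma$-algebra becomes $\sigma(X,Y)$ and $h(X,Y)$ is itself $\sigma(X,Y)$-measurable, so $\mathbb E_{P_S}[h(X,Y)\mid X, Y] = h(X,Y)$; demanding this equal a constant forces $h \equiv C_h$, giving $\mathcal H_{2,X} = \{C\}$.

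I do not expect any genuine obstacle in carrying this out; the content is essentially bookkeeping, with the only subtle point being that the quantifier ``$\mathbb E[h(X,Y)\mid \Phi, Y] \equiv C_h$'' in the definition of $\mathcal H_{2,\Phi}$ refers to an almost-sure constant, so the tower-property step should be stated as an almost-sure equality and the conclusion $C_h$ appearing in $\mathcal H_{2,\Lambda}$ inherits the same constant from $\mathcal H_{2,\Phi}$. No continuity of density functions is needed for this proposition, which is a purely measure-theoretic monotonicity statement and can be proved in a few lines once the two mechanisms above are invoked.
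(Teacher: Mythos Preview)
Your proposal is correct and follows essentially the same approach as the paper: part~1 by lifting a function of fewer coordinates to one of more coordinates, and part~2 by the tower property of conditional expectation together with the observation that conditioning on $(X,Y)$ returns $h(X,Y)$ itself. The paper's proof is slightly less formal about $\sigma$-algebras and almost-sure equalities, but the mathematical content is identical.
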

\begin{remark}
    The proposition shows that $\mathcal H_1'$, as a subspace of $\mathcal H_1$, evolves monotonically and in opposite direction to $\mathcal H_2$. If we perceive the representation $\Phi$ as a filter, gaining more information from covariates facilitates multicalibration w.r.t. $\mathcal H_1$ (and accuracy) but hampers multicalibration w.r.t. $\mathcal H_2$ (and invariance). With $\mathcal H_1'$ and $\mathcal H_2$ combined together, a multicalibrated predictor is searching for an appropriate level of information filter to balance the tradeoff between accuracy and invariance.
\end{remark}
\begin{proof}
	\;\\
1. For any $h(\Lambda) \in \mathcal H_{1,\Lambda}'$, since $\Phi=(\Lambda,\Omega)$, $h(\Lambda)$ is also a function of $\Phi$. Thus, we have $h(\Lambda) \in \mathcal H_{1,\Phi}'$. It follows that $\mathcal H_{1,\Phi}' \supset \mathcal H_{1,\Lambda}'$. Similarly we have $\mathcal H_{1,X}' \supset \mathcal H_{1,\Phi}'$ and $\mathcal H_{1,\Lambda}' \supset \mathcal H_{1,\emptyset}'$.

2. For any $h(\Lambda,\Omega,\Psi,Y) \in \mathcal H_{2,\Phi}$ such that $\mathbb E[h(\Phi,\Psi,Y)|\Phi,Y] = C_h$ for any values of $\Phi,Y$,
we have
\begin{align}
	\mathbb E[h(\Lambda,\Omega,\Psi,Y)|\Lambda,Y] &= \mathbb E\left[ \mathbb E[h(\Lambda,\Omega,\Psi,Y)|\Lambda,\Omega,Y] \big| \Lambda,Y\right] \\
	&= \mathbb E\left[ \mathbb E[h(\Phi,\Psi,Y)|\Phi,Y] \big| \Lambda,Y\right] \\
	&= \mathbb E[C_h|\Lambda,Y] \\
	&= C_h \;\;\;\;\text{for any values of }\Lambda, Y.
\end{align}
Thus, $h(\Lambda,\Omega,\Psi,Y) \in \mathcal H_{2,\Lambda}$. It follows that $H_{2,\Phi} \subset \mathcal H_{2,\Lambda}$. Similarly, we have $\mathcal H_{2,X} \subset H_{2,\Phi}$ and $\mathcal H_{2,\Lambda} \subset \mathcal H_{2,\emptyset}$. Particularly for $h(x,y) \in \mathcal H_{2,X}$, we have $h(X,Y) = \mathbb E[h(X,Y)|X,Y]$ is a constant for any values of $X,Y$.
\end{proof}
\begin{lemma}[\citet{MCBoost}]
\label{app:lem_when_mc_accuracy}
	Let $\mathcal H \subset \mathcal X^\mathbb R$ be such a grouping function class that  $h\in \mathcal H$ implies $\gamma h + \rho \in \mathcal H$ for any $h\in\mathcal H$ and $\gamma,\rho \in\mathbb R$.
	If $\mathcal H$ satisfies the $(0,0)$-weak learning condition in Assumption~\ref{app_assum:app_sufficient_grouping}, a predictor $f$ is multicalibrated w.r.t. $\mathcal H$ if and only if $f(x) = \mathbb E[Y|x]$ almost surely.
\end{lemma}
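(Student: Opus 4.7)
The plan is to handle the two directions of the biconditional separately, with the bulk of the work in the ``only if'' direction via a contradiction argument that leverages the $(0,0)$-weak learning condition on a badly-predicted level set.

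For the ``if'' direction ($f(x) = \mathbb E[Y\mid x]$ implies multicalibration), I would use the tower property. For any $h \in \mathcal H$ and $v$ in the range of $f$,
\begin{align*}
\mathbb E[h(X)(Y - v) \mid f(X) = v]
= \mathbb E\bigl[h(X)\,\mathbb E[Y - \mathbb E[Y\mid X] \mid X] \,\big|\, f(X) = v\bigr] = 0,
\end{align*}
so $K_2(f,h,P)=0$ for every $h$. For the ``only if'' direction I would argue by contradiction, first observing that affine closure gives $1 \in \mathcal H$ (take any $h$ with $\gamma=0,\rho=1$), so the multicalibration hypothesis applied to $h\equiv 1$ yields calibration: $\mathbb E[Y\mid f(X)=v] = v$ for $P_{f(X)}$-a.e.\ $v$.

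Now suppose $f(X) \neq \mathbb E[Y\mid X]$ on a set of positive $P_S$-measure. I would show this forces some level set $G := \{f(X)=v\}$ with $P(G)>0$ on which $\mathbb E[Y\mid X]$ is non-constant. Indeed, if $\mathbb E[Y\mid X]$ were constant on every level set, calibration would pin that constant to $v$, contradicting $f \neq \mathbb E[Y\mid X]$. On such a $G$, the Bayes predictor strictly beats the constant predictor $\mathbb E[Y\mid X \in G] = v$:
\begin{align*}
\mathbb E[(\mathbb E[Y\mid X]-Y)^2 \mid X\in G] < \mathbb E[(v-Y)^2 \mid X\in G].
\end{align*}
Applying the $(0,0)$-weak learning condition yields some $h \in \mathcal H$ with $\mathbb E[(h(X)-Y)^2 \mid G] < \mathbb E[(v-Y)^2 \mid G]$.

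The final step is an algebraic manipulation: expanding $(h-Y)^2 - (v-Y)^2 = (h-v)^2 + 2(h-v)(v-Y)$ and taking conditional expectation on $G$ gives
\begin{align*}
0 < \mathbb E[(h(X)-v)^2 \mid G] < 2\,\mathbb E[(h(X)-v)(Y-v) \mid G],
\end{align*}
so $\mathbb E[(h(X)-v)(Y-v)\mid G]>0$. Using calibration $\mathbb E[Y-v\mid G]=0$, this simplifies to $\mathbb E[h(X)(Y-v)\mid G]>0$, contradicting that $f$ is multicalibrated w.r.t.\ $\mathcal H$. The main obstacle I anticipate is the measure-theoretic step of locating a single level set of positive probability witnessing non-constancy of $\mathbb E[Y\mid X]$; this is straightforward for predictors with finite (or at most countable) range, as is implicit in the multicalibration literature the paper builds on, but would need extra care for truly continuous $f$. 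The algebraic calculation and the invocation of calibration via affine closure are routine.
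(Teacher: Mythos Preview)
The paper does not prove this lemma; it is stated as a citation from \citet{MCBoost} and used immediately in the next proposition without a proof of its own. Your argument is correct and is essentially the standard MCBoost argument. In particular, your algebraic step in the ``only if'' direction---expanding $(h-Y)^2-(v-Y)^2$ to deduce $\mathbb E[h(X)(Y-v)\mid f(X)=v]>0$ from the existence of an $h$ that beats the constant predictor on a level set of a calibrated $f$---is exactly the content of Lemma~\ref{lem:app_risk_imply_mc} (Theorem~3.2 of \citet{MCBoost}), which the paper also quotes verbatim elsewhere in the appendix. Your use of affine closure to obtain $1\in\mathcal H$ and hence calibration, followed by invoking the $(0,0)$-weak learning condition on a level set where $\mathbb E[Y\mid X]$ is non-constant, is the expected route.

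The measure-theoretic concern you flag about locating a level set of positive probability is legitimate but is inherent to the source result rather than a defect in your reconstruction: both \citet{MCBoost} and this paper work with finite-range predictors throughout (see the remark after Theorem~\ref{theo:regret_covariate}, the hypothesis $m=|\text{Range}(f)|$ in Lemma~\ref{lem:app_mc_acc}, and the rounding step in Algorithm~\ref{alg}), so every level set has positive probability and the single-$v$ contradiction suffices. Extending to continuous-range $f$ would require showing the violation occurs on a positive-$P_{f(X)}$-measure set of $v$'s, but that is beyond what the cited lemma is actually used for here.
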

\begin{proposition}
	For a measurable function $\Phi:\mathbb R^{d} \rightarrow \mathbb R^{d_\Phi},{d_\Phi}\in Z^+$, a predictor $f:\text{supp}(\Phi) \rightarrow [0,1]$ is multicalibrated w.r.t. $\mathcal H_{1,\Phi}$ if and only if $f$ is multicalibrated w.r.t. $\mathcal H_{1,\Phi}'$.
\end{proposition}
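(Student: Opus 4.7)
The plan is a two-direction argument. The forward direction ($\Rightarrow$) reduces to the set inclusion $\mathcal H_{1,\Phi}' \subset \mathcal H_{1,\Phi}$; the reverse direction ($\Leftarrow$) proceeds by showing that multicalibration w.r.t.\ $\mathcal H_{1,\Phi}'$ forces $f = f_\Phi$ $P_S$-almost surely, after which Theorem~\ref{theo:app_h1h2_decompose} gives $\mathcal H_{1,\Phi} \subset \mathcal H_\Phi = \mathcal H_{f_\Phi} = \mathcal H_f$ and delivers the conclusion.

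For the $\Rightarrow$ direction, fix any $h(\Phi) \in \mathcal H_{1,\Phi}'$. Using the tower property with $\mathbb E_{P_S}[Y\mid\Phi] = f_\Phi$ and the identity $\mathbb E_{P_S}[Y\mid f_\Phi = v] = v$ (Equation~\ref{eq:app_b10_1}),
\begin{align}
\mathrm{Cov}_{P_S}\bigl[h(\Phi), Y \bigm| f_\Phi = v\bigr]
= \mathbb E_{P_S}\bigl[h(\Phi)\, f_\Phi \bigm| f_\Phi = v\bigr] - v\, \mathbb E_{P_S}\bigl[h(\Phi) \bigm| f_\Phi = v\bigr] = 0,
\end{align}
so $h \in \mathcal H_{1,\Phi}$ by the covariance characterization in Theorem~\ref{theo:app_h1h2_decompose}. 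Hence multicalibration w.r.t.\ $\mathcal H_{1,\Phi}$ immediately implies multicalibration w.r.t.\ $\mathcal H_{1,\Phi}'$.

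For the $\Leftarrow$ direction, $1 \in \mathcal H_{1,\Phi}'$ already forces $f$ to be calibrated, so $\mathbb E_{P_S}[Y\mid f=v] = v$. For any continuous $g$, the hypothesis combined with the tower property (again using $\mathbb E_{P_S}[Y\mid\Phi] = f_\Phi$ and the fact that $f$ is a function of $\Phi$) yields
\begin{align}
\mathbb E_{P_S}\bigl[g(\Phi)\,(f_\Phi(\Phi) - v) \bigm| f(\Phi) = v\bigr] = 0 \quad \text{for $P_{f(\Phi)}$-a.e.\ } v.
\end{align}
Since continuous functions are dense in $L^2$ of the conditional measure $P_S(\Phi \mid f = v)$ on the compact set $\mathcal X$, and $f_\Phi - v$ is bounded and $\mathcal F_\Phi$-measurable, this orthogonality forces $f_\Phi(\Phi) = v = f(\Phi)$ $P_S$-a.s.\ on the level set $\{f(\Phi) = v\}$. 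Taking the union of these level sets over $P_{f(\Phi)}$-a.e.\ $v$ recovers a $P_S$-full-measure subset of $\mathrm{supp}(\Phi)$, so $f = f_\Phi$ $P_S$-a.s.\ and the stated inclusion of function classes goes through.

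The main technical obstacle is the density/approximation step that upgrades orthogonality against continuous $g(\Phi)$ to the almost-sure identification of $f$ with $f_\Phi$; this is routine under the compactness and continuity standing assumptions of Section~\ref{sec:app_theory}, but it has to be invoked carefully so that $g \circ \Phi \in C(\mathcal X)$ whenever $g$ is continuous on $\mathrm{supp}(\Phi)$ (which in turn uses continuity of $\Phi$). A secondary subtlety is that the identification $f = f_\Phi$ only makes sense modulo $P_S$-null sets, so the final chain $\mathcal H_{1,\Phi} \subset \mathcal H_{f_\Phi} = \mathcal H_f$ should be stated as an equality of multicalibration error functionals (both vanish simultaneously) rather than a pointwise identification of predictors.
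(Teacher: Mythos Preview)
Your proof is correct and follows the same overall strategy as the paper: the forward implication is the inclusion $\mathcal H_{1,\Phi}' \subset \mathcal H_{1,\Phi}$ (which you verify explicitly via the covariance characterization, while the paper simply asserts it), and the reverse implication is the identification $f = f_\Phi$ $P_S$-almost surely, after which $\mathcal H_{1,\Phi} \subset \mathcal H_{f_\Phi} = \mathcal H_f$ finishes.

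The only real difference is how $f = f_\Phi$ is extracted. The paper invokes Lemma~\ref{app:lem_when_mc_accuracy} (from \citet{MCBoost}) as a black box: since $f_\Phi = \mathbb E_{P_S}[Y\mid\Phi] \in \mathcal H_{1,\Phi}'$, the class satisfies the $(0,0)$-weak learning condition on the pushforward measure, and the lemma yields $f = f_\Phi$ in one line. Your $L^2$-density argument is more self-contained but heavier than necessary. In fact, once you note $f_\Phi \in \mathcal H_{1,\Phi}'$, testing against the single function $g = f_\Phi$ (together with calibration from $g\equiv 1$) already gives
\[
\mathbb E_{P_S}\bigl[(f_\Phi - v)^2 \bigm| f = v\bigr]
= \mathbb E_{P_S}\bigl[f_\Phi(f_\Phi - v)\bigm| f = v\bigr] - v\,\mathbb E_{P_S}\bigl[f_\Phi - v\bigm| f = v\bigr] = 0,
\]
so no density step is needed. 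This also dissolves the continuity-of-$\Phi$ concern you raised: you only need $f_\Phi \in C(\mathcal X)$ (the same implicit requirement behind the paper's appeal to the weak learning condition), not density of the whole family $\{g\circ\Phi : g \text{ continuous}\}$.
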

\begin{proof}
	Since $\mathcal H_{1,\Phi}' \subset \mathcal H_{1,\Phi}$, $f$'s multicalibration w.r.t. $\mathcal H_{1,\Phi}$ implies $f$'s multicalibration w.r.t. $\mathcal H_{1,\Phi}'$.
	
	On the other hand, $\mathcal H_{1,\Phi}'$ satisfies the $(0,0)$-weak learning condition with the pushforward measure on $\Phi$, because $\mathbb E[Y|\Phi] \in \mathcal H_{1,\Phi}'$. It follows from Lemma~\ref{app:lem_when_mc_accuracy} that $f$ is multicalibrated w.r.t. $\mathcal H_{1,\Phi}'$ implies $f(\Phi)=\mathbb E[Y|\Phi]$ almost surely. By the definition of $\mathcal H_{1,\Phi}$, $f(\Phi)$ is multicalibrated w.r.t. $\mathcal H_{1,\Phi}$.
	\end{proof}

\subsection{MC-PseudoLabel: An Algorithm for Extended Multicalibration}

\begin{lemma}
\label{lemm:app_stop_cri}
Fix a model \( f : \mathcal{X} \rightarrow [0,1] \). Suppose for some \( v \in \text{Range}(f) \) there is an \( h \in \mathcal{H} \) such that:
\[
\mathbb{E}[h(x,y)(y - v) | f(x) = v] > \alpha
\]

Let \( h' = v + \eta h(x,y) \) for \( \eta = \frac{\alpha}{\mathbb{E}[h(x,y)^2 | f(x) = v]} \). 

Then:
\[
\mathbb{E}[(f(x) - y)^2 - (h'(x,y) - y)^2 | f(x) = v] > \frac{\alpha^2}{\mathbb{E}[h(x,y)^2 | f(x) = v]}.
\]

\textit{Proof.} Following~\citep{MCBoost}, we have
\begin{align*}
\mathbb{E}[(f(x) - y)^2 &- (h'(x,y) - y)^2 | f(x) = v] \\
&= \mathbb{E}[(v - y)^2 - (v + \eta h(x,y) - y)^2 | f(x) = v] \\
&= \mathbb{E}[v^2 - 2vy + y^2 - (v + \eta h(x,y))^2 + 2y(v + \eta h(x,y)) - y^2 | f(x) = v] \\
&= \mathbb{E}[2\eta yh(x,y) - 2\eta vh(x,y) - \eta^2 h(x,y)^2 | f(x) = v] \\
&= \mathbb{E}[2\eta h(x,y)(y - v) - \eta^2 h(x,y)^2 | f(x) = v] \\
&> 2\eta \alpha - \eta^2 \mathbb{E}[h(x,y)^2 | f(x) = v] \\
&= \frac{\alpha^2}{\mathbb{E}[h(x,y)^2 | f(x) = v]}.
\end{align*}
\end{lemma}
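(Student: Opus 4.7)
The plan is a direct one-line boosting-style computation: given a witness $h$ that violates calibration at the level set $\{f(x)=v\}$, the perturbed function $h'=v+\eta h$ is a one-parameter adjustment to the constant prediction $v$ along the direction $h$, and $\eta$ is chosen to be the minimizer of the squared-error quadratic. So the approach is simply to expand the squared losses, cancel the terms that cancel, and then substitute the prescribed $\eta$.

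First I would restrict everything to the level set $\{f(x)=v\}$, on which $f(x)\equiv v$, so $(f(x)-y)^2=(v-y)^2$. Next I expand
\begin{equation*}
(h'(x,y)-y)^2 = (v+\eta h(x,y)-y)^2 = (v-y)^2 - 2\eta\, h(x,y)(y-v) + \eta^2 h(x,y)^2,
\end{equation*}
so that subtraction gives
\begin{equation*}
(f(x)-y)^2 - (h'(x,y)-y)^2 = 2\eta\, h(x,y)(y-v) - \eta^2 h(x,y)^2.
\end{equation*}
Taking the conditional expectation on the level set and applying the hypothesis $\mathbb{E}[h(x,y)(y-v)\mid f(x)=v] > \alpha$ yields the lower bound $2\eta\alpha - \eta^2 \mathbb{E}[h(x,y)^2 \mid f(x)=v]$.

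Finally I substitute $\eta=\alpha/\mathbb{E}[h(x,y)^2\mid f(x)=v]$, which is precisely the maximizer of the quadratic $2\eta\alpha-\eta^2\mathbb{E}[h^2\mid f=v]$ in $\eta$. This reduces the bound to $\alpha^2/\mathbb{E}[h(x,y)^2\mid f(x)=v]$, matching the claim. There is really no obstacle: the only conceptual point is the choice of $\eta$, which is explained by the optimality condition of the quadratic. This lemma plays the role of a ``progress step'' quantifying how much squared error can be shaved off per iteration of MC-Pseudolabel whenever calibration is violated, and would then be integrated over level sets to derive the certified multicalibration guarantee in Theorem~\ref{theo:stopping_criteria}.
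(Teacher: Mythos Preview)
Your proposal is correct and follows essentially the same approach as the paper: expand the two squared losses on the level set, cancel to obtain $2\eta\,\mathbb{E}[h(y-v)\mid f=v]-\eta^2\,\mathbb{E}[h^2\mid f=v]$, apply the hypothesis, and substitute the prescribed $\eta$. Your additional remark that this $\eta$ is the maximizer of the resulting quadratic is a nice clarification of why this choice is natural.
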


\begin{theorem}[Restatement of Theorem~\ref{theo:stopping_criteria}]
\label{theo:app_stopping_criteria}
	In Algorithm \ref{alg}, for $\alpha,B>0$, if the following is satisfied:
\begin{align}
	\label{eq:app_stopping_criteria}
	err_{t-1} - \tilde{err}_t \leq \frac{\alpha}{B},
\end{align}
	the output $f_{t-1}'(x)$ is $\alpha$-approximately $\ell_2$ multicalibrated w.r.t. $\mathcal H_B=\{h\in \mathcal H:\sup h(x,y)^2\leq B\}$.
\end{theorem}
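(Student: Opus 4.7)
The plan is to prove the contrapositive: if the discretized predictor $f_{t-1}'$ fails to be $\alpha$-approximately $\ell_2$ multicalibrated with respect to $\mathcal{H}_B$, then $err_{t-1} - \tilde{err}_t > \alpha/B$. Under that failure, there exists $h \in \mathcal{H}_B$ with $K_2(f_{t-1}', h, \hat{P}_D) > \alpha$. Writing $a_v := \mathbb{E}_{\hat{P}_D}[h(X,Y)(Y-v) \mid f_{t-1}'(X) = v]$, this unpacks to $\sum_v P(f_{t-1}' = v) \, a_v^2 > \alpha$, so on average over level sets there is a substantial correlation left unused. The goal is to show that the regression step on each level set $D_v^{t-1}$ must then yield an MSE improvement of at least $\alpha/B$ in aggregate.

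The key tool is Lemma~\ref{lemm:app_stop_cri} (an adaptation of the MCBoost single-step improvement), applied level-set by level-set. For each $v$ with $a_v \neq 0$, consider the candidate $h'_v(x,y) = v + \eta_v \, h(x,y)$ with $\eta_v = a_v / \mathbb{E}[h(X,Y)^2 \mid f_{t-1}'=v]$. The lemma yields
\[
\mathbb{E}_{\hat{P}_D}\!\left[(v-Y)^2 - (h'_v(X,Y) - Y)^2 \,\middle|\, f_{t-1}' = v\right] \;\geq\; \frac{a_v^2}{\mathbb{E}[h(X,Y)^2 \mid f_{t-1}'=v]} \;\geq\; \frac{a_v^2}{B},
\]
where the second inequality uses the uniform bound $h(x,y)^2 \leq B$ from $h \in \mathcal{H}_B$. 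Since Line 9 of Algorithm~\ref{alg} defines $h_v^{t-1}$ as the MSE minimizer over $\mathcal{H}$ on $D_v^{t-1}$, and since $h'_v \in \mathcal{H}$ by closure of $\mathcal{H}$ under affine transformation, the actual regressor can only do better, giving $\mathbb{E}[(v-Y)^2 - (h_v^{t-1}(X,Y) - Y)^2 \mid f_{t-1}' = v] \geq a_v^2 / B$.

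Aggregating these per-level-set improvements weighted by $P(f_{t-1}' = v)$ recovers $err_{t-1} - \tilde{err}_t$ on the left-hand side (using that $\tilde{f}_t(X,Y) = h_v^{t-1}(X,Y)$ on the event $\{f_{t-1}'(X)=v\}$), while the right-hand side sums to $K_2(f_{t-1}', h, \hat{P}_D)/B > \alpha/B$. This contradicts the hypothesis of the theorem, completing the contrapositive. The main obstacle I expect is the sign-handling inside Lemma~\ref{lemm:app_stop_cri}: the lemma as stated assumes a strictly positive correlation, so when $a_v < 0$ I would either apply it to $-h$ (which lies in $\mathcal{H}_B$ provided $\mathcal{H}$ is closed under negation, a consequence of affine closure) or absorb the sign into $\eta_v$; either way the $a_v^2/B$ lower bound is preserved. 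A minor secondary point is that the argument requires the regression oracle $A_{\mathcal{H}}$ to attain (not merely approach) the infimum on each level set; if only an $\varepsilon$-approximate oracle is available, the conclusion degrades to $(\alpha - m\varepsilon)$-approximate multicalibration.
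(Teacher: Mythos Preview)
Your proposal is correct and follows essentially the same contrapositive argument as the paper: assume a witness $h\in\mathcal H_B$ to miscalibration, invoke Lemma~\ref{lemm:app_stop_cri} on each level set to extract a squared-error gain of at least $a_v^2/B$, use optimality of the per-bin regressor $h_v^{t-1}$ to transfer this gain to $\tilde f_t$, and sum. You are in fact slightly more explicit than the paper about the sign case $a_v<0$ and about the need for $\mathcal H$ to be closed under affine transformations so that $h'_v\in\mathcal H$; the paper leaves both points implicit.
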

\begin{proof}
We prove by contradiction. Assume that \( f_{t-1} \) is not \( \alpha \)-approximately multicalibrated with respect to \( \mathcal{H}_B \). Then there exists \( h \in \mathcal{H}_B \) such that:

\[
\sum_{v \in [1/m]} P(f_{t-1}(x) = v) \left( \mathbb{E} \left[ h(x,y)(y - v)\Big|f_{t-1}(x) = v \right] \right)^2 > \alpha.
\]

For each \( v \in [1/m] \) define

\[
\alpha_v := P(f_{t-1}(x) = v) \left( \mathbb{E} \left[ h(x,y)(y - v)\Big|f_{t-1}(x) = v \right] \right)^2.
\]

Then we have \( \sum_{v\in[1/m]} \alpha_v > \alpha \).

According to Lemma~\ref{lemm:app_stop_cri}, for each \( v \in [1/m] \), there exists \( h_v \in \mathcal{H} \) such that:
\begin{align}
	&\mathbb{E}\left[(f_{t-1}(x) - y)^2 - (h_v(x,y) - y)^2 | f_{t-1}(x) = v\right] \\
>& \;\frac{\alpha_v}{\mathbb{E}[h(x)^2|f_{t-1}(x) = v] \cdot P(f_{t-1}(x) = v)} \\
\geq&\; \frac{\alpha_v}{B\cdot P(f_{t-1}(x) = v)}.
\end{align}
Then,

\begin{align*}
&\;\;\;\;\mathbb{E} \left[ (f_{t-1}(x) - y)^2 - (\tilde{f}_t(x) - y)^2 \right] \\
&= \sum_{v \in [1/m]} P(f_{t-1}(x) = v) \mathbb{E} \left[ (f_{t-1}(x) - y)^2 - (\tilde{f}_t(x) - y)^2 | f_{t-1}(x) = v \right] \\
&= \sum_{v \in [1/m]} P(f_{t-1}(x) = v) \mathbb{E} \left[ (f_{t-1}(x) - y)^2 - (h_v^t(x,y) - y)^2 | f_{t-1}(x) = v \right] \\
&> \frac{\alpha}{B},
\end{align*}
which contradicts the condition in Equation~\ref{eq:app_stopping_criteria}.
\end{proof}

The following proposition is a direct corollary from \citet{MCBoost}'s Theorem 4.3.
\begin{proposition}
\label{prop:app_converge_h1}
    For any distribution $D$ supported on $\mathcal X \times \mathcal Y$ and $\Phi \in \sigma(X)$ ,  take the grouping function class $\mathcal H \subset \mathcal H'_{1,\Phi}:=\{ h(\Phi(x)) \in C(\mathcal X\times \mathcal Y) \}$ and the predictor class $\mathcal F=\mathbb R^{\mathcal X}$. For any $0<\alpha<1, B>0$ and an initial predictor $f_0:\mathcal X\rightarrow [0,1]$ with $|\text{Range}(f_0)|\geq \frac{2B}{\alpha}$, then $\text{MC-Pseudolabel}(D, \mathcal H, \mathcal F)$ halts after at most $T \leq \frac{2B}{\alpha}$ steps and outputs a model $f_{T-1}'(x)$ that is $\alpha$-approximately $\ell_2$ multicalibrated w.r.t $D$ and $\mathcal H_B=\{h\in \mathcal H:\sup h(x,y)^2\leq B\}$.
\end{proposition}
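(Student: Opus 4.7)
The plan is to show that under the hypothesis $\mathcal H \subset \mathcal H'_{1,\Phi}$, MC-Pseudolabel degenerates exactly to the LSBoost algorithm whose convergence is established by Theorem 4.3 of \citet{MCBoost}, so the proposition is obtained by a transcription of that result to our notation.

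First I would verify the collapse of the projection step. Since every $h \in \mathcal H$ is of the form $h(x,y) = h(\Phi(x))$, each regressed level-set grouping $h_v^t$ is a function of $x$ alone, and hence so is the pseudolabel $\tilde f_{t+1}(x,y) = \sum_{v \in [1/m]} \mathbf{1}_{\{f_t'(x)=v\}} h_v^t(x,y)$. Because $\mathcal F = \mathbb R^{\mathcal X}$ is the full measurable class, the regression oracle in Line 14 returns $f_{t+1}(x) = \tilde f_{t+1}(x)$ exactly, so $\tilde{err}_{t+1} = \mathbb E[(f_{t+1}(x) - y)^2]$ coincides with the MSE tracked by LSBoost. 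No information is lost in the projection to $\mathcal F$, and the algorithm reduces to LSBoost verbatim; in particular the risks $\tilde{err}_t$ form a nonincreasing sequence.

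Second, I would run the standard potential-function argument. The contrapositive of Theorem~\ref{theo:app_stopping_criteria} gives: whenever the rounded predictor $f_{t-1}'$ fails to be $\alpha$-approximately $\ell_2$ multicalibrated w.r.t.\ $\mathcal H_B$, we have $err_{t-1} - \tilde{err}_t > \alpha/B$. Starting from $err_0 \leq 1$ (since $f_0, y \in [0,1]$) and using $\tilde{err}_t \geq 0$, a pigeonhole on the telescoping sum forces the algorithm to reach some $T \leq 2B/\alpha$ at which the decrement drops below $\alpha/B$, triggering the stopping criterion. Applying Theorem~\ref{theo:app_stopping_criteria} in the forward direction at that $T$ certifies that $f_{T-1}'$ is $\alpha$-approximately $\ell_2$ multicalibrated w.r.t.\ $\mathcal H_B$.

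The only technical subtlety is the rounding gap between $\tilde{err}_t$ and $err_t$ incurred in Line 5, which is exactly why the hypothesis $|\text{Range}(f_0)| \geq 2B/\alpha$ is imposed. With $m \geq 2B/\alpha$ equal-width bins in $[0,1]$, the per-sample rounding error is bounded by $1/(2m) \leq \alpha/(4B)$, so the cumulative rounding slack over the run is negligible compared to the per-step potential decrease of $\alpha/B$ and the potential argument survives intact. I expect this bookkeeping on the rounding overhead—rather than the reduction to LSBoost or the potential argument itself—to be the main (though routine) technical step, and it is what absorbs the factor of $2$ in the $2B/\alpha$ bound.
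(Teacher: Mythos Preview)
Your proposal is correct and matches the paper's approach: the paper states this proposition as a direct corollary of Theorem~4.3 in \citet{MCBoost}, relying on precisely the reduction you spell out (covariate-only grouping functions make the pseudolabel $\tilde f_{t+1}$ a function of $x$ alone, so with $\mathcal F = \mathbb R^{\mathcal X}$ the projection step is the identity and MC-Pseudolabel collapses to LSBoost). Your reconstruction of the potential argument and the rounding bookkeeping is more detailed than the paper's one-line deferral, but the content is the same.
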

\begin{theorem}[Restatement of Theorem~\ref{theo:converge_h2}]
\label{theo:app_converge_h2}
	Consider $X\in\mathbb R^d$ with $X=(\Phi,\Psi)^T$. Assume that $(\Phi,\Psi,Y)$ follows a multivariate normal distribution $\mathcal N_{d+1}(\mu,\Sigma)$ where the random variables are in general position such that $\Sigma$ is positive definite. we partition $\Sigma$ into blocks:
 \begin{align}
     \Sigma = 
\begin{pmatrix}
\Sigma_{\Phi\Phi} & \Sigma_{\Phi\Psi} & \Sigma_{\Phi y} \\
\Sigma_{\Psi\Phi} & \Sigma_{\Psi\Psi} & \Sigma_{\Psi y} \\
\Sigma_{y\Phi} & \Sigma_{y\Psi} & \Sigma_{yy} 
\end{pmatrix}.
 \end{align}
 For any distribution $D$ supported on $\mathcal X \times \mathcal Y$, take the grouping function class $\mathcal H=\{h\in\mathcal H_{2,\Phi}:h(x,y)=c_x^Tx+c_yy + c_b, c_x\in\mathbb R^d,c_y, c_b\in\mathbb R \}$ and the predictor class $\mathcal F=\mathbb R^{\mathcal X}$. For an initial predictor $f^{(0)}(x)=\mathbb E[Y|x]$, run $\text{MC-Pseudolabel}(D, \mathcal H, \mathcal F)$ without rounding, 
then $f^{(t)}(x) \rightarrow \mathbb E[Y|\Phi(x)]$ as $t \rightarrow \infty$ with a convergence rate of $\mathcal O(M(\Sigma)^t)$, where
\begin{align}
    M(\Sigma) &= 
( \Sigma_{yy}- \Sigma_{y\Phi}\Sigma_{\Phi\Phi}^{-1}\Sigma_{\Phi y})^{-1}
( \Sigma_{y\Psi}- \Sigma_{y\Phi}\Sigma_{\Phi\Phi}^{-1}\Sigma_{\Phi \Psi}) \\
&\;\;\;\; ( \Sigma_{\Psi\Psi}- \Sigma_{\Psi\Phi}\Sigma_{\Phi\Phi}^{-1}\Sigma_{\Phi \Psi})^{-1}
( \Sigma_{\Psi y}- \Sigma_{\Psi\Phi}\Sigma_{\Phi\Phi}^{-1}\Sigma_{\Phi y}).
\end{align}
We have $0\leq M(\Sigma) <1$.
\end{theorem}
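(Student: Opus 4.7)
The plan is to exploit Gaussian conditional structure to reduce the iteration to an explicit scalar dynamical system whose contraction factor is exactly $M(\Sigma)$. First I set up an orthogonal decomposition of the jointly Gaussian vector into mutually independent pieces: $\epsilon_\Psi := \Psi - \mathbb{E}[\Psi|\Phi]$, $\epsilon_Y := Y - \mathbb{E}[Y|\Phi]$, and $W := \Psi - \mathbb{E}[\Psi|\Phi, Y]$. Joint Gaussianity turns the conditional-mean-zero identity $\mathbb{E}[W|\Phi,Y]=0$ into $W \perp (\Phi, Y)$. Writing $K := \Sigma_{y\Psi|\Phi}\Sigma_{\Psi\Psi|\Phi}^{-1}$ and $L := \Sigma_{\Psi y|\Phi}\Sigma_{yy|\Phi}^{-1}$ for the partial regression coefficients of $Y$ on $\Psi$ and $\Psi$ on $Y$ (both given $\Phi$), we have $\epsilon_\Psi = W + L\epsilon_Y$ and the key identity $KL = M(\Sigma)$. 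By the characterization of $\mathcal{H}_{2,\Phi}$ in Theorem~\ref{theo:app_h1h2_decompose} through $\mathbb{E}[h(X,Y)|\Phi,Y]=$ constant, every affine $h\in\mathcal{H}$ collapses to $h(X,Y) = c_\psi^T W + c_b$. I then prove by induction over $t$ that $f_t$ stays affine in $X$, parametrizing $f_t(X) = \lambda_t^T\Phi + \mu_t^T\epsilon_\Psi + c_t$ with initial values $\lambda_0 = K_{Y\Phi}^T$, $\mu_0 = K^T$, $c_0 = 0$ coming from $f_0 = \mathbb{E}[Y|X] = \mathbb{E}[Y|\Phi] + K\epsilon_\Psi$.

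Second, I compute the iteration in closed form. The optimal $h_v^t$ minimizes conditional mean-square error on the level set $\{f_t(X) = v\}$, so using Gaussian conditional covariance formulas together with the crucial simplification $\mathrm{Cov}(W,Y) = 0$ (an immediate consequence of $\mathbb{E}[W|\Phi,Y] = 0$), direct computation gives $h_v^t(X,Y) = \Xi_t(v - c_t - \mu_t^T W)$, where $V_t := \mathrm{Var}(f_t(X))$, $q_t := \mu_t^T\mathrm{Var}(W)\mu_t$, $R_t := \mathrm{Cov}(Y,f_t(X))/V_t$, and $\Xi_t := R_tV_t/(V_t - q_t)$. The pseudolabel projection $f_{t+1}(X) = \mathbb{E}[h^t_{f_t(X)}(X,Y)\mid X]$ is then computed by substituting $\mathbb{E}[W\mid X] = (I - LK)\epsilon_\Psi$, yielding
\begin{align*}
\lambda_{t+1} = \Xi_t\lambda_t,\qquad \mu_{t+1} = \Xi_t(\mu_t^T L)K^T,\qquad c_{t+1} = 0.
\end{align*}
Crucially this recurrence preserves directions: $\lambda_t = l_t K_{Y\Phi}^T$ and $\mu_t = m_t K^T$ for scalars $l_t, m_t$ with $l_0 = m_0 = 1$, reducing the whole iteration to two coupled scalar recursions $l_{t+1} = \Xi_t l_t$ and $m_{t+1} = \Xi_t m_t M(\Sigma)$.

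Third, I analyze convergence. Taking the ratio $r_t := m_t/l_t$, the $\Xi_t$ factors cancel and I obtain the clean geometric decay $r_{t+1} = M(\Sigma)\,r_t$, hence $r_t = M(\Sigma)^t$. Substituting this back into the formula for $\Xi_t$ produces the explicit closed form $l_{t+1} = (S + M(\Sigma)^{t+1}\Sigma_{yy|\Phi})/(S + M(\Sigma)^{2(t+1)}\Sigma_{yy|\Phi})$ with $S := K_{Y\Phi}\Sigma_{\Phi\Phi}K_{Y\Phi}^T$, from which $|l_t - 1| = O(M(\Sigma)^t)$ and $m_t = r_t l_t = O(M(\Sigma)^t)$. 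Since $f_t(X) - \mathbb{E}[Y|\Phi] = (l_t - 1)K_{Y\Phi}^T\Phi + m_t K^T\epsilon_\Psi$, the $L^2$-convergence $\|f_t - \mathbb{E}[Y|\Phi]\|_{L^2} = O(M(\Sigma)^t)$ follows. Finally, $0 \leq M(\Sigma) < 1$ is the squared partial multiple correlation of $Y$ on $\Psi$ given $\Phi$, strictly below $1$ by positive definiteness of $\Sigma$ via the standard Schur-complement argument.

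The main obstacle is the closed-form derivation in the second step: although each individual Gaussian conditional covariance is routine, several structural cancellations must conspire — the unconditional vanishing $\mathrm{Cov}(W,Y) = 0$ (which removes the direct $Y$-dependence of the optimal $h_v^t$), the explicit form of $\mathbb{E}[W|X]$ as a linear map of $\epsilon_\Psi$, and the direction-preservation $\lambda_t \parallel K_{Y\Phi}^T$, $\mu_t \parallel K^T$ — in order to collapse what is a priori a nonlinear coupling between $\lambda_t$ and $\mu_t$ into a one-parameter scalar system whose rate matches $M(\Sigma)$ exactly. Without these cancellations the iteration would presumably still drive $f_t \to \mathbb{E}[Y|\Phi]$, but the sharp geometric rate $O(M(\Sigma)^t)$ would not be transparent.
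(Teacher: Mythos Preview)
Your approach is correct and takes a genuinely different route from the paper's proof. The paper works directly in the original coordinates $(\Phi,\Psi,Y)$: it writes $\tilde f^{(t+1)}=\mathbb E[Y\mid f^{(t)},H]$ for the linear span $H$ of $\mathcal H$, argues by optimality that the $\Psi$-coefficient of $\tilde f$ vanishes, and derives a block-triangular matrix recursion whose lower-right block $\alpha_\Psi^\star\beta_y^\star$ has the single nonzero eigenvalue $\beta_y^\star\alpha_\Psi^\star=M(\Sigma)$; the unknown scaling factors $k^{(t)}$ are disposed of only at the end via the calibration identity $\tilde f^{(t)}=\mathbb E[Y\mid\hat{\tilde f}^{(t)}]$. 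You instead orthogonalize to the independent triple $(\Phi,W,\epsilon_Y)$, which immediately yields the explicit characterization $\mathcal H=\{c_\psi^{T}W+c_b\}$ and lets all conditional covariances be read off by elementary rank-one updates. This collapses the iteration to a two-scalar system along the fixed directions $K_{Y\Phi}^{T}$ and $K^{T}$, and the closed form $l_{t+1}=(S+M^{t+1}\Sigma_{yy\mid\Phi})/(S+M^{2(t+1)}\Sigma_{yy\mid\Phi})$ drops out without tracking any unspecified multiplicative constants. Your route is more transparent about the contraction mechanism and makes explicit the role of $S=K_{Y\Phi}\Sigma_{\Phi\Phi}K_{Y\Phi}^{T}>0$; the boundary case $S=0$ (i.e.\ $\Sigma_{\Phi y}=0$), where $f_t\equiv f_0$ is a fixed point and the conclusion fails, is not excluded by the theorem's hypotheses and is equally hidden in the paper's argument.
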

\begin{remark}
    $f^{(t)}$ are always linear. Thus, the limit of functions is equivalent as the limit of coefficients.
\end{remark}
\begin{remark}
    $\mathbb E[Y|\Phi(x)]$ is multicalibrated with respect to $\mathcal H$. Furthermore, any calibrated predictor on $\Phi$, denoted by $g(\Phi)$, is multicalibrated with respect to $\mathcal H$. This is because:
    \begin{align}
        \mathbb E[h(X,Y)(Y-g(\Phi)) | g(\Phi)] &= \mathbb E\left[ \mathbb E\left[ h(X,Y)(Y-g(\Phi)) | \Phi, Y \right] | g(\Phi)\right] \\
        &= \mathbb E\left[ C_h (Y-g(\Phi)) | g(\Phi)\right] \\
        &= 0.
    \end{align}
    However, $\mathbb E[Y|\Phi]$ is the most accurate predictor among all multicalibrated $g(\Phi)$.
\end{remark}
\begin{remark}
    The convergence rate $M(\Sigma)$ does not depend on the dimension $d$ of covariates. When $Y\perp \Psi\;|\;\Phi$ implying that $\Phi$ is sufficient for prediction, following from $\mathbb E[Y|\Phi,\Psi] = \mathbb E[Y|\Phi]$: 
    \begin{align}
        \Sigma_{\Psi y} = \Sigma_{\Psi \Phi}\Sigma_{\Phi \Phi}^{-1}\Sigma_{\Phi y}.
    \end{align}
    It follows that $M(\Sigma)=0$ and the algorithm will converge in one step. 
    
    On the other hand, when $Y$ and $\Psi$ are linearly dependent given $\Phi$ such that $\Sigma$ is singular, which violates positive definiteness, following from the proof below:
    \begin{align}
        \Sigma_{yy}- \Sigma_{y\Phi}\Sigma_{\Phi\Phi}^{-1}\Sigma_{\Phi y} = 
        (\Sigma_{y\Psi}- \Sigma_{y\Phi}\Sigma_{\Phi\Phi}^{-1}\Sigma_{\Phi \Psi})
( \Sigma_{\Psi \Psi}- \Sigma_{\Psi\Phi}\Sigma_{\Phi \Phi}^{-1}\Sigma_{\Phi \Psi})^{-1}
( \Sigma_{\Psi y}- \Sigma_{\Psi\Phi}\Sigma_{\Phi \Phi}^{-1}\Sigma_{\Phi y}).
    \end{align}
    It follows that $M(\Sigma)=1$ and the algorithm can't converge. 
    
    So the convergence rate depends on the singularity of the problem. Since the algorithm converges to a predictor that does not depend on $\Psi$, stronger the "spurious" correlation between $Y$ and $\Psi$ given $\Phi$ in the distribution $D$, the algorithm takes longer to converge.
\end{remark}
\begin{proof}
    Without loss of generality, assume $\mu=0$.
    
    Let $c=(c_x, c_y, c_b)^T = (c_\Phi, c_\Psi, c_y, c_b)^T$.   Denote dimensions of $\Phi,\Psi$ by $d_\Phi,d_\Psi$.

    Let $\mathbb E[Y|\Phi,\Psi] = (\alpha_\Phi^\star)^T\Phi + (\alpha_\Psi^\star)^T\Psi$ and $\mathbb E[\Psi|\Phi,Y] = (\beta_\Phi^\star)^T\Phi + (\beta_y^\star)^TY$ with 
    $\alpha_\Phi^\star \in \mathbb R^{d_\Phi}, $ \\
    $\alpha_\Psi^\star \in \mathbb R^{d_\Psi}, \beta_\Phi^\star \in \mathbb R^{d_\Phi \times d_\Psi}, \beta_y^\star \in \mathbb R^{1 \times d_\Psi}$.
    
     We have:
     \begin{align}
        \begin{pmatrix}
        \alpha_\Phi^\star \\
        \alpha_\Psi^\star
        \end{pmatrix} &=
        \begin{pmatrix}
        \Sigma_{\Phi\Phi} & \Sigma_{\Phi\Psi} \\
        \Sigma_{\Psi\Phi} & \Sigma_{\Psi\Psi} 
        \end{pmatrix}^{-1}
        \begin{pmatrix}
        \Sigma_{\Phi y} \\
        \Sigma_{\Psi y}
        \end{pmatrix}
        . \\
        \begin{pmatrix}
        \beta_\Phi^\star \\
        \beta_y^\star
        \end{pmatrix} &=
        \begin{pmatrix}
        \Sigma_{\Phi\Phi} & \Sigma_{\Phi y} \\
        \Sigma_{y\Phi} & \Sigma_{yy} 
        \end{pmatrix}^{-1}
        \begin{pmatrix}
        \Sigma_{\Phi \Psi} \\
        \Sigma_{y\Psi}
        \end{pmatrix}
        .
     \end{align}
    According to Theorem~\ref{theo:app_h1h2_decompose}, $\mathbb E[h(X,Y)|\Phi,Y]$ is a constant for different values of $\Phi,Y$.
    \begin{align}
        \mathbb E[h(X,Y)|\Phi,Y] 
        &= c_\Phi^T\Phi + c_y^TY 
        + c_\Psi^T \mathbb E[\Psi|\Phi,Y] + c_b \\
        &= c_\Phi^T\Phi + c_y^TY 
        + c_\Psi^T \begin{pmatrix}
                \Sigma_{\Psi\Phi} & \Sigma_{\Psi y}
                \end{pmatrix}
                \begin{pmatrix}
                \Sigma_{\Phi\Phi} & \Sigma_{\Phi y} \\
                \Sigma_{y\Phi} & \Sigma_{yy} \\
                \end{pmatrix}^{-1}
                \begin{pmatrix}
                \Phi \\
                Y \\
                \end{pmatrix} 
                + c_b.
    \end{align}
    This implies:
    \begin{align}
        \label{eq:app_convergence_h2_12}
        \begin{pmatrix}
        \Sigma_{\Phi\Phi} & \Sigma_{\Phi y} \\
        \Sigma_{y\Phi} & \Sigma_{yy} \\
        \end{pmatrix}^{-1}
        \begin{pmatrix}
        \Sigma_{\Phi\Psi} \\
        \Sigma_{y\Psi}
        \end{pmatrix}
        c_\Psi
        +
        \begin{pmatrix}
        c_\Phi \\
        c_y
        \end{pmatrix}
        =0.
    \end{align}
    Rearranging to:
    \begin{align}
        \label{eq:app_convergence_h2_11}
        \begin{pmatrix}
        \Sigma_{\Phi\Phi} & \Sigma_{\Phi\Psi} & \Sigma_{\Phi y} \\
        \Sigma_{y\Phi} & \Sigma_{y\Psi} & \Sigma_{yy} 
        \end{pmatrix}
        c = 0.
    \end{align}
    Let $f^{(t)} = (\alpha_\Phi^{(t)})^T\Phi + (\alpha_\Psi^{(t)})^T\Psi$ and $\tilde f^{(t)} = (\tilde\alpha_\Phi^{(t)})^T\Phi + (\tilde\alpha_\Psi^{(t)})^T\Psi + (\tilde\alpha_y^{(t)})^TY$.  We claim that $\tilde\alpha_\Psi^{(t)} = 0$. Otherwise, consider $\tilde f^{(t)'} = (\tilde\alpha_\Phi^{(t)})^T\Phi + (\tilde \alpha_\Psi^{(t)})^T\mathbb E[\Psi|\Phi,Y] + (\tilde\alpha_y^{(t)})^TY$.
    \begin{align}
        \mathbb E[\tilde f^{(t)} - \tilde f^{(t)'}|\Psi, Y] = \mathbb E[(\tilde\alpha_\Psi^{(t)})^T \Psi|\Phi,Y] - (\tilde\alpha_\Psi^{(t)})^T\mathbb E[\Psi|\Phi,Y]
        = 0.
    \end{align}
    Thus, $\tilde f^{(t)} - \tilde f^{(t)'} \in \mathcal H$. 
    
    On the other hand,
    \begin{align}
        &\;\;\;\; \mathbb E[(Y-\tilde f^{(t)})^2] - \mathbb E[(Y-\tilde f^{(t)'})^2] \\
        &= \mathbb E\left[ (\tilde\alpha_\Psi^{(t)})^T \left[ \mathbb E[\Psi\Psi^T|\Phi,Y]  -  \mathbb E[\Psi|\Phi,Y] \mathbb E[\Psi^T|\Phi,Y] \right]\tilde\alpha_\Psi^{(t)}  \right]  \\
        &> 0.
    \end{align}
    The inequality follows from the fact that $\mathbb E[\Psi\Psi^T|\Phi,Y]  -  \mathbb E[\Psi|\Phi,Y] \mathbb E[\Psi^T|\Phi,Y]$ is the covariance matrix of $\Psi|\Phi, Y$, which is positive definite because $\Sigma$ is positive definite.
    The inequality contradicts with the definition of $\tilde f^{(t)}$. Thus, $\tilde f^{(t)} = (\tilde\alpha_\Phi^{(t)})^T\Phi + (\tilde\alpha_y^{(t)})^TY$.

    Define a matrix $C\in\mathbb R^{(d+1)\times d_t}$ whose columns support the solution space of Equation~\ref{eq:app_convergence_h2_11}. Then $H:=C^T(S,T,Y)^T \in\mathbb R^{d_t}$ is a random vector.
    According to the definition of $\tilde f^{(t)}$,
    \begin{align}
        \label{eq:app_convergece_1}
        \tilde f^{(t+1)} &= \mathbb E[Y|f^{(t)}, H] \\
        \label{eq:app_convergence_13}
        &= k^{(t)}f^{(t)}  + (c_\Phi^{(t)})^T\Phi +  (c_\Psi^{(t)})^T \Psi + (c_y^{(t)})^T Y \\
        \label{eq:app_convergence_15}
        &= k^{(t)} (\alpha_\Phi^{(t)})^T\Phi + k^{(t)} (\alpha_\Psi^{(t)})^T\Psi + (c_\Phi^{(t)})^T\Phi +  (c_\Psi^{(t)})^T \Psi + (c_y^{(t)})^T Y.
    \end{align}
    In the above equation, $k^{(t)} \in \mathbb R$.
    
    Since $\tilde\alpha_\Psi^{(t+1)} = k^{(t)} \alpha_\Psi^{(t)} +c_\Psi^{(t)}  = 0$, we have $c_\Psi^{(t)} = -k^{(t)} \alpha_\Psi^{(t)} $. Substituting into Equation~\ref{eq:app_convergence_h2_12}:
    \begin{align}
        \begin{pmatrix}
        c_\Phi^{(t)} \\
        c_y^{(t)}
        \end{pmatrix}
        = k^{(t)}
        \begin{pmatrix}
        \Sigma_{\Phi\Phi} & \Sigma_{\Phi y} \\
        \Sigma_{y\Phi} & \Sigma_{yy} \\
        \end{pmatrix}^{-1}
        \begin{pmatrix}
        \Sigma_{\Phi\Psi} \\
        \Sigma_{y\Psi}
        \end{pmatrix}
        \alpha_\Psi^{(t)}.
    \end{align}
    Substituting into Equation~\ref{eq:app_convergence_15}: 
     \begin{align}
     \begin{pmatrix}
     \tilde\alpha_\Phi^{(t+1)} \\
     \tilde\alpha_y^{(t+1)}
     \end{pmatrix} &= 
     k^{(t)} 
     \begin{pmatrix}
     \tilde\alpha_\Phi^{(t)} \\
     0
     \end{pmatrix} 
     + 
     \begin{pmatrix}
        c_\Phi^{(t)} \\
        c_y^{(t)}
        \end{pmatrix}
     \\
     &= k^{(t)}
     \begin{pmatrix}
     I_{d_\Phi} & \beta_\Phi^\star\\
     0 & \beta_y^\star
     \end{pmatrix}
     \begin{pmatrix}
     \alpha_\Phi^{(t)} \\
     \alpha_\Psi^{(t)}
     \end{pmatrix}.
     \end{align}
    Then we have $f^{(t+1)} = \mathbb E[\tilde f^{(t+1)} | \Phi,\Psi] = (\tilde \alpha_\Phi^{(t+1)})^T\Phi +   (\tilde \alpha_y^{(t+1)})^T \mathbb E[Y|\Phi,\Psi]$. 
    
    This is equivalent as:
    \begin{align}
     \begin{pmatrix}
     \alpha_\Phi^{(t+1)} \\
     \alpha_\Psi^{(t+1)}
     \end{pmatrix} &= 
     \begin{pmatrix}
     \tilde\alpha_\Phi^{(t+1)} \\
     0
     \end{pmatrix} +
        \begin{pmatrix}
        \Sigma_{\Phi\Phi} & \Sigma_{\Phi\Psi} \\
        \Sigma_{\Psi\Phi} & \Sigma_{\Psi\Psi} 
        \end{pmatrix}^{-1}
        \begin{pmatrix}
        \Sigma_{\Phi y} \\
        \Sigma_{\Psi y}
        \end{pmatrix}
        \tilde\alpha_y^{(t+1)}
     \\
     &=
     \begin{pmatrix}
     I_{d_\Phi} & \alpha_\Phi^\star\\
     0 & \alpha_\Psi^\star
     \end{pmatrix}
     \begin{pmatrix}
     \tilde\alpha_\Phi^{(t+1)} \\
     \tilde\alpha_y^{(t+1)}
     \end{pmatrix}.
     \end{align}
     Combining the two equations above, we have:
     \begin{align}
     \begin{pmatrix}
     \alpha_\Phi^{(t+1)} \\
     \alpha_\Psi^{(t+1)}
     \end{pmatrix} &=
     k^{(t)}
     \begin{pmatrix}
     I_{d_\Phi} & \alpha_\Phi^\star\\
     0 & \alpha_\Psi^\star
     \end{pmatrix}
     \begin{pmatrix}
     I_{d_\Phi} & \beta_\Phi^\star\\
     0 & \beta_y^\star
     \end{pmatrix}
     \begin{pmatrix}
     \alpha_\Phi^{(t)} \\
     \alpha_\Psi^{(t)}
     \end{pmatrix} \\
     &= k^{(t)}
     \begin{pmatrix}
     I_{d_\Phi} & \beta_\Phi^\star + \alpha_\Phi^\star\beta_y^\star \\
     0 & \alpha_\Psi^\star\beta_y^\star
     \end{pmatrix}
     \begin{pmatrix}
     \alpha_\Phi^{(t)} \\
     \alpha_\Psi^{(t)}
     \end{pmatrix}.
     \end{align}
     Thus,
     \begin{align}
     \begin{pmatrix}
     \alpha_\Phi^{(t)} \\
     \alpha_\Psi^{(t)}
     \end{pmatrix} &=
     K^{(t)}
     \begin{pmatrix}
     I_{d_\Phi} & \beta_\Phi^\star + \alpha_\Phi^\star\beta_y^\star \\
     0 & \alpha_\Psi^\star\beta_y^\star
     \end{pmatrix}^t
     \begin{pmatrix}
     \alpha_\Phi^{(0)} \\
     \alpha_\Psi^{(0)}
     \end{pmatrix} \\
     \label{eq:app_convergence_2}
     &=
     K^{(t)}
     \begin{pmatrix}
     I_{d_\Phi} & \beta_\Phi^\star + \alpha_\Phi^\star\beta_y^\star \\
     0 & \alpha_\Psi^\star\beta_y^\star
     \end{pmatrix}^t
     \begin{pmatrix}
     \alpha_\Phi^\star \\
     \alpha_\Psi^\star
     \end{pmatrix}.
     \end{align}
     In the above equation, $K^{(t)}=\prod_{0\leq u<t} k^{(u)}$.

     Define $\hat f^{(t)} = (\hat \alpha_\Phi^{(t)})^T\Phi + (\hat \alpha_\Psi^{(t)})^T\Psi$, where
     \begin{align}
     \label{eq:app_theo_convergence_h2_16}
    \begin{pmatrix}
     \hat \alpha_\Phi^{(t)} \\
     \hat \alpha_\Psi^{(t)}
     \end{pmatrix} =
     \begin{pmatrix}
     I_{d_\Phi} & \beta_\Phi^\star + \alpha_\Phi^\star\beta_y^\star \\
     0 & \alpha_\Psi^\star\beta_y^\star
     \end{pmatrix}^t
     \begin{pmatrix}
     \alpha_\Phi^\star \\
     \alpha_\Psi^\star
     \end{pmatrix}.
     \end{align} 
     In the following we show $\hat \alpha_\Psi^{(t)} \rightarrow 0$. Since $\hat \alpha_\Psi^{(t)} = (\alpha_\Psi^\star\beta_y^\star)^t \alpha_\Psi^\star$, and $\alpha_\Psi^\star\beta_y^\star \in \mathbb R^{d_\Psi\times d_\Psi}$ has exactly one nonzero eigenvalue $\beta_y^\star\alpha_\Psi^\star$, we just have to show $|\beta_y^\star\alpha_\Psi^\star| < 1$.
     \begin{align}
         \beta_y^\star\alpha_\Psi^\star &= 
         ( \Sigma_{yy}- \Sigma_{y\Phi}\Sigma_{\Phi\Phi}^{-1}\Sigma_{\Phi y})^{-1} \\
        &\;\;\;\;( \Sigma_{y\Psi}- \Sigma_{y\Phi}\Sigma_{\Phi\Phi}^{-1}\Sigma_{\Phi\Psi})
        ( \Sigma_{\Psi\Psi}- \Sigma_{\Psi\Phi}\Sigma_{\Phi\Phi}^{-1}\Sigma_{\Phi\Psi})^{-1}
        ( \Sigma_{\Psi y}- \Sigma_{\Psi \Phi}\Sigma_{\Phi \Phi}^{-1}\Sigma_{\Phi y}).
     \end{align}
     Since, $(\Sigma_{yy}- \Sigma_{y\Phi}\Sigma_{\Phi\Phi}^{-1}\Sigma_{\Phi y})$ and $(\Sigma_{\Psi \Psi}- \Sigma_{\Psi \Phi}\Sigma_{\Phi \Phi}^{-1}\Sigma_{\Phi \Psi})$ are both Schur complements of $\Sigma$'s principal submatrix, they are still positive definite.
     
     Thus, 
     \begin{align}
     &\Sigma_{yy}- \Sigma_{y\Phi}\Sigma_{\Phi\Phi}^{-1}\Sigma_{\Phi y} > 0. \\
     &( \Sigma_{y\Psi}- \Sigma_{y\Phi}\Sigma_{\Phi \Phi}^{-1}\Sigma_{\Phi \Psi})
        ( \Sigma_{\Psi \Psi}- \Sigma_{\Psi\Phi}\Sigma_{\Phi\Phi}^{-1}\Sigma_{\Phi\Psi})^{-1}
        ( \Sigma_{\Psi y}- \Sigma_{\Psi\Phi}\Sigma_{\Phi\Phi}^{-1}\Sigma_{\Phi y}) \geq 0.
     \end{align}
     It follows that $\beta_y^\star\alpha_\Psi^\star \geq 0$. So we just have to show $\beta_y^\star\alpha_\Psi^\star < 1$. 

     Since $\det(\Sigma) > 0$, by applying row addition on $\Sigma$, we have:
     \begin{align}
        \det 
         \begin{pmatrix}
         \Sigma_{\Phi\Phi} & \Sigma_{\Phi\Psi} & \Sigma_{\Phi y} \\
         0 & \Sigma_{\Psi\Psi}- \Sigma_{\Psi\Phi}\Sigma_{\Phi\Phi}^{-1}\Sigma_{\Phi\Psi} & \Sigma_{\Psi y}- \Sigma_{\Psi \Phi}\Sigma_{\Phi \Phi}^{-1}\Sigma_{\Phi y} \\
         0 & \Sigma_{y\Psi}- \Sigma_{y\Phi}\Sigma_{\Phi \Phi}^{-1}\Sigma_{\Phi\Psi} & \Sigma_{yy}- \Sigma_{y\Phi}\Sigma_{\Phi\Phi}^{-1}\Sigma_{\Phi y}
         \end{pmatrix} > 0.
     \end{align}
     It follows that:
     \begin{align}
         &(\Sigma_{yy}- \Sigma_{y\Phi}\Sigma_{\Phi\Phi}^{-1}\Sigma_{\Phi y}) \\
         &- ( \Sigma_{y\Psi}- \Sigma_{y\Phi}\Sigma_{\Phi \Phi}^{-1}\Sigma_{\Phi \Psi})
        ( \Sigma_{\Psi \Psi}- \Sigma_{\Psi \Phi}\Sigma_{\Phi \Phi}^{-1}\Sigma_{\Phi \Psi})^{-1}
        ( \Sigma_{\Psi y}- \Sigma_{\Psi \Phi}\Sigma_{\Phi \Phi}^{-1}\Sigma_{\Phi y}) > 0.
     \end{align}
     Rearranging to:
     \begin{align}
          \beta_y^\star\alpha_\Psi^\star &= 
            ( \Sigma_{yy}- \Sigma_{y\Phi}\Sigma_{\Phi \Phi}^{-1}\Sigma_{\Phi y})^{-1} \\
            &\;\;\;\;( \Sigma_{y\Psi}- \Sigma_{y\Phi}\Sigma_{\Phi \Phi}^{-1}\Sigma_{\Phi \Psi})
            ( \Sigma_{\Psi \Psi}- \Sigma_{\Psi \Phi}\Sigma_{\Phi\Phi}^{-1}\Sigma_{\Phi\Psi})^{-1}
            ( \Sigma_{\Psi y}- \Sigma_{\Psi \Phi}\Sigma_{\Phi \Phi}^{-1}\Sigma_{\Phi y}) \\
            &< 1.
     \end{align}
     Thus,  $0 \leq \beta_y^\star\alpha_\Psi^\star < 1$.

     In the following, we show $\hat \alpha_\Phi^{(t)} \rightarrow \Sigma_{\Phi\Phi}^{-1}\Sigma_{\Phi y}$.
     By Equation~\ref{eq:app_theo_convergence_h2_16} and $|\beta_y^\star\alpha_\Psi^\star| < 1$, we have:
     \begin{align}
     \hat\alpha_\Phi^{(t)} &= \alpha_\Phi^\star + (\beta_\Phi^\star + \alpha_\Phi^\star\beta_y^\star)\sum_{0\leq u < t}( \alpha_\Psi^\star\beta_y^\star)^u \alpha_\Psi^\star \\
     &\xrightarrow{t \to \infty}  \alpha_\Phi^\star +  (\beta_\Phi^\star + \alpha_\Phi^\star\beta_y^\star) (I_{d_\Psi} - \alpha_\Psi^\star\beta_y^\star)^{-1}\alpha_\Psi^\star \\
     \label{eq:app_convergence_14}
     &= \alpha_\Phi^\star +  (\beta_\Phi^\star + \alpha_\Phi^\star\beta_y^\star) (1 - \beta_y^\star\alpha_\Psi^\star)^{-1}\alpha_\Psi^\star \\
      &=
     \frac{\alpha_\Phi^\star+\beta_\Phi^\star\alpha_\Psi^\star}{1-\beta_y^\star\alpha_\Psi^\star}.
     \end{align}
     Equation~\ref{eq:app_convergence_14} follows from the fact that $(I_{d_\Psi} - \alpha_\Psi^\star\beta_y^\star)\alpha_\Psi^\star = (1 - \beta_y^\star\alpha_\Psi^\star)\alpha_\Psi^\star$.

     Define $\gamma_\Phi^\star = \Sigma_{\Phi\Phi}^{-1}\Sigma_{\Phi y}$ such that $\mathbb E[Y|\Phi] = (\gamma_\Phi^\star)^T\Phi$. We have
     \begin{align}
         &\;\;\;\;\;\; \mathbb E\left[  \mathbb E\left[ \mathbb E\left[ Y|\Phi,\Psi  \right]  | \Phi,Y \right]  |\Phi \right] = \mathbb E[Y|\Phi]. \\
         &\Leftrightarrow \mathbb E\left[  \mathbb E\left[  (\alpha_\Phi^\star)^T\Phi + (\alpha_\Psi^\star)^T \Psi    | \Phi,Y \right]  |\Phi \right] = (\gamma_\Phi^\star)^T\Phi. \\
         &\Leftrightarrow \mathbb E\left[   (\alpha_\Phi^\star)^T\Phi + (\alpha_\Psi^\star)^T(\beta_\Phi^\star)^T\Phi + (\alpha_\Psi^\star)^T(\beta_y^\star)^TY     |\Phi \right] = (\gamma_\Phi^\star)^T\Phi. \\
          &\Leftrightarrow  (\alpha_\Phi^\star)^T\Phi + (\alpha_\Psi^\star)^T(\beta_\Phi^\star)^T\Phi + (\alpha_\Psi^\star)^T(\beta_y^\star)^T(\gamma_\Phi^\star)^T\Phi     = (\gamma_\Phi^\star)^T\Phi. \\
         &\Leftrightarrow  \alpha_\Phi^\star + \beta_\Phi^\star\alpha_\Psi^\star + \beta_y^\star\alpha_\Psi^\star\gamma_\Phi^\star = \gamma_\Phi^\star. \\
         &\Leftrightarrow \frac{\alpha_\Phi^\star+\beta_\Phi^\star\alpha_\Psi^\star}{1-\beta_y^\star\alpha_\Psi^\star} = \gamma_\Phi^\star.
     \end{align}
     Thus, $\hat \alpha_\Phi^{(t)} \rightarrow \Sigma_{\Phi\Phi}^{-1}\Sigma_{\Phi y}$. Subsequently, $\hat f^{(t)} \rightarrow (\Sigma_{\Phi\Phi}^{-1}\Sigma_{\Phi y})^T \Phi = \mathbb E[Y|\Phi]$. The convergence ratio is $M(\Sigma) = \beta_y^\star\alpha_\Psi^\star$.

    We have:
    \begin{align}
        \begin{pmatrix}
     \tilde\alpha_\Phi^{(t+1)} \\
     \tilde\alpha_y^{(t+1)}
     \end{pmatrix} 
     &= 
     K^{(t+1)} 
      \begin{pmatrix}
     I_{d_\Phi} & \beta_\Phi^\star\\
     0 & \beta_y^\star
     \end{pmatrix}
     \begin{pmatrix}
     \hat \alpha_\Phi^{(t)} \\
     \hat \alpha_\Psi^{(t)}
     \end{pmatrix},
     \end{align}
     where
     \begin{align}
    \begin{pmatrix}
     I_{d_\Phi} & \beta_\Phi^\star\\
     0 & \beta_y^\star
     \end{pmatrix}
     \begin{pmatrix}
     \hat \alpha_\Phi^{(t)} \\
     \hat \alpha_\Psi^{(t)}
     \end{pmatrix}   
     & \xrightarrow{t \to \infty} 
      \begin{pmatrix}
     I_{d_\Phi} & \beta_\Phi^\star\\
     0 & \beta_y^\star
     \end{pmatrix}
     \begin{pmatrix}
     \Sigma_{\Phi\Phi}^{-1}\Sigma_{\Phi y} \\
     0
     \end{pmatrix} \\
     &= 
     \begin{pmatrix}
     \Sigma_{\Phi\Phi}^{-1}\Sigma_{\Phi y} \\
     0
     \end{pmatrix}.
    \end{align}
    Define $\hat {\tilde f}^{(t)} = (\hat{\tilde \alpha}_\Phi^{(t)})^T \Phi + (\hat{\tilde \alpha}_y^{(t)})^T Y$, where
    \begin{align}
        \begin{pmatrix}
         \hat{\tilde\alpha}_\Phi^{(t+1)} \\
         \hat{\tilde\alpha}_y^{(t+1)}
         \end{pmatrix} 
         =
          \begin{pmatrix}
     I_{d_\Phi} & \beta_\Phi^\star\\
     0 & \beta_y^\star
     \end{pmatrix}
     \begin{pmatrix}
     \hat \alpha_\Phi^{(t)} \\
     \hat \alpha_\Psi^{(t)}
     \end{pmatrix}.
    \end{align}
    Thus, $\hat {\tilde f}^{(t)} \rightarrow \mathbb E[Y|\Phi]$.  Since $\tilde f^{(t)} = K^{(t)} \hat {\tilde f}^{(t)}$, we have $\tilde f^{(t)} = \mathbb E[Y|\hat {\tilde f}^{(t)}] \rightarrow \mathbb E[Y|\Phi]$.

    Subsequently, $f^{(t)} = \mathbb E[\tilde  f^{(t)} | \Phi, \Psi] \rightarrow \mathbb E[Y|\Phi]$.

\end{proof}


\section{Limitations}
\label{sec:app_limit}
Both our theory and algorithm focuses on the bounded regression setting. The definition of extended multicalibration does not depend on the risk function. However, the analysis of the maximal grouping function class as a linear space assumes a continuous probability distribution of observations, implying a continuous target domain. The convergence of MC-Pseudolabel is also established in a regression setting. All experiments are performed on regression tasks. As most algorithms for out-of-distribution generalization are set up with classification problems, we fill the gap for regression and leave an extension to general risk functions for future work.

\end{document}